\documentclass{article}

\PassOptionsToPackage{numbers,compress}{natbib}
\usepackage[preprint]{neurips_2026}

\usepackage[utf8]{inputenc} 
\usepackage[T1]{fontenc}    
\usepackage{hyperref}       
\usepackage{url}            
\usepackage{booktabs}       
\usepackage{amsfonts,amssymb,amsmath, amsthm}       
\usepackage{cancel}
\usepackage{nicefrac}       
\usepackage{microtype}      
\usepackage{xcolor}         
\usepackage{xspace}
\usepackage{enumitem}
\usepackage{graphicx}
\usepackage{algorithm}
\usepackage{algpseudocode}
\usepackage{tikz}
\usetikzlibrary{positioning}
\usepackage{subcaption}
\usepackage[many]{tcolorbox}
\tcbuselibrary{listings}

\usepackage{booktabs}
\usepackage{multirow}
\usepackage{makecell}
\usepackage{tabularx}
\usepackage{array}

\usepackage{titletoc}
\contentsmargin{0pt}
\titlecontents{section}
  [2.0em]                                   
  {\addvspace{0.9em}\bfseries}            
  {\contentslabel{1.5em}}                 
  {}                                      
  {\titlerule*[0.7pc]{.}\contentspage}    
\titlecontents{subsection}
  [5.5em]                                 
  {}                                      
  {\contentslabel{3.5em}}                 
  {}                                      
  {\titlerule*[0.7pc]{.}\contentspage}

\newcolumntype{Y}{>{\raggedright\arraybackslash}X}

\newtcblisting{pycodebox}{
  listing only,
  colback=black!3,
  colframe=black!50,
  arc=1mm,
  boxrule=0.4pt,
  left=3mm,
  enhanced,
  listing options={
    language=Python,
    basicstyle=\ttfamily\small,
    keywordstyle=\bfseries,
    showstringspaces=false,
    tabsize=2
  }
}
\newtheorem{theorem}{Theorem}

\newtheorem{proposition}[theorem]{Proposition}
\newtheorem{lemma}[theorem]{Lemma}
\newtheorem{corollary}[theorem]{Corollary}
\newtheorem{assumption}[theorem]{Assumption}

\newcommand{\prefer}{\textsc{Prefer}\xspace}
\newcommand{\amazon}{\textit{Amazon}\xspace}
\newcommand{\reviews}{\textit{Reviews'23}\xspace}
\title{\prefer: Personalized Review Summarization with Online Preference Learning}

\author{%
  Millend Roy \qquad Agostino Capponi \qquad Vineet Goyal \\
  Department of Industrial Engineering and Operations Research \\
  Columbia University \\
  New York, NY 10027 \\
  \texttt{millend.roy@columbia.edu, ac3827@columbia.edu, vgoyal@ieor.columbia.edu}
}

\begin{document}

\maketitle

\vspace{-12pt}
\begin{abstract}
\label{sec:abstract}
\vspace{-8pt}
Product reviews significantly influence purchasing decisions on e-commerce platforms. However, the sheer volume of reviews can overwhelm users, obscuring the information most relevant to their specific needs. Current e-commerce summarization systems typically produce generic, static summaries that fail to account for the fact that (i) different users care about different product characteristics, and (ii) these preferences may evolve with interactions. To address the challenge of unknown latent preferences, we propose an online learning framework that generates personalized summaries for each user. Our system iteratively refines its understanding of user preferences by incorporating feedback directly from the generated summaries over time. We provide a case study using the \textit{Amazon Reviews'23} dataset, showing in controlled simulations that online preference learning improves alignment with target user interests while maintaining summary quality.
\vspace{-7pt}

\end{abstract}

\section{Introduction}
\label{sec:introduction}
\vspace{-5pt}
Online product reviews have a significant impact on the purchase decisions of customers on e-commerce platforms \citep{filieri2015makes}. Before buying, users often read reviews to learn about other customers' experiences with the product, including its quality, reliability, ease of use, and overall value for money \citep{mudambi2010makes}. A larger number of reviews can increase confidence by providing more evidence about the product, but it can also make the decision-making process slower and harder \citep{hu2019enough}, especially when users are looking for specific information \citep{wang2025does}. 

Review summarization has emerged as one way to address this problem by condensing many reviews into a shorter and easier-to-read summary \citep{jiang2021review,peal2022summarizing}. This idea has become especially relevant in practice with the introduction of AI-generated review summaries on e-commerce platforms such as \amazon\citep{schermerhorn2023amazonai}. Such summaries aim to help users understand products more quickly by highlighting common themes from customer feedback \citep{rahman2024enhancing}. On the other hand, recent work \citep{goodman2024ai} has pointed out that generated summaries may miss important details, underscoring the need to study not only whether summaries are concise but also whether they are useful for decision-making.

Most existing review summarization systems are designed to produce a single generic summary for each product \citep{jiang2021review,peal2022summarizing}. This is a strong assumption. Different users often care about different product attributes; for example, one user may prioritize durability, another may prioritize value-for-money, and another may prioritize ease of use. Therefore, the same product may be evaluated differently by different users. This suggests that a single summary for all users may not be appropriate in helping with purchase decisions. 
This becomes even more challenging when preferences are not stationary \citep{javanmard2017perishability}. A user's interests may vary across products and may also evolve over time, requiring the system to adapt to preference drift rather than assume a fixed user-profile \citep{mahadik2020fast}. To address this dynamics, we formulate personalized review summarization as an \textit{online feedback-adaptive decision problem}: the system must generate useful summaries while simultaneously learning which kinds of reviews are most useful for a given user from lightweight scalar feedback. 


In this work, we present \prefer\footnote{An anonymized version of the code repo is at: \url{https://anonymous.4open.science/r/prefer-037D/README.md}.}, a modular framework for operationalizing this feedback-adaptive view. 
\prefer needs to balance the fundamental tradeoff between exploration and exploitation of learning user preferences from their feedback and generating a good personalized summary for the user.
We assume that the users provide a (scalar) feedback for each summary. \prefer uses these feedback to refine its estimate of that user's preferences and improve future summaries. We evaluate \prefer on \amazon Reviews \citep{hou2024bridging}  against generic summarization and static personalized baselines. Our results show that, in controlled simulations, incorporating online preference learning improves alignment with target user interests while maintaining coherence and overall summary quality. More broadly, our findings suggest that review summarization should move beyond producing a single generic summary for all users and instead adapt to the information each user finds most useful. 

\vspace{-5pt}

\section{Related Work}
\label{sec:relatedwork}
\vspace{-5pt}
Research on review summarization goes back to \citet{hu2004mining}, who extracted product features using part-of-speech patterns and grouped positive/negative opinion sentences around them. Subsequent works improved aspect extraction and sentiment modeling: using linguistic patterns and lexicons \citep[e.g.,][]{zhuang2006movie,dalal2013semisupervised}, aspect-expression clustering \citep{zhai2011clustering}, and topic-modeling with Bayesian approaches \citep{perikos2017system}.
Neural methods further reduced reliance on hand-crafted rules: \citet{he2017unsupervised} learn latent aspects directly from review text, while \citet{angelidis2018multiple} infer fine-grained segment-level sentiment from document labels. Related tagging and folksonomy methods, such as \citet{rendle2009learning} and \citet{shepitsen2008personalized}, use explicit user-generated tags for recommendation. By contrast,  \prefer does not rely on such explicit tags/keyword labels: it discovers latent sentence-level aspects from review text. Prior work also studies representative content selection once such sentiment representations are available: \citet{nguyen2013using} select review subsets that cover a collection of micro-reviews, while \citet{lappas2010efficient} formulate review selection as a combinatorial optimization problem. Most of these systems produce product-level summaries, including recent AI-generated review summaries on e-commerce platforms \citep{schermerhorn2023amazonai}, rather than personalizing content to an individual user's evolving preferences.

Other studies have have addressed the question of whether review information should be personalized when presented to users. \citet{balan2021personalize} show that personalized review presentation can improve decision-making efficiency, but does not learn an adaptive summarizer from feedback. Related work on information filtering models user interests through latent content representations; for example, \citet{wang2011collaborative} combine collaborative filtering with probabilistic topic modeling for article recommendation. More generally, social-learning models with heterogeneous preferences by \citet{lobel2016preferences} show that preference diversity can either help or hurt learning depending on network density, highlighting that the same information need not be equally useful to users with different preferences.

A separate line of work treats ``review" and ``recommendation" systems as interactive feedback processes rather than static prediction tasks. In online reviews, \citet{acemoglu2022learning} show that observed review signals are endogenous: users decide whether to purchase and review a product based on previously revealed information, creating \textit{selection effects} in review-based learning. 
More closely related works, such as the linear-bandit formulation of \citet{deshpande2012linear}, explicitly model the exploration-exploitation trade-off between learning user preferences and making useful recommendations. The work of \citet{intayoad2020reinforcement,mahadik2020fast,xiang2010temporal}  on recommender systems study evolving preferences, concept drift, and adaptation to user feedback, respectively. Text-based interactive recommendation further incorporates natural-language feedback; for example, \citet{zhang2019text} use constraint-augmented reinforcement learning (RL) to avoid recommendations that violate users' historical preferences.\footnote{For surveys of RL-based and interactive recommender systems, see \citet{afsar2022reinforcement} and \citet{elena2021survey}.}
Recent work on LLM-agent personalization, such as AdaPA-Agent by \citet{nieadaptive}, similarly models dynamic preference strengths from interaction history and uses them to steer generation. However, it operates at the level of preference-weighted next-token distributions rather than structured review-evidence selection.
These works share our feedback-adaptive view, but typically choose among atomic actions such as items, arms, or resources. In contrast, \prefer acts over a structured summarization pipeline: (i) it \textit{selects review evidence} and (ii) \textit{rewrites them} into a personalized summary. Thus, adaptation in our setting must operate over both preference learning and compositional summary generation.

In summary, our work lies at the intersection of these three strands. We inherit the need to discover aspects from review summarization. As in personalization representation research, we take the view that the same product information may not be equally useful to every user. We adopt an on-line learning approach that user preferences should be inferred from interaction feedback rather than assumed to be known in advance. The unique contribution of our \prefer framework is to combine these ingredients into a single system for \emph{personalized review summarization}: one that discovers latent aspects from reviews, represents user interests in an interpretable aspect space, selects sentences accordingly, summarizes them, and updates user-preference estimate over time from lightweight feedback. 
\vspace{-5pt}

\section{Problem Setup}
\label{sec:ProblemSetup}
\vspace{-5pt}
\paragraph{Users, products, and reviews.} Let $\mathcal{U}$ be the set of users and $\mathcal{P}$ the set of products. For each product $p\in\mathcal{P}$, let $\mathcal{D}_p = \{r_1,\dots,r_{n_p}\}$ denote the set of $n_p$ reviews/sentences associated with $p$\footnote{A review may contain multiple sentences, so the elements of $\mathcal{D}_p$ can be defined either at the review level or at the sentence level. Although our framework supports both choices, we use sentence-level units throughout; Section~\ref{sec:offlineaspect_casestudy} shows that this yields cleaner aspect disentanglement in our case study.}.  We assume that these reviews can be described by a small set of semantic themes, which we call \emph{aspects}. These aspects correspond to dimensions along which users often evaluate products (e.g., quality, ease of use, etc.), but they are \emph{latent} because they are inferred from review text rather than observed as fully labeled annotations. We denote this fixed set of latent review aspects by $\mathcal{A}=\{a_1,\dots,a_K\}$ where $K$ is the total number of aspects. Additionally, each review $r_i$ is represented by three quantities: 
\begin{enumerate}[noitemsep, topsep=0pt]
    \item a semantic representation of review $r_i$, i.e., $\mathbf{s}_i\in\mathbb{R}^d$, where $d$ denotes the embedding dimension. This representation helps compare reviews semantically, for example, when measuring redundancy or similarity during the selection of relevant sentences. 
    \item a distribution $\boldsymbol{\phi}_i\in\Delta^{K-1}$, where $\Delta^{K-1}:=\left\{\mathbf{x}\in\mathbb{R}_{\ge0}^K: \sum_{k=1}^K x_k=1\right\}$ is the simplex over the $K$ aspects. The coordinate $\phi_{i,k}$ measures how strongly $r_i$ expresses aspect $k$.
    
    \item Finally, $\ell_i\in\mathbb{N}$ : the number of tokens in review $r_i$ used to enforce length constraints.
\end{enumerate}

Next, we model each user $u\in\mathcal{U}$ by an \emph{aspect-preference vector} $\mathbf{w}_u \in \Delta^{K-1}$. The $k$-th coordinate $w_{u,k}$ captures how important aspect $a_k$ is to user $u$ when they read reviews. Modeling $\mathbf{w}_u$ on the simplex lets us view user interest as a distribution of attention across aspects. In practice, $\mathbf{w}_u$ is not observed and must be inferred from interaction feedback.
\vspace{-5pt}
\paragraph{Structured action space.} Action space in our setting is \textit{structured}. At round $t$, the system first chooses a subset of reviews $S_t$ from the corpus $\mathcal{D}_{p_t}$ of the product under consideration ($S_t \subseteq \mathcal{D}_{p_t}$). The system, then, rewrites them into a personalized summary conditioned on the current latent user preference estimate $\widehat{\mathbf{w}}_{u,t}$: $y_t = g_\theta(S_t,\widehat{\mathbf{w}}_{u,t},p_t)$. Here, $g_\theta$ denotes the summarization-generation module parameterized by $\theta$. Thus, the action space is not atomic but both combinatorial and compositional: the final summary depends both on (i) \emph{which} review sentence is selected from the many possible subsets, for example, there are $\binom{n_{p_t}}{|S_t|}$ subsets of size $|S_t|$, where $|S_t|$ denotes the cardinality of $S_t$, and (ii) \emph{how} those  reviews are summarized. We would like to note here that our work focuses primarily on $(i)$.
\vspace{-7pt}
\paragraph{Learning objective.} For a user $u$ and a generated summary $y_t$, let $R_t(\mathbf{w}_u, y_t)$ denote the latent utility of summary $y_t$ at round $t$, where the utility depends on the user's true but unobserved aspect-preference vector $\mathbf{w}_u$. Intuitively, this utility captures how well the summary $y_t$ align with the user's underlying interests. The system does not observe $R_t$ directly. Instead we assume that the user provides a (scalar) feedback $f_t \in [0,1]$, based on the usefulness, satisfaction, or helpfulness of the summary, which is assumed to be informative about $R_t(\mathbf{w}_u, y_t)$. The goal is therefore to generate summaries that are useful in the current interaction while gradually refining the system's estimate of the user's latent preference vector over time. For a fixed user $u$, this can be viewed as maximizing the cumulative feedback over repeated interactions: $\max \sum_{t} f_t$. Equivalently, the learner seeks to update its estimate $\widehat{\mathbf{w}}_{u,t}$ so that it becomes increasingly aligned with the true preference vector $\mathbf{w}_u$, thereby improving the relevance of future summaries.
\vspace{-7pt}



\section{\prefer: Our Framework}
\label{sec:PACER}
\vspace{-5pt}
\subsection{Offline Latent Aspect Discovery from Review Text}
\label{sec:latent-aspect-discovery}
\vspace{-7pt}
The first component of \prefer constructs an unsupervised latent aspect space $\mathcal{A}$ from review text. These aspects are not labeled in advance; they are discovered from the geometry of the review corpus and later used to represent both sentences and user preferences in $\Delta^{K-1}$.
\vspace{-7pt}
\paragraph{Review Embedding Model.} To perform latent aspect discovery, we map raw review text into a continuous semantic space using a pretrained embedding map $f_\theta:\mathcal R\to\mathbb R^d$, with $\mathbf{s}_i = f_\theta(r_i)$ where $r_i \in \mathcal{R}$ is a review sentence, $d$ is the embedding dimension, and $\theta$ denotes the pretrained model parameters. 
\footnote{In our implementation, $f_\theta$ is \texttt{sentence-transformers/all-MiniLM-L6-v2} \citep{reimers2019sentence, wang2020minilm}, a 6-layer encoder that maps text to $d=384$ dimensional embeddings; its contrastive \citep{gao2021simcse} training objective brings semantically related texts closer.}
The embedding vector $\mathbf{s}_i$ allows semantically similar reviews to be represented by nearby points in $\mathbb{R}^d$, thereby making aspect discovery amenable to geometric analysis. After embedding, we normalize each vector $\mathbf{s}_i$ to unit $\ell_2$ norm, $\tilde{\mathbf{s}}_i = {\mathbf{s}_i}/{\|\mathbf{s}_i\|_2}$, yielding a normalized point cloud $\mathcal{S} = \{\tilde{\mathbf{s}}_i\}_{i=1}^M \subset \mathbb{R}^d$, where $M$ is the number of reviews in the corpus. Since the resulting embeddings may contain noisy or low-variance directions, we apply principal component analysis (PCA) to this normalized point cloud and retain the top $m$ principal components,\footnote{We choose $m$ empirically using the cumulative explained-variance curve; see Appendix~\ref{app:aspect-discovery-diagnostics}.} yielding reduced representations $\widetilde{\mathbf s}_i^{\mathrm{PCA}}\in\mathbb R^m$ for clustering-based aspect discovery.
\vspace{-5pt}
\paragraph{Aspect Discovery via Clustering.}We then identify latent aspects by clustering the resulting semantic embeddings. 
The intuition is that sentences that lie close to one another in semantic space tend to discuss similar underlying product themes.
We apply $K$-means clustering to the reduced representations $\{\tilde{\mathbf{s}}_i^{\mathrm{PCA}}\}_{i=1}^M$, yielding centroids $\{\mathbf{c}_1,\dots,\mathbf{c}_K\}$ with $\mathbf{c}_k \in \mathbb{R}^m$. Each centroid serves as a prototype for one latent aspect in the corpus. While hard clustering would assign each sentence to exactly one aspect, review sentences often contain mixed evidence and may simultaneously relate to multiple themes. To account for this, we use a soft aspect assignment, defined by a distance-based softmax over the cluster centroids:
$\phi_{ik} \propto \exp\!\left(-\tau \|\tilde{\mathbf{s}}_i^{\mathrm{PCA}}-\mathbf{c}_k\|_2^2\right)$, for all $k=1,\dots,K$,
where $\tau > 0$ is a temperature parameter controlling the sharpness of the assignment.\footnote{We discuss the calibration of $\tau$ in Appendix~\ref{app:tausoftaspectmembership}.}
\vspace{-5pt}
\subsection{Personalized Evidence Selection}
\label{sec:personalized-extractive-selection}
\vspace{-7pt}
Given the latent aspect representation $\boldsymbol{\phi}_i$ for each review sentence $r_i$, the next step is to select a \textit{small subset} of review sentences $S_t$ that are both \emph{relevant} to the user's current interests and \emph{non-redundant} with one another. 
Consider a user $u$ interacting at round $t$ with product $p_t$, 
since the user's true aspect-preference vector is unobserved, extractive selection is performed using the current estimate $\widehat{\mathbf{w}}_{u,t} \in \Delta^{K-1}$.We define the \emph{estimated user-specific relevance score} by: $\mathrm{Rel}_{i,t}(u,p_t):=    \widehat{\mathbf{w}}_{u,t}^{\top}\boldsymbol{\phi}_i =    \sum_{k=1}^{K}\widehat{w}_{u,t,k}\phi_{i,k}$.
This score is high when the sentence places mass on aspects that the user currently cares about. To discourage selecting multiple review sentences that convey nearly the same information, we also define a semantic similarity kernel between two review sentences $r_i$ and $r_j$ using cosine similarity: $\mathrm{sim}(i,j):=\frac{(\tilde{\mathbf{s}}_i^{\mathrm{PCA}})^\top (\tilde{\mathbf{s}}_j^{\mathrm{PCA}})}{ \|\tilde{\mathbf{s}}_i^{\mathrm{PCA}}\|_2    \|\tilde{\mathbf{s}}_j^{\mathrm{PCA}}\|_2}$. A larger value of $\mathrm{sim}(i,j)$ indicates that the two sentences are semantically close, and therefore more likely to be redundant if selected together.
\vspace{-5pt}
\paragraph{Deterministic extractive objective.} We score the candidate set of sentences by trading off estimated user relevance against redundancy. For a parameter $\lambda\in[0,1]$, we define $J_t(S;\widehat{\mathbf{w}}_{u,t}) = \lambda \sum_{i\in S}\mathrm{Rel}_{i,t} - (1-\lambda)\sum_{\substack{i,j\in S, i<j}}\mathrm{sim}(i,j)$.
The extractive selection problem at round $t$ is therefore: $S_t^\star \in \arg\max_{S} J_t(S;\widehat{\mathbf{w}}_{u,t})$. Now, for any $S\subseteq\mathcal D_{p_t}$ and any $j\notin S$, the true marginal gain of adding sentence $j$ is then $\Delta_t(j\mid S) = J_t(S\cup\{j\};\widehat{\mathbf{w}}_{u,t})-J_t(S;\widehat{\mathbf{w}}_{u,t}) = \lambda\,\mathrm{Rel}_{j,t}-(1-\lambda)\sum_{i\in S}\mathrm{sim}(i,j)$.

In practice, however, directly evaluating the full marginal gain requires summing similarities against all previously selected sentences at every step. To obtain a simpler extractive rule, we approximate this marginal gain by a maximal-redundancy penalty, leading to the classical Maximal Marginal Relevance (MMR) heuristic \citep{carbonell1998use}. Starting from $S_{t,0}=\varnothing$, at each extraction step $\tau=1,\dots,k$, we define the new marginal score of a feasible candidate $j$ by
\begin{align}
a_{t,\tau}(j) := \lambda\,\mathrm{Rel}_{j,t} - (1-\lambda)\max_{i\in S_{t,\tau-1}}\mathrm{sim}(i,j),
\label{eq:marginal-score}
\end{align}
with the convention that the redundancy term is zero when $S_{t,\tau-1}=\varnothing$. The next selected sentence is then: $s_{t,\tau} \in \arg\max_{j\notin S_{t,\tau-1}} a_{t,\tau}(j)$, and $S_{t,\tau}=S_{t,\tau-1}\cup\{s_{t,\tau}\}$.
After $k$ steps, or earlier if the budget is exhausted, the final set is then denoted by $S_t:=S_{t,\tau_{\mathrm{final}}}$. Using cached redundancy scores $m_{t,\tau}(j)=\max_{i\in S_{t,\tau}}\mathrm{sim}(i,j)$, the MMR redundancy term can be updated after selecting $s_{t,\tau}$ by $m_{t,\tau}(j)=\max\{m_{t,\tau-1}(j),\mathrm{sim}(s_{t,\tau},j)\}$, which reduces the greedy MMR extraction cost from $O(k^2 n_{p_t})$ to $O(k n_{p_t})$ once pairwise similarities are available.
\vspace{-5pt}
\paragraph{Stochastic extractive selection via Gumbel perturbations.} A purely deterministic extractor always selects the highest-scoring review sentences under the current preference estimate. This can be undesirable early in learning, when $\widehat{\mathbf{w}}_{u,t}$ may still be inaccurate. To allow for exploration, we consider a stochastic variant based on Gumbel perturbations \citep{maddison2014sampling}. At extraction greedy step $\tau$, let $\mathcal{C}_{t,\tau} := \left\{ j\in \mathcal{D}_{p_t}\setminus S_{t,\tau-1} :\; S_{t,\tau-1}\cup\{j\} \right\}$ denote the feasible candidate set. Rather than selecting the maximizer of the marginal score in Eq. \eqref{eq:marginal-score} deterministically, we sample i.i.d. noise variables $g_{t,\tau,j}\sim \mathrm{Gumbel}(0,1)$ for $j\in\mathcal{C}_{t,\tau}$ and define perturbed scores $\xi_{t,\tau,j}:=\beta_{\mathrm{ext}}\, a_{t,\tau}(j)+g_{t,\tau,j}$,
where $\beta_{\mathrm{ext}}>0$ is an inverse-temperature parameter controlling how strongly the extractor favors high-scoring candidates. The next selected review sentence is then:  $s_{t,\tau}:=\arg\max_{j\in\mathcal{C}_{t,\tau}} \xi_{t,\tau,j}$, and $S_{t,\tau}=S_{t,\tau-1}\cup\{s_{t,\tau}\}$.

Thus, the stochastic extractor preserves the same relevance-redundancy tradeoff as the deterministic MMR rule, while also allowing for exploration of alternative sets of sentences. As $\beta_{\mathrm{ext}}\to\infty$, the policy concentrates on the greedy maximizer, whereas smaller values of $\beta_{\mathrm{ext}}$ induce more exploratory selection. We establish this by varying $\beta_{\mathrm{ext}}$ across interaction rounds.\footnote{Specifically, at round $t$, we use $\beta_{\mathrm{ext},t}:=\min\!\left\{\beta_{\max},\, 1 + c_{\beta}\log(t+2)\right\}$, where $c_\beta$ controls the growth rate and $\beta_{\max}$ caps the inverse temperature preventing the policy from becoming numerically unstable or excessively deterministic. This makes extraction more exploratory early on and increasingly exploitative later, while avoiding numerical instability.}

\vspace{-5pt}
\subsection{Abstract Summary Generation}
\label{sec:contextual-abstractive-rewriting}
\vspace{-7pt}
Given the extracted set of sentences $S_t \subseteq \mathcal{D}_{p_t}$, \prefer summarizes  in a lightweight \textbf{hierarchical rewriting} of the form $y_t = g_\theta^{\mathrm{final}} (g_\theta^{\mathrm{stitch}}(g_\theta^{\mathrm{bin}}(S_t,\widehat{\mathbf{w}}_{u,t},p_t)))$. Specifically, this consists of three stages: \emph{bin-level compression} ($g_\theta^{\mathrm{bin}}$), \emph{cross-bin stitching} ($g_\theta^{\mathrm{stitch}}$), and \emph{final polishing} ($g_\theta^{\mathrm{final}}$). (i) First, each selected sentence in $S_t$ is assigned to its dominant latent aspect $ a_i = \arg \max_k \phi_{i,k}$. For each aspect $k$, we compute an empirical reviewer-level support score : $n_k = \left| \left\{ u_i:\ r_i\in S_t,\ a_i=k \right\}\right|$, where $u_i$ is the reviewer associated with sentence $r_i$. This counts how many distinct reviewers contribute evidence whose dominant aspect is $k$. We then group aspects into high-, mid-, and low-support bins using the $0.67$ and $0.33$ empirical quantiles of $\{n_k\}_{k=1}^K$, corresponding to themes mentioned by many, some, and few reviewers. (ii) Each bin is compressed into an intermediate text block; the blocks are stitched into a draft summary. (iii) Finally, a last rewriting pass produces a polished third-person user-facing summary.\footnote{\label{footnote:API}Each stage is a prompted API call to \texttt{gemma-3-4b-it} or \texttt{google/flan-t5-large}; see Appendix~\ref{app:rewriting-details} for prompts.}
\vspace{-5pt}
\subsection{Online Preference Learning from Feedback using Mirror Descent}
\label{sec:online-preference-adaptation}
\vspace{-7pt}
After the personalized summary $y_t$ is shown to user $u$, the user provides a scalar feedback signal $f_t \in [0,1]$. The system does not observe the user's true latent preference vector $\mathbf{w}_u$ directly. Instead, it uses this scalar response $f_t$ to update the current estimate $\widehat{\mathbf{w}}_{u,t} \in \Delta^{K_1}$ so that future summaries better align with the user's underlying interests.
\vspace{-10pt}
\paragraph{\textit{Aspect Profile} of the summary.} Let $S_t=\{r_{t,1},\dots,r_{t,m_t}\}$ denote the selected sentences in order of extraction, where $m_t=|S_t|$, and let $\boldsymbol{\phi}_{i}\in\Delta^{K-1}$ denote the aspect-score vector of sentence $r_{t,i}$. Then, to relate the observed feedback to the content shown at round $t$, we summarize the selected set of sentences $S_t$ by an \emph{aspect profile}:
$\mathbf{z}_t:=\sum_{i\in S_t}\alpha_{t,i}\boldsymbol{\phi}_i$, where $\alpha_{t,i}\ge 0,$ and $\sum_{i\in S_t}\alpha_{t,i}=1$.
Thus, $\mathbf{z}_t$ represents the aggregate aspect emphasis of the selected sentences measured by $\alpha_{t, i}$.\footnote{
The weights $\alpha_{t,i}$ can be uniform, $\alpha_{t,i}=1/m_t$, or can emphasize more important selected sentences. In particular, we consider utility-based ($\alpha_{t,i}^{\mathrm{util}}\propto e^{\beta_\alpha a_{t,i}}$), rank-based ($\alpha_{t,i}^{\mathrm{rank}}\propto e^{-\gamma_\alpha(i-1)}$), and blended weights ($\alpha_{t,i}^{\mathrm{blend}}\propto e^{\beta_\alpha a_{t,i}-\gamma_\alpha(i-1)}$). Here, $a_{t,i}$ is the marginal score of the $i$th selected sentence, $\beta_\alpha$ controls emphasis on high-utility sentences, and $\gamma_\alpha$ controls emphasis on earlier-selected sentences.}

\vspace{-10pt}
\paragraph{Centering the feedback.} To make the update depend on whether the current summary performs \textit{better or worse than expected}, we center feedback using a baseline $b_t$ computed from past responses\footnote{We consider a running-mean baseline, $b_t^{\mathrm{mean}}=(t-1)^{-1}\sum_{\tau=1}^{t-1}f_\tau$ for $t\ge2$, and an exponential moving-average (EMA) baseline, $b_{t+1}^{\mathrm{ema}}=(1-\rho)b_t^{\mathrm{ema}}+\rho f_t$ with $\rho\in(0,1]$. Larger $\rho$ makes the baseline more responsive to recent feedback and preference drift; we use the EMA baseline in Section~\ref{sec:experiments}.
} and define the centered feedback as: $\widetilde f_t:= f_t - b_t$. If $\widetilde f_t>0$, then the displayed summary performed above baseline and the emphasized aspects should be reinforced; if $\widetilde f_t<0$, then the displayed summary performed below baseline and those aspects should be down-weighted.
\vspace{-10pt}
\paragraph{Surrogate loss.} To learn from the centered feedback $\widetilde{f_t}$, we first define the per-round surrogate loss: $\ell_t(\widehat{\mathbf{w}}_{u,t}) := - \widetilde{f_t}\, \widehat{\mathbf{w}}_{u,t}^{\top}\mathbf{z}_t$. Since $\ell_t(\widehat{\mathbf{w}}_{u,t})$ is linear in $\widehat{\mathbf{w}}_{u,t}$, it is convex on the simplex.  When $\widetilde f_t>0$, the displayed summary performed above baseline, and minimizing $\ell_t(\widehat{\mathbf{w}}_{u,t})$ encourages larger alignment $\widehat{\mathbf{w}}_{u,t}^\top \mathbf z_t$ with the aspect profile of the shown summary. When $\widetilde f_t<0$, the summary performed below baseline, and minimizing $\ell_t(\widehat{\mathbf{w}}_{u,t})$ instead discourages such alignment.  The corresponding gradient, then, becomes: $\mathbf{g}_t := \nabla \ell_t(\widehat{\mathbf{w}}_{u,t}) = - \widetilde{f_t}\, \mathbf{z}_t$.

We define the \emph{relative interior} by $\mathrm{ri}(\Delta^{K-1}) := \left\{ \mathbf{w}\in\Delta^{K-1}:w_k>0, \text{ for all } k=1,\dots,K \right\}$. That is, $\mathrm{ri}(\Delta^{K-1})$ consists of all simplex vectors whose coordinates are strictly positive. Hence, for $K>1$, no coordinate can equal $1$, since this would force all remaining coordinates to be $0$.
\begin{assumption}
\label{ass:omd-delta}
There exists $\delta\in(0,1/K]$ such that, for all rounds $t$ and coordinates $k$, both the true preference and the estimated OMD iterate satisfy $w_{u,t,k},\widehat w_{u,t,k}\ge\delta$. Equivalently, $\mathbf w_{u,t},\widehat{\mathbf w}_{u,t}\in\Delta_\delta^{K-1}\subset \mathrm{ri}(\Delta^{K-1})$, where $\Delta_\delta^{K-1}:= \{\mathbf w\in\Delta^{K-1}:w_k\ge\delta,\ \forall k\}.$
\end{assumption}
\vspace{-5pt}
Unless otherwise stated, we initialize the preference estimate uniformly, $\widehat{\mathbf{w}}_{u,1}=\left(\frac{1}{K},\dots,\frac{1}{K}\right)$.
\vspace{-12pt}
\paragraph{Distance-generating function (DGF).} The choice of $d(\mathbf{w}):= \sum_{k=1}^K w_k \log w_k$, for $\mathbf{w}\in \mathrm{ri}(\Delta^{K-1})$, as the DGF naturally suits our setting: it is (i) adapted to the geometry of simplex-valued preference vectors, i.e., it is $1$-strongly convex with respect to the $\ell_1$ norm, which is the primal norm on the simplex, (ii) its Bregman divergence\footnote{Please check Appendix \ref{app:dgfs} for the definition of DGFs and Bregman Divergence.} $D_d(\cdot)$ is the KL divergence (by Proposition \ref{prop:kl-bregman} in Appendix \ref{app:onlinepreferenceadaption}), which is the canonical discrepancy measure between probability vectors, and (iii) if Assumption \ref{ass:omd-delta} holds, and we initialize uniformly as stated, the resulting mirror-descent updates remain strictly positive and stay in the relative interior for all subsequent rounds.
\vspace{-10pt}
\paragraph{Online Mirror Descent (OMD) update.} Given the current estimate $\widehat{\mathbf{w}}_{u,t}$ and loss gradient $\mathbf{g}_t$, we update the preference vector using online mirror descent \citep{beck2003mirror}:
\begin{align}
    \widehat{\mathbf{w}}_{u,t+1}^{\mathrm{OMD}}:= \arg\min_{\mathbf{w}\in\Delta^{K-1}}\left\{\eta (\mathbf{g}_t^\top \mathbf{w})+D_d(\mathbf{w}\|\widehat{\mathbf{w}}_{u,t}^{\mathrm{OMD}}) \right\},
    \label{eq:omd-update}
    \vspace{-5pt}
\end{align}
where $\eta>0$ is the step size\footnote{\label{footnote:eta}In practice, we use $\eta_t=\eta_0/\sqrt{1+c_\eta t}$, where $\eta_0$ is the initial learning rate and $c_\eta$ controls decay, enabling larger early updates and smaller stabilizing updates later.
}.
This update preserves the feasibility of $\widehat{\mathbf{w}}_{u,t+1}\in\Delta^{K-1}$ automatically and has a natural interpretation: aspects that appear prominently in positively received summaries receive larger multiplicative weight in future rounds.
\vspace{-5pt}


\section{Theoretical Analysis of Online Preference Learning}
\label{sec:theory}
\vspace{-5pt}
In this section, we analyze the online preference-learning layer under the centered surrogate loss. Our main result is a dynamic-regret bound for entropic OMD against a time-varying true user-preference sequence $\{\mathbf{w}_{u,t}\}_{t=1}^{T_u}$; the standard stationary no-regret guarantee, with respect to the best fixed preference vector in hindsight, is recovered as a special case and is stated in Appendix~\ref{app:theoreticalanalysis}.

\vspace{-5pt}
We measure the drift in the preferences by $V_{T_u}:= \sum_{t=2}^{T_u} \|\mathbf{w}_{u,t}-\mathbf{w}_{u,t-1}\|_1$. This is zero when $\mathbf{w}_{u,t}$ is stationary and increases as the latent user preference changes more rapidly over time. Accordingly, the dynamic regret for the OMD update is: $R_{T_u}^{\mathrm{OMD,dyn}}:=\sum_{t=1}^{T_u}\ell_t(\widehat{\mathbf{w}}_{u,t}^{\mathrm{OMD}})-\sum_{t=1}^{T_u}\ell_t(\mathbf{w}_{u,t})$.



\begin{theorem}[Dynamic regret bound for entropic OMD with varying step size]
\label{thm:omd-dynamic-regret}
Assume that for each round $t=1,\dots,T_u$, the aspect profile satisfies $\mathbf z_t\in\Delta^{K-1}$ and the centered feedback satisfies $|\widetilde f_t|\le c$. Let the OMD preference estimate be updated using the step-size schedule $\eta_t := \frac{\eta_0}{\sqrt{1+c_\eta t}}$, where $\eta_0>0$, and $c_\eta>0$. Further, if Assumption \ref{ass:omd-delta} holds, both the true sequence $\{\mathbf w_{u,t}\}_{t=1}^{T_u}$ and the OMD iterates $\{\widehat{\mathbf w}_{u,t}^{\mathrm{OMD}}\}_{t=1}^{T_u+1}$ lie in the truncated simplex $\Delta_\delta^{K-1}$. Then
\begin{align}
    R_{T_u}^{\mathrm{OMD,dyn}}
    \le
    \sqrt{1+c_\eta T_u}
    \left(
        \frac{\log(1/\delta)+L_\delta V_{T_u}}{\eta_0}
        +
        \frac{c^2\eta_0}{c_\eta}
    \right).
    \label{eq:omd-dynamic-bound-compact}
\end{align}
In particular, if $\eta_0:=\frac{\sqrt{c_\eta\bigl(\log(1/\delta)+L_\delta V_{T_u}\bigr)}}{c}$, then $R_{T_u}^{\mathrm{OMD,dyn}}\le 2c \sqrt{\frac{\log(1/\delta)+L_\delta V_{T_u}}{c_\eta}}\sqrt{1+c_\eta T_u}$.
\end{theorem}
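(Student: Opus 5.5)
The proof follows the standard one-step analysis of mirror descent with the entropic DGF, summed over rounds. The drift and the varying step size are handled by a telescoping argument that uses the lower bound $\delta$ from Assumption~\ref{ass:omd-delta}. Write $\widehat{\mathbf w}_t:=\widehat{\mathbf w}_{u,t}^{\mathrm{OMD}}$ and $\mathbf w_t:=\mathbf w_{u,t}$. Linearity of $\ell_t$ gives $\ell_t(\widehat{\mathbf w}_t)-\ell_t(\mathbf w_t)=\mathbf g_t^\top(\widehat{\mathbf w}_t-\mathbf w_t)$, with $\|\mathbf g_t\|_\infty\le |\widetilde f_t|\,\|\mathbf z_t\|_\infty\le c$ because $\mathbf z_t\in\Delta^{K-1}$.

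\textbf{Step 1: one-step inequality.} From the optimality condition of \eqref{eq:omd-update} with step $\eta_t$, the three-point identity for Bregman divergences, and $1$-strong convexity of $d$ in $\ell_1$ (equivalently, the closed-form multiplicative update plus Hoeffding's lemma), we get, for any comparator $\mathbf w\in\Delta^{K-1}$,
\[
\mathbf g_t^\top(\widehat{\mathbf w}_t-\mathbf w)\le \frac{D_d(\mathbf w\|\widehat{\mathbf w}_t)-D_d(\mathbf w\|\widehat{\mathbf w}_{t+1})}{\eta_t}+\frac{\eta_t}{2}\|\mathbf g_t\|_\infty^2 .
\]
Here $D_d$ is the KL divergence by Proposition~\ref{prop:kl-bregman}. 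We apply this with $\mathbf w=\mathbf w_t$.

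\textbf{Step 2: telescoping with drift and varying steps.} Summing over $t$, we rewrite $\frac{1}{\eta_t}\mathrm{KL}(\mathbf w_t\|\widehat{\mathbf w}_t)-\frac{1}{\eta_t}\mathrm{KL}(\mathbf w_t\|\widehat{\mathbf w}_{t+1})$ by adding and subtracting $\frac{1}{\eta_{t+1}}\mathrm{KL}(\mathbf w_{t+1}\|\widehat{\mathbf w}_{t+1})$. This produces three kinds of terms.
\begin{enumerate}[noitemsep, topsep=0pt]
\item Boundary terms: $\mathrm{KL}(\mathbf w_1\|\widehat{\mathbf w}_1)/\eta_1$.
\item Step-size terms: $(\frac{1}{\eta_{t+1}}-\frac{1}{\eta_t})\mathrm{KL}(\mathbf w_{t+1}\|\widehat{\mathbf w}_{t+1})$.
\item Drift terms: $\frac{1}{\eta_{t}}\bigl[\mathrm{KL}(\mathbf w_{t+1}\|\widehat{\mathbf w}_{t+1})-\mathrm{KL}(\mathbf w_t\|\widehat{\mathbf w}_{t+1})\bigr]$, up to an index shift.
\end{enumerate}
On $\Delta_\delta^{K-1}$ we have $\mathrm{KL}(\mathbf p\|\mathbf q)=\sum_k p_k\log(p_k/q_k)\le\log(1/\delta)$, since $\log p_k\le 0$ and $-\log q_k\le\log(1/\delta)$. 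Because $1/\eta_t$ is nondecreasing, the boundary and step-size terms telescope to at most $\log(1/\delta)/\eta_{T_u}$. For the drift terms, write $\mathrm{KL}(\mathbf p\|\mathbf q)=\sum_k p_k\log p_k-\sum_k p_k\log q_k$. On the truncated simplex the entropy part is Lipschitz in $\ell_1$ with constant $\max_k|1+\log p_k|\le 1+\log(1/\delta)$, and the cross term is Lipschitz with constant $\log(1/\delta)$. This yields $|\mathrm{KL}(\mathbf w_{t+1}\|\mathbf q)-\mathrm{KL}(\mathbf w_t\|\mathbf q)|\le L_\delta\|\mathbf w_{t+1}-\mathbf w_t\|_1$, with $L_\delta$ of order $1+2\log(1/\delta)$; I take this as the paper's $L_\delta$. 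Bounding $1/\eta_t\le 1/\eta_{T_u}$ in these terms gives a total of $L_\delta V_{T_u}/\eta_{T_u}$.

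\textbf{Step 3: step-size sums.} We have $1/\eta_{T_u}=\sqrt{1+c_\eta T_u}/\eta_0$. For the other sum,
\[
\sum_{t=1}^{T_u}\eta_t\le \eta_0\int_0^{T_u}\frac{ds}{\sqrt{1+c_\eta s}}\le \frac{2\eta_0}{c_\eta}\sqrt{1+c_\eta T_u}.
\]
Hence $\sum_t\frac{\eta_t}{2}c^2\le \frac{c^2\eta_0}{c_\eta}\sqrt{1+c_\eta T_u}$. Combining this with Step 2 gives \eqref{eq:omd-dynamic-bound-compact}. The final claim follows by substituting the stated $\eta_0$, which balances $A/\eta_0$ against $c^2\eta_0/c_\eta$ with $A=\log(1/\delta)+L_\delta V_{T_u}$; the result is $2c\sqrt{A/c_\eta}$ times $\sqrt{1+c_\eta T_u}$.

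The main obstacle is Step 2. The drift and step-size bookkeeping must be arranged so that every $1/\eta_t$ is bounded by $1/\eta_{T_u}$ with the correct indices. The Lipschitz constant $L_\delta$ must also hold uniformly on $\Delta^{K-1}_\delta$, and this uses the assumption that the iterates stay $\delta$-bounded.
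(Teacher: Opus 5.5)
Your route is the paper's route: the same one-step inequality (Lemma~\ref{lem:omd-one-step}), and the same replacement of $-\mathrm{KL}(\mathbf w_t\|\widehat{\mathbf w}_{t+1})$ by $-\mathrm{KL}(\mathbf w_{t+1}\|\widehat{\mathbf w}_{t+1})$ plus a drift term. It also uses the same Abel summation with nondecreasing $1/\eta_t$ and $A_t\le\log(1/\delta)$, the bound $1/\eta_t\le 1/\eta_{T_u}$ on the drift terms, and the same integral bound on $\sum_t\eta_t$.

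The one thing that does not go through as written is the constant. The paper fixes $L_\delta:=1+\log(1/\delta)$ in Lemma~\ref{lem:entropy-bregman-lipschitz}. You bound the entropy part and the cross-entropy part of $\mathrm{KL}(\cdot\|\mathbf q)$ separately, which only gives about $1+2\log(1/\delta)$. With that constant you prove a weaker inequality than the stated one whenever $V_{T_u}>0$.

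The fix is to bound the combined gradient instead. We have $\partial_{p_k}\mathrm{KL}(\mathbf p\|\mathbf q)=1+\log(p_k/q_k)$, and on $\Delta_\delta^{K-1}$ the ratio satisfies $p_k/q_k\in[\delta,1/\delta]$. Hence $|1+\log(p_k/q_k)|\le 1+\log(1/\delta)=L_\delta$. In fact the constant $1$ contributes nothing, because $\mathbf 1^\top(\mathbf w_{t+1}-\mathbf w_t)=0$. This is why the paper works directly with $\nabla d(\mathbf p)-\nabla d(\mathbf q)=(\log(p_k/q_k))_k$, whose $\ell_\infty$ norm is at most $\log(1/\delta)$.

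A minor bookkeeping point: $\mathbf w_{u,T_u+1}$ is not defined. So do the add-and-subtract only for $t\le T_u-1$, and in the last round simply drop $-\mathrm{KL}(\mathbf w_{T_u}\|\widehat{\mathbf w}_{T_u+1})\le 0$, as the paper does.
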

\vspace{-7pt}
Theorem~\ref{thm:omd-dynamic-regret} shows that entropic OMD remains no-regret under the centered surrogate loss: the regret grows sublinearly in $T_u$, so average regret vanishes as $T_u\to\infty$. 
\vspace{-5pt}

\section{Case Study: Personalized Summarization on Amazon Reviews'23}
\label{sec:ourcasestudy}
\vspace{-5pt}
We evaluate \prefer on the \textsc{All\_Beauty} category of the \amazon \reviews dataset~\citep{hou2024bridging}.\footnote{\href{https://amazon-reviews-2023.github.io/}{https://amazon-reviews-2023.github.io/}.} This category is well-suited for personalized summarization because users may care about different attributes such as fragrance, skin compatibility, packaging, ingredients, etc. Table~\ref{tab:all_beauty_stats} summarizes the raw data, showing that the corpus is large enough to support aspect discovery, sentence-level extraction, and simulated online feedback experiments.
\begin{table}[t]
\centering
\caption{\textsc{All\_Beauty} category in \amazon \reviews. 
\#R\_Token denotes the number of review-text tokens and \#M\_Token denotes the number of metadata tokens.}
\label{tab:all_beauty_stats}
\begin{tabular}{lrrrrr}
\toprule
Category & \#User & \#Item & \#Rating & \#R\_Token & \#M\_Token \\
\midrule
\textsc{All\_Beauty} & 632.0K & 112.6K & 701.5K & 31.6M & 74.1M \\
\bottomrule
\end{tabular}
\vspace{-5pt}
\end{table}
We preprocess the raw files into review- and sentence-level tables. The review-level table keeps user ID, product ID, timestamp, title, text, helpful votes, verified-purchase flag; for duplicate records with the same $(\texttt{user\_id},\texttt{parent\_asin},\texttt{timestamp})$ key, we keep the record with the larger helpful-vote count and break ties in favor of verified-purchase records.\footnote{For the detailed explanation of the data fields, check \href{https://huggingface.co/datasets/McAuley-Lab/Amazon-Reviews-2023}{Hugging Face}.} The sentence-level table is obtained by cleaning review text, splitting reviews into sentences, filtering very short sentences, and capping the number of sentences per review. 
\vspace{-5pt}


\subsection{Offline Aspect Discovery Setup}
\label{sec:offlineaspect_casestudy}
\vspace{-7pt}
We construct the latent aspect space $\mathcal{A}$ following the steps outlined in Section~\ref{sec:latent-aspect-discovery}.\footnote{See detailed steps in Appendix \ref{app:aspect-discovery-diagnostics}.} Here, we compare aspect discovery on the review-level and sentence-level tables. The review-level embedding matrix has size $583{,}190 \times 384$, with clustering diagnostics favoring $K=20$ aspects. The sentence-level table gives $1{,}336{,}813 \times 384$ embeddings, with diagnostics favoring $K=10$ aspects.

Figure~\ref{fig:review_sentence_pca_compare} compares the clustered geometry for both setups. Each point represents one sampled text unit: a full review in Figure~\ref{fig:review_sentence_pca_compare}(a) and a sentence in Figure~\ref{fig:review_sentence_pca_compare}(b). The coordinates of each point are given by its first three PCs, and the color denotes its hard cluster label, with each cluster interpreted as one latent aspect. The review-level representation remains highly mixed even with $20$ clusters: points with different colors often occupy similar regions of the PCA space. This means that reviews assigned to different latent aspects are not visually well separated in the low-dimensional projection, which is consistent with full reviews blending several product themes into one embedding. In contrast, the sentence-level representation uses only $10$ clusters but shows a more organized geometry: points with the same color form more coherent regions, and different colors are less intermingled, thus suggesting that sentence-level units provide more localized semantic structure for aspect discovery.
\vspace{-5pt}

\begin{figure}[t]
    \centering
    \begin{subfigure}{0.48\linewidth}
        \centering
        \includegraphics[width=\linewidth, trim={0 0 0 60pt},clip]{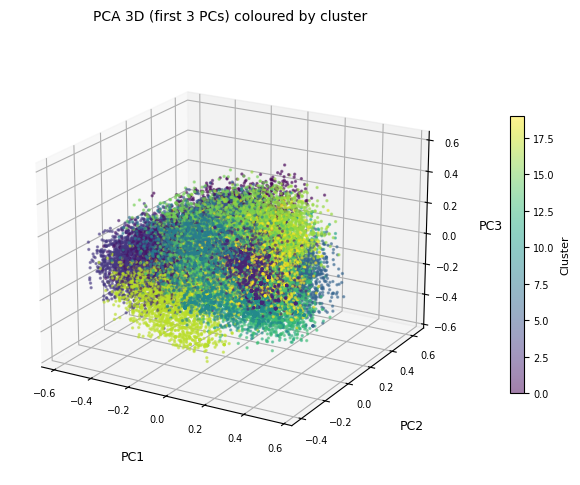}
        \vspace{-20pt}
        \caption{Review-level clustering, $K=20$}
    \end{subfigure}
    \hfill
    \begin{subfigure}{0.48\linewidth}
        \centering
        \includegraphics[width=\linewidth, trim={0 0 0 60pt},clip]{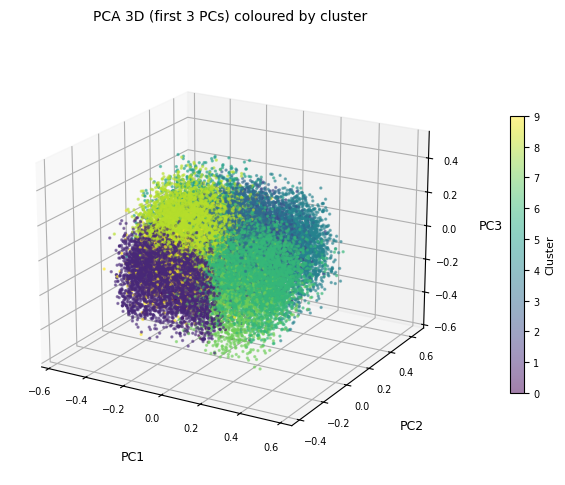}
        \vspace{-20pt}
        \caption{Sentence-level clustering, $K=10$}
    \end{subfigure}
    \vspace{-5pt}
    \caption{Clustered PCA visualization for review-level and sentence-level aspect discovery.}
    \label{fig:review_sentence_pca_compare}
    \vspace{-15pt}
\end{figure}

\subsection{Experiments and Results}
\label{sec:experiments}
\vspace{-7pt}
We evaluate \prefer through qualitative and quantitative experiments designed to test the framework along three dimensions. First, \textit{does conditioning on different user preference vectors lead to meaningfully different summaries for the same product?} Section~\ref{sec:qualitative_goodness} studies cross-user heterogeneity through personalized and generic summary examples.  Second, \textit{does the online mirror-descent update improve preference-summary alignment reliably across multiple random seeds?} Section~\ref{sec:convergence_seeds} reports surrogate-regret behavior in the main text, with preference-estimate and selected-evidence alignment results deferred to Appendix~\ref{app:experiments_fixedpref}. Third, \textit{when a user's preferences evolve, can the online update track this drift and adapt the generated summaries accordingly?} Section~\ref{sec:adaptive_preferences} studies this adaptation under controlled preference drift.\footnote{Compute-resource details for the experiments are provided in Appendix~\ref{app:compute-resources}.}

\vspace{-5pt}

\subsubsection{Cross-User Heterogeneity in Personalized Summaries}
\label{sec:qualitative_goodness}
\vspace{-7pt}
\begin{table*}[t]
\centering
\scriptsize
\caption{Cross-user heterogeneity example for a fixed product. All summaries are generated from the same product review corpus, but the target preference vector changes across rows.}
\label{tab:cross_user_heterogeneity}
\begin{tabular}{p{0.25\linewidth}|p{0.15\linewidth}|p{0.15\linewidth}|
p{0.37\linewidth}}
\toprule
\textbf{Summary target of User Profile} 
& \textbf{$\widehat{\mathbf w}$} 
& \textbf{$G_{\mathrm{cos}}(\widehat{\mathbf w},\mathbf{z})$} 
& \textbf{Latent aspect interpretation} \\
\midrule

Aspect-0 focused
&
$\mathbf e_0$
&
0.9995
&
visual presentation, color variety, and giftability.
\\

\multicolumn{4}{p{\linewidth}}{
\textbf{Generated summary.}
\emph{HIGH: Many users appreciate the product's color variety, especially the availability of bright options such as pink and blue. These color choices make the product feel more visually appealing.
MID: Some users note that the product performs as expected while also offering attractive colors, a convenient size, fast drying, and consistent texture across color variants. A few also mention that darker or more neutral color options could make the product more broadly appealing.
LOW: A few users mention more specific preferences, such as wanting a natural undyed version, while others see the separately packaged multi-color set as useful for gifting.
}
}
\\

\midrule

Aspect-2 focused
&
$\mathbf e_2$
&
0.9914
&
exfoliation effectiveness, and body-use suitability.
\\

\multicolumn{4}{p{\linewidth}}{
\textbf{Generated summary.}
\emph{
HIGH: Many users say the product helps improve the appearance and feel of skin, including concerns such as visible scars, rough texture, and Keratosis Pilaris. They describe noticeable smoothing effects and strong satisfaction after consistent use.
MID: Some users value the product for softness, with reports of long-term use and compliments on smoother-looking skin. LOW: However, a few users suggest it is better suited for body exfoliation on tougher areas such as arms, back, or feet, than for delicate facial use, since the texture may feel too rough for sensitive areas.}
}
\\

\midrule

Mixed Aspect-0\&2
&
$ \frac{1}{2}\mathbf e_0+ \frac{1}{2}\mathbf e_2$
&
0.7527
&
rough exfoliating quality, and color variety.
\\

\multicolumn{4}{p{1\linewidth}}{
\textbf{Generated summary.}
\emph{
HIGH: Many users emphasize the product’s practical presentation and usability, including attractive colors, suitable size, fast drying, and the expected rough exfoliating texture. Some users also express interest in broader color options, especially darker or more neutral variants.
MID: Some users focus more directly on color variety and packaging. They mention that different colors do not appear to change the product’s roughness, while appreciating the availability of multiple colors, separate packaging, and the option to use the set as gifts. A few users also prefer a natural or undyed version.
LOW: A few users shift away from color and presentation and instead mention skin appearance outcomes, such as more even tone, reduced visibility of scars, and improved confidence.
}
}
\\

\midrule

Generic summary
&
$ \frac{1}{K}\mathbf 1$
&
0.9897
&
Generic usability
\\

\multicolumn{4}{p{\linewidth}}{
\textbf{Generated summary.}
\emph{
HIGH: Many users find the product useful as a shared or multi-pack bath item, especially for body scrubbing, shaving preparation, reducing bumps, and reaching the back during showering.
LOW: A few users describe it as a strong general-purpose washcloth or gift item, while also noting practical usage cautions such as pairing it with appropriate soap to avoid clogged follicles.
}
}
\\
\bottomrule
\end{tabular}
\vspace{-18pt}
\end{table*}
We begin with a qualitative example illustrating the main motivation behind \prefer: different users may find different parts of the same review corpus useful. The goal of this experiment is not to evaluate fluency, but to check whether conditioning the extractor on different aspect-preference vectors changes the evidence selected for summarization and, finally, the generated summary. We fix one product $p$, namely, \textit{Salux Nylon Japanese Beauty Skin Bath Wash Cloth/Towel}, and its sentence-level review corpus $\mathcal D_p$. Using latent aspect space from Section~\ref{sec:offlineaspect_casestudy}, each sentence is represented by an aspect vector $\boldsymbol{\phi}_i\in\Delta^{K-1}$, where $K=10$.\footnote{Because these aspects are ``discovered" from sentence-level embedding clusters, they are \textit{latent} rather than manually labeled. We use the selected evidence and generated summaries to infer the product attributes of the discovered aspects.} We then generate summaries\footnote{The prompts are provided in Appendix~\ref{app:rewriting-details}.} for three personalized targets and one generic baseline: one user primarily emphasizes \textit{Aspect 0} (i.e., $\widehat{\mathbf w}^{(1)}=\mathbf{e_0} $), another primarily emphasizes \textit{Aspect 2} (i.e., $\widehat{\mathbf w}^{(2)}=\mathbf{e_2}$), and a third has mixed interest in both \textit{Aspect 0} and \textit{Aspect 2} (i.e., $\widehat{\mathbf w}^{(3)} = \frac{1}{2}\mathbf{e_0}+\frac{1}{2}\mathbf{e_2}$). Here, $\mathbf{e_k}$ denotes the $k$th standard basis vector. The generic baseline (i.e., a uniform profile $\widehat{\mathbf w}^{(\mathrm{gen})}=\frac{1}{K}\mathbf 1$) is intended to summarize the product broadly. Additionally, for each target profile, we report cosine alignment between the user-preference vector and the aggregate aspect profile $\mathbf{z}$ of the selected evidence: $G_{\mathrm{cos}}(\widehat{\mathbf w},\mathbf z)= ({\widehat{\mathbf w}^{\top}\mathbf z})/ ({\|\widehat{\mathbf w}\|_2\,\|\mathbf z\|_2})$. This score measures how well the selected evidence matches the aspect direction.

Table~\ref{tab:cross_user_heterogeneity} reports the resulting summaries. The Aspect-1-focused user receives a summary emphasizing sentences associated with visual appeal and color variety, while the Aspect-8-focused user receives a summary emphasizing exfoliation effectiveness and body use suitability. The mixed user receives a summary combining information from both aspects. In contrast, the generic summary provides a broader product-level usability overview and does not prioritize either aspect as strongly.
\vspace{-5pt}

\subsubsection{Convergence and robustness across seeds (in the stationary user-preference case)}
\label{sec:convergence_seeds}
\vspace{-9pt}
\begin{figure}[h]
    \centering
    \includegraphics[scale=0.35]{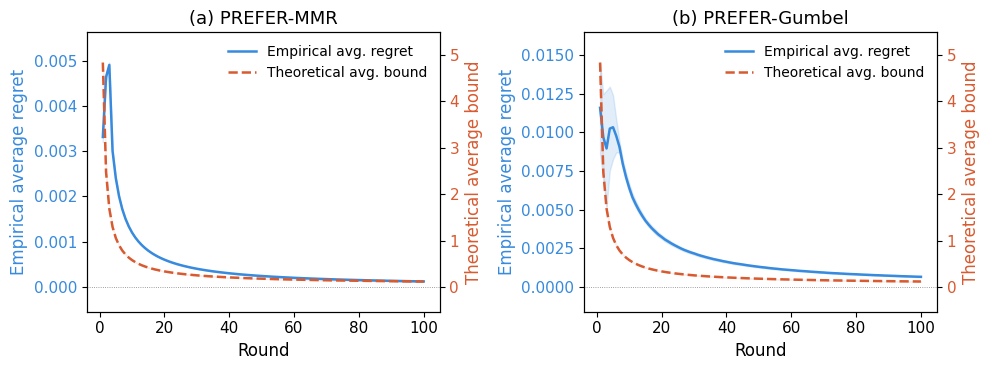}
    \vspace{-10pt}
    \caption{Average surrogate regret diagnostics for \prefer-MMR and \prefer-Gumbel.}
    \label{fig:avg_regret_diagnostic}
    \vspace{-9pt}
\end{figure}
Figure~\ref{fig:avg_regret_diagnostic} shows that the empirical average regret of the OMD update decreases toward zero for both MMR and Gumbel extraction. The dashed curves denote the worst-case theoretical average-regret bounds from Theorem~\ref{thm:omd-regret-main}, which decrease at the expected no-regret rate. In both panels, the empirical curve stays well below the theoretical bound (look into the right axis for the theoretical average bound: the values are orders of magnitude larger ($\times$1e3) than the left axis for the empirical average regret), indicating that the implemented feedback update is consistent with the no-regret behavior. 
\vspace{-5pt}

\subsubsection{Adaptation to within-user Preference Drift}
\label{sec:adaptive_preferences}
\vspace{-7pt}
We now test whether \prefer can move beyond static-preference personalization and adapt its learned preference estimate as new feedback arrives. We construct a time-varying oracle preference vector $\mathbf w_{u,t}\in\Delta^{K-1}$. The user initially places most of their mass on one latent aspect and then gradually shifts toward another aspect over a fixed drift window. Specifically, for drift parameter $\rho_t\in[0,1]$, we define $\mathbf w_{u,t} = (1-\rho_t)\mathbf w_{\mathrm{start}} + \rho_t \mathbf w_{\mathrm{end}}$, where $\rho_t=0$ before the drift begins, $\rho_t=1$ after the drift ends, and $\rho_t$ increases linearly during the drift window. 

\begin{figure}[h]
\vspace{-5pt}
    \centering
    \includegraphics[scale=0.35]{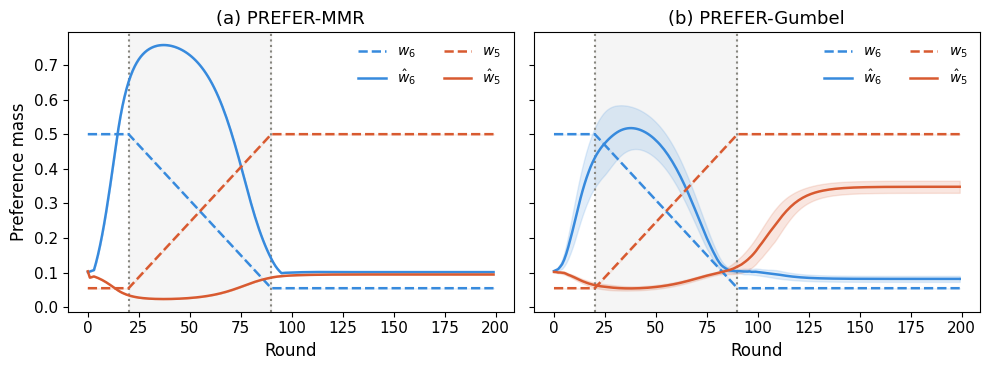}
    \vspace{-10pt}
    \caption{Aspect-level tracking under within-user preference drift.}
    \label{fig:preference_drift_alignment}
    \vspace{-5pt}
\end{figure}

Figure~\ref{fig:preference_drift_alignment} shows how \prefer tracks a changing oracle preference $\mathbf{w}_u {\mathbf{e}}_a$ from Aspect $a=6$ to Aspect $a=5$ during the shaded drift window. The learned OMD profile $\widehat{\mathbf{w}}_u{\mathbf{e}}_a$ adapts with a lag because feedback is only observed after each summary. Gumbel extraction gradually shifts mass toward the new aspect after the drift, whereas deterministic MMR shows weaker post-drift recovery, suggesting greater susceptibility to evidence-selection lock-in in initial stages.
\vspace{-5pt}

\section{Conclusion}
\label{sec:conclusion}
\vspace{-5pt}
We introduced \prefer, a feedback-adaptive framework for personalized review summarization that represents review evidence and user interests in a shared latent aspect space, selects preference-aligned and non-redundant evidence, rewrites it into a coherent summary, and updates the user preference estimate from scalar feedback. Our theoretical analysis shows that the entropic OMD update admits no-regret guarantees under preference drift and recovers the stationary no-regret result as a special case. Empirically, on the \amazon \reviews \textsc{ALL\_BEAUTY} case study, \prefer exhibits decreasing surrogate regret over repeated interactions and can adapt to changes in the underlying preference target, with Gumbel-priority extraction showing stronger recovery under drift. The main limitations are that our feedback is synthetically generated, the discovered aspects are latent rather than human-labeled, and the rewriting module relies on prompted LLM calls that may introduce hallucination or over-compression. Future work should validate \prefer by incorporating human- or weakly-supervised aspect interpretation, adding stronger factuality constraints with evidence citation, and exploring richer feedback signals such as clicks, dwell time, or natural-language critiques.

\bibliographystyle{plainnat}
\bibliography{references}

\appendix
\clearpage
\appendix
\startcontents[appendix]

\begin{center}
    {\Huge\bfseries Contents\par}
\end{center}
\vspace{1em}

\printcontents[appendix]{}{1}{}
\section{Distance Generating Functions and Bregman Divergence}
\label{app:dgfs}
\vspace{-5pt}
\paragraph{Distance-generating functions.} Let $\mathcal{X}\subseteq \mathbb{R}^K$ denote a convex decision set. A function $d:\mathcal{X}\to\mathbb{R}$ is called a \emph{distance-generating function} (DGF) with respect to a norm $\|\cdot\|$ if it satisfies the following properties:
\begin{enumerate}
    \item \textbf{Differentiability on the relative interior:}  
    $d$ is differentiable on $\mathrm{ri}(\mathcal{X})$.

    \item \textbf{Strong convexity:}  
    $d$ is $1$-strongly convex with respect to $\|\cdot\|$, that is, for all $\mathbf{x}\in \mathrm{ri}(\mathcal{X})$ and $\mathbf{z}\in \mathcal{X}$,
    \begin{align}
        d(\mathbf{z}) \ge d(\mathbf{x})+\nabla d(\mathbf{x})^{\top}(\mathbf{z}-\mathbf{x})+\frac{1}{2}\|\mathbf{z}-\mathbf{x}\|^2.
        \label{eq:dgf-strong-convexity}
    \end{align}
\end{enumerate}

\paragraph{Bregman Divergence.} Given such a DGF, the associated Bregman divergence is defined by
\begin{align}
    D_d(\mathbf{z}\|\mathbf{x}):=d(\mathbf{z})-d(\mathbf{x})-\nabla d(\mathbf{x})^{\top}(\mathbf{z}-\mathbf{x}).
    \label{eq:bregman-general}
\end{align}
By \eqref{eq:dgf-strong-convexity}, this divergence satisfies
\begin{align}
    D_d(\mathbf{z}\|\mathbf{x})
    \ge
    \frac{1}{2}\|\mathbf{z}-\mathbf{x}\|^2.
    \label{eq:bregman-lower-bound}
\end{align}

\section{Details of \prefer components}
\label{app:prefercomponents}
\vspace{-5pt}
This appendix provides implementation and algorithmic details for the main components of \prefer, complementing the compact framework description in the main paper. Figure~\ref{fig:prefer_user_interaction_diagram} shows the user-facing feedback loop, while Figure~\ref{fig:prefer} shows the full system architecture connecting aspect discovery, evidence selection, summarization, and online preference updates.

\begin{figure}[t]
    \centering
    \includegraphics[width=\linewidth, trim={10pt 170pt 20pt 62pt}, clip]{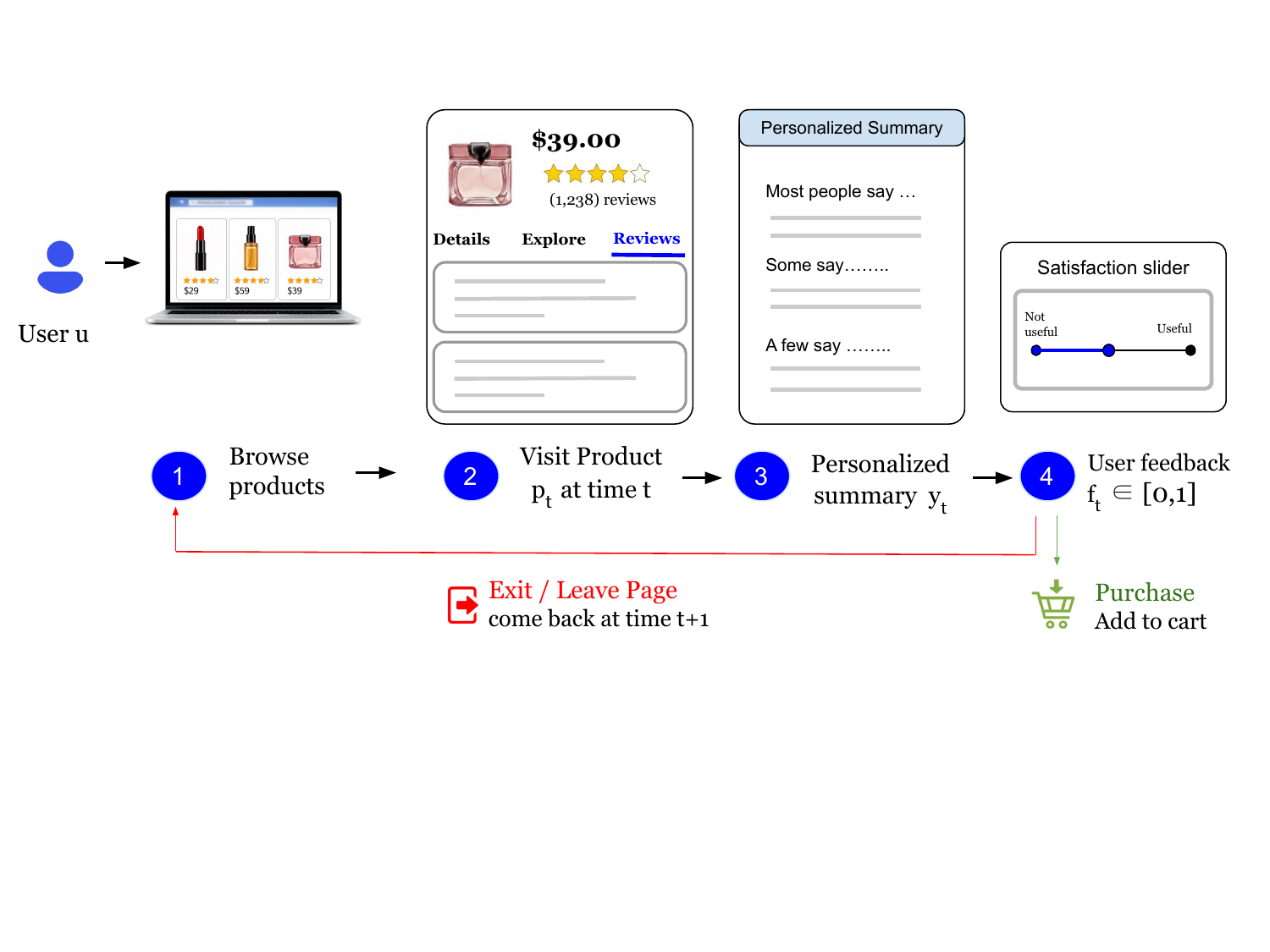}
    \caption{\prefer User-interaction diagram}
    \label{fig:prefer_user_interaction_diagram}
\end{figure}

\begin{figure}[h]
    \centering
    \includegraphics[width=\linewidth, trim={50pt 180pt 88pt 10pt},clip]{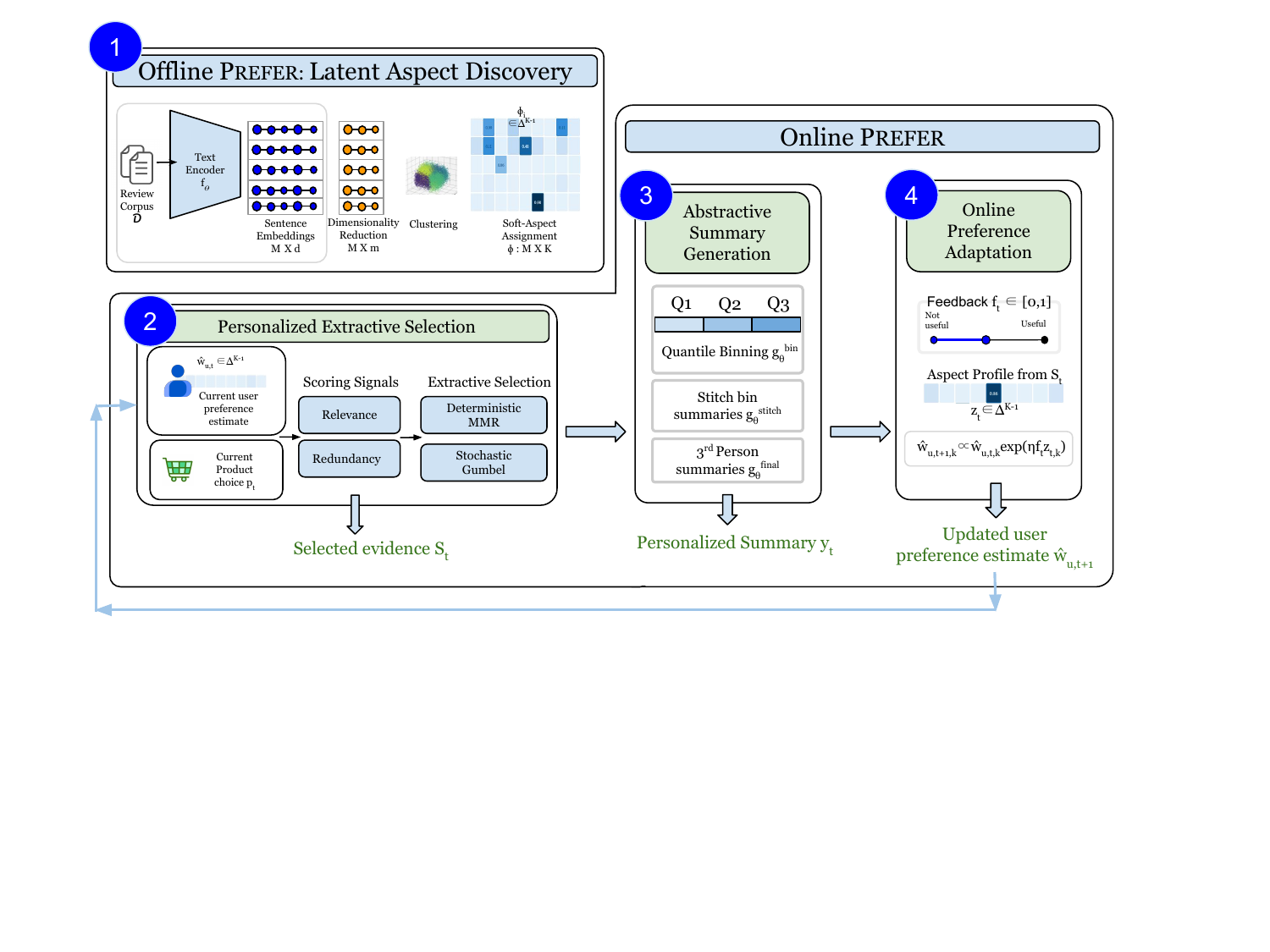}
    \caption{Architecture of our framework : \prefer}
    \label{fig:prefer}
\end{figure}

\subsection{Offline Aspect Discovery}
\label{app:aspectdiscover}

\subsubsection{Setting the Soft-Aspect Membership Hyperparameter $\tau$}
\label{app:tausoftaspectmembership}
\vspace{-5pt}
The temperature parameter $\tau$ in the soft-aspect membership assignment, from Section \ref{sec:latent-aspect-discovery}, controls how concentrated the soft aspect assignments are. As $\tau \to \infty$, the memberships approach hard cluster assignments; as $\tau \to 0$, they approach the uniform distribution over aspects, i.e., $\phi_{ik} \to \frac{1}{K}$, for all $k=1,\dots,K$.

A principled way to set $\tau$ is to calibrate it using the typical gap between the nearest and second-nearest centroid for a point. For each sentence $i$, define the squared distances
\[
d_{ik}^2 := \|\tilde{\mathbf{s}}_i^{\mathrm{PCA}}-\mathbf{c}_k\|_2^2, \qquad k=1,\dots,K.
\]
Let $d_{i(1)}^2 \le d_{i(2)}^2 \le \cdots \le d_{i(K)}^2$ denote these distances in increasing order, and define the nearest-centroid gap as:
\[
\Delta_i := d_{i(2)}^2 - d_{i(1)}^2 \ge 0.
\]
This gap measures how clearly sentence \(i\) belongs to its nearest latent aspect. A large value of \(\Delta_i\) means that the nearest centroid is much closer than the second-nearest centroid, so the sentence has a relatively unambiguous aspect assignment. A small value of \(\Delta_i\) means that the sentence lies close to a boundary between two aspects, so its assignment should naturally remain softer.

If the posterior mass is dominated by the two closest centroids, then the ratio between the largest and second-largest soft assignment weights satisfies
\begin{align}
\frac{\phi_{i(1)}}{\phi_{i(2)}}
=
\frac{
\frac{\exp\!\left(-\tau d_{i(1)}^2\right)}
{\sum_{j=1}^K \exp\!\left(-\tau d_{ij}^2\right)}
}{
\frac{\exp\!\left(-\tau d_{i(2)}^2\right)}
{\sum_{j=1}^K \exp\!\left(-\tau d_{ij}^2\right)}
} \nonumber
&=
\frac{
\exp\!\left(-\tau d_{i(1)}^2\right)
}{
\exp\!\left(-\tau d_{i(2)}^2\right)
} \nonumber \\
&=
\exp\!\left(
\tau \left(d_{i(2)}^2 - d_{i(1)}^2\right)
\right) \nonumber \\
&=
\exp\!\left(\tau \Delta_i\right).
\label{eq:top-two-softmax-ratio}
\end{align}

Thus, the ratio between the nearest and second-nearest soft assignment weights is exactly determined by the product \(\tau\Delta_i\). Larger values of \(\tau\Delta_i\) make the nearest aspect dominate more strongly, while smaller values lead to a more balanced assignment between the two closest aspects.

This identity motivates a simple calibration rule. Suppose we want a typical sentence to assign the nearest centroid approximately \(r>1\) times the mass assigned to the second-nearest centroid. For example, \(r=10\) means that, for a typical sentence, the nearest latent aspect should receive about ten times the weight of the second-nearest latent aspect. Since different sentences have different gaps \(\Delta_i\), a single global value of \(\tau\) cannot make this ratio equal to \(r\) for every sentence. We therefore calibrate \(\tau\) using a representative gap value.

Let $\mathcal I$ be a random subset of sentence indices used for estimation. We compute the collection of nearest-centroid gaps $\{\Delta_i : i\in\mathcal I\}$. We use the median nearest-centroid gap as a natural representative statistic,
\[
    \Delta_{\mathrm{med}}
    :=
    \operatorname{median}_{i\in\mathcal I}(\Delta_i),
\]
which is more robust to extreme gaps and better reflects the separation of a typical sentence. We then choose \(\tau\) so that a sentence with median gap satisfies $\frac{\phi_{i(1)}}{\phi_{i(2)}} = r$. Using Eq.~\eqref{eq:top-two-softmax-ratio}, this requires $\exp\!\left(\tau \Delta_{\mathrm{med}}\right)=r$. Taking logarithms on both sides, and simplifying, gives
\[
    \tau^\star
    =
    \frac{\log r}{\Delta_{\mathrm{med}}}
    =
    \frac{\log r}
    {
    \operatorname{median}_{i\in\mathcal I}(\Delta_i)
    }.
\]
With this choice, a median-gap sentence assigns the nearest centroid exactly \(r\) times the softmax weight of the second-nearest centroid. Sentences with larger-than-median gaps receive sharper assignments, while sentences with smaller-than-median gaps remain softer. This behavior is desirable because clearly separated sentences should be assigned more confidently, whereas ambiguous sentences near aspect boundaries should retain mixed aspect memberships.

\subsubsection{Additional Diagnostics of the Layer}
\label{app:aspect-discovery-diagnostics}
\vspace{-5pt}
We provide additional diagnostics for the embedding geometry used in offline aspect discovery. Since the main paper already describes the offline aspect discovery pipeline, this appendix only reports the empirical checks used to support the choice of a reduced embedding space before clustering.
\begin{figure}
    \centering
    \begin{subfigure}{0.48\linewidth}
        \centering
        \includegraphics[width=\linewidth]{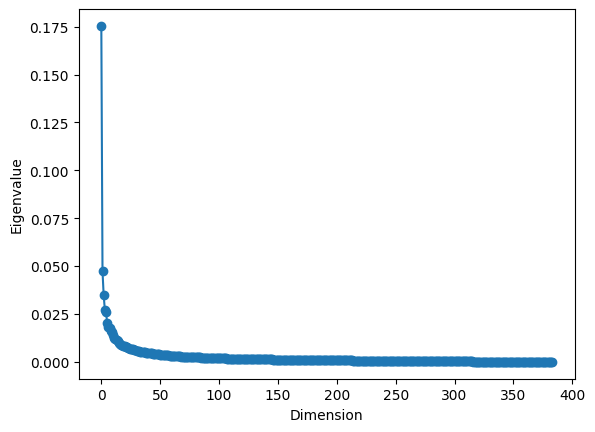}
        \caption{Eigenvalue spectrum}
    \end{subfigure}
    \hfill
    \begin{subfigure}{0.48\linewidth}
        \centering
        \includegraphics[width=\linewidth, trim={0 0 0 20pt},clip]{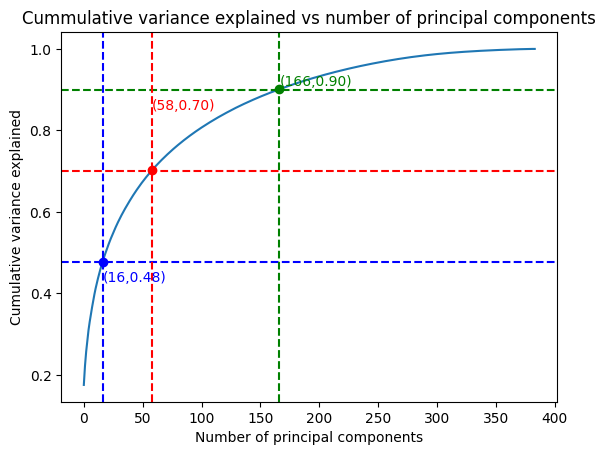}
        \caption{Cumulative variance explained}
    \end{subfigure}
    \caption{PCA diagnostics for the embedding space used in offline aspect discovery.}
    \label{fig:pca_spectrum_appendix}
\end{figure}

\begin{figure}
    \centering
    \begin{subfigure}{0.48\linewidth}
    \centering
    \includegraphics[width=\linewidth]{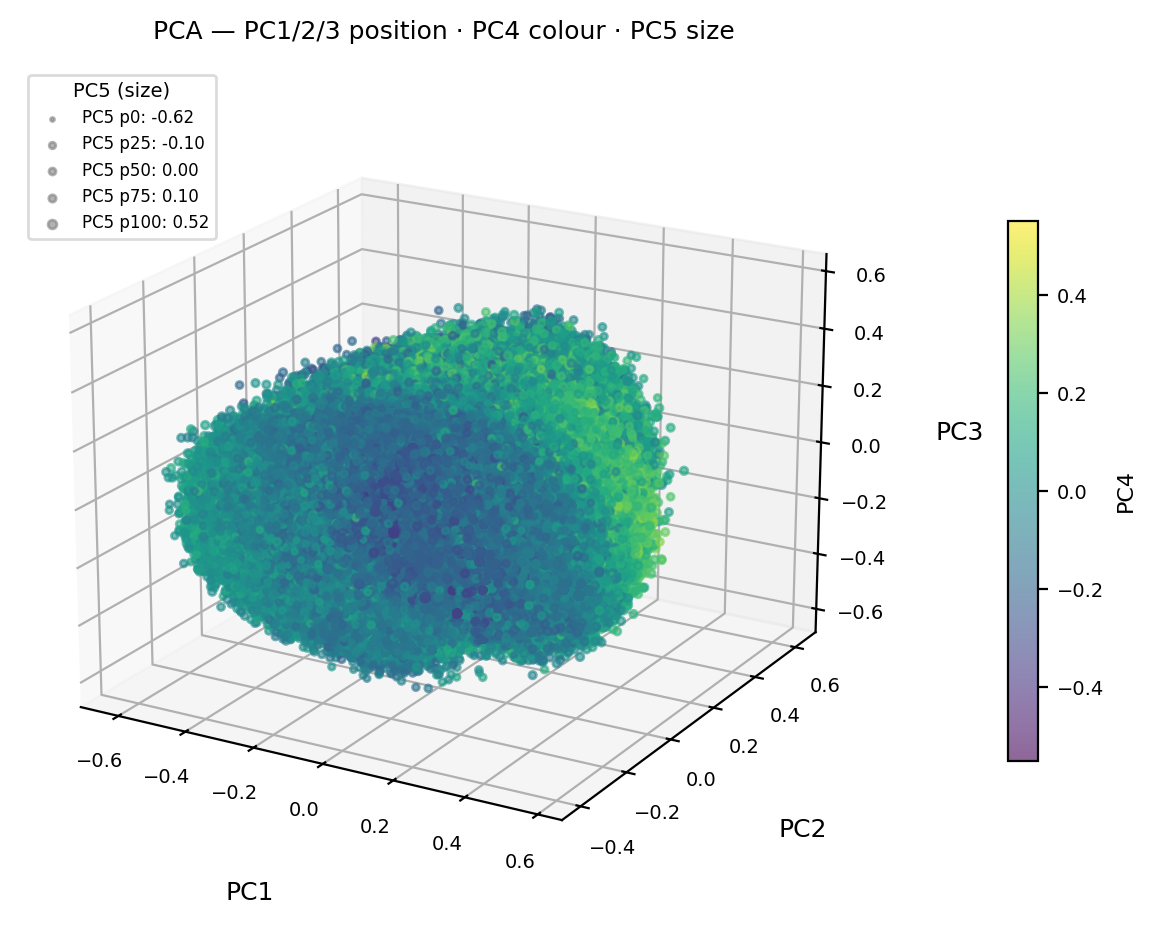}
    \caption{Low-dimensional Visualization}
    \label{fig:pca-5d-review}
    \end{subfigure}\hfill
    \begin{subfigure}{0.48\linewidth}
    \centering
    \includegraphics[width=\linewidth, trim={0 0 0 20pt}, clip]{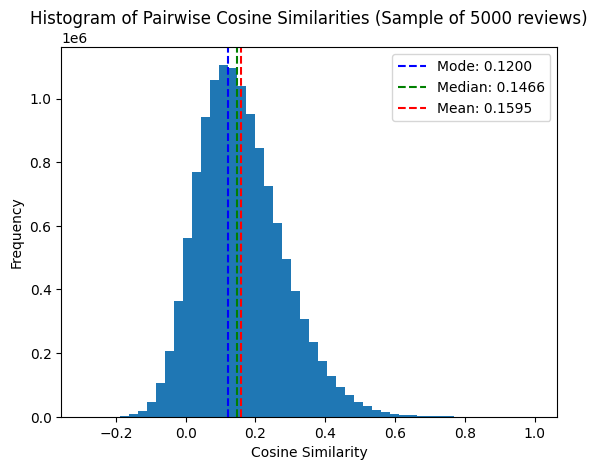}
    \caption{Pairwise Cosine Similarity Distribution}
    \label{fig:pairwise-cosine}
    \end{subfigure}
    \caption{Low-dimensional visualization and pairwise Cosine similarity}
\end{figure}

\paragraph{PCA spectrum and dimensionality reduction.} Here, we mainly visualize the PCA spectrum for the review-level dataset. The same PCA-based reduction step is also applied to the sentence-level dataset before clustering. Figure~\ref{fig:pca_spectrum_appendix}(a) shows that the eigenvalues decay sharply in the first few dimensions and then flatten into a long tail, suggesting that most of the useful geometric variation is captured by a relatively small number of principal components. Figure~\ref{fig:pca_spectrum_appendix}(b) reports the cumulative explained variance for the review-level embeddings: 16 components explain about $48\%$ of the variance, 58 components explain about $70\%$, and 166 components explain about $90\%$. For the sentence-level embeddings, we observe a similar pattern, with 16 components explaining about $44\%$, 65 components explaining about $70\%$, and 172 components explaining about $90\%$ of the variance. These diagnostics support reducing the 384-dimensional embeddings before clustering.

Figure~\ref{fig:pca-5d-review} visualizes the review embeddings after PCA. The first three principal components define the spatial axes, while the fourth and fifth components are shown by color and marker size, respectively. The projection forms a smooth, overlapping point cloud rather than clearly separated groups, suggesting that the PCA coordinates capture continuous semantic variation in the review corpus. These principal components should not be interpreted as aspects: latent aspects are introduced only after clustering in the reduced embedding space. Figure~\ref{fig:pairwise-cosine} shows that pairwise cosine similarities over 5{,}000 sampled reviews are concentrated around weak positive values, with mode $0.120$, median $0.1466$, and mean $0.1595$. This indicates that most review pairs are only mildly similar, while the embedding space still contains enough semantic variation to support downstream clustering.

\paragraph{Aspect discovery via clustering.} We select the number of latent aspects using internal clustering diagnostics after PCA reduction. For the review-level table, Figure~\ref{fig:k-selection-review} reports the Silhouette score \citep{rousseeuw1987silhouettes}, the Calinski-Harabasz index \citep{calinski1974dendrite}, and the Davies-Bouldin index \citep{davies1979cluster} across candidate values of $K$. The Silhouette score is highest around $K=20$, the Davies-Bouldin index is minimized in the same range, and the Calinski-Harabasz index shows diminishing returns as $K$ increases. We therefore use $K=20$ for the review-level aspect space.

We apply the same procedure to the sentence-level table. In that case, the diagnostics favor a smaller aspect space with $K=10$. This suggests that sentence-level units produce more localized semantic groups, so fewer clusters are sufficient to capture recurring product themes. In contrast, full reviews are more semantically mixed and require a larger number of clusters to separate their themes.

\begin{figure}[t]
    \centering
    \includegraphics[width=\linewidth]{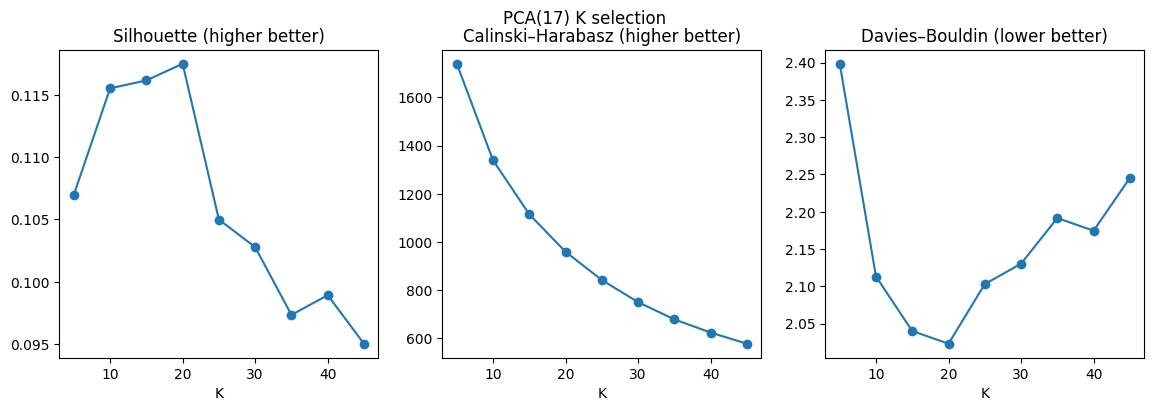}
    \caption{Internal clustering metrics for selecting the number of latent aspects in the review-level setup. The diagnostics favor a moderate number of clusters, with strongest support around $K=20$.}
    \label{fig:k-selection-review}
\end{figure} 

\paragraph{Soft-membership diagnostics.}
We further compare the soft aspect-membership heatmaps obtained from the review-level and sentence-level setups; the construction of these memberships follows Section~\ref{sec:latent-aspect-discovery} (described in detail in Appendix \ref{app:tausoftaspectmembership}). Figure~\ref{fig:soft_assignment_compare} shows that the review-level memberships are more diffuse and fragmented, even with $K=20$ aspects. In contrast, the sentence-level memberships with $K=10$ display sharper high-membership regions (more yellow colors), suggesting that sentences provide more localized semantic units for aspect discovery. This supports our choice of sentence-level aspect vectors for downstream extraction and personalization.

\begin{figure}[t]
    \centering
    \begin{minipage}{0.48\linewidth}
        \centering
        \includegraphics[width=\linewidth, trim={0 0 0 20pt},clip]{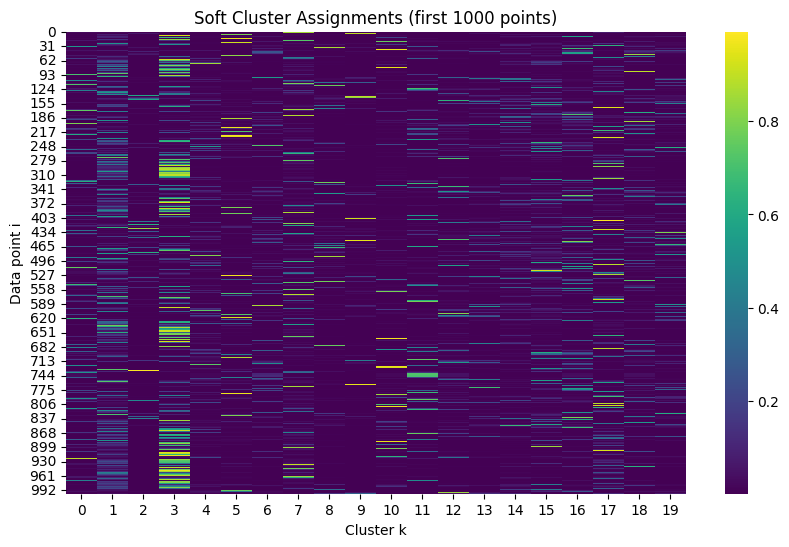}
        \textbf{(a) Review-level, $K=20$}
    \end{minipage}
    \hfill
    \begin{minipage}{0.48\linewidth}
        \centering
        \includegraphics[width=\linewidth, trim={0 0 0 20pt},clip]{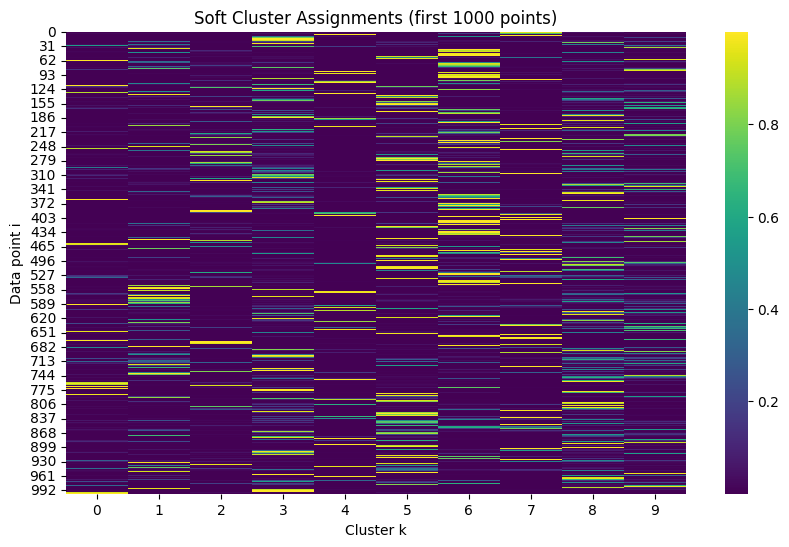}
        \textbf{(b) Sentence-level, $K=10$}
    \end{minipage}
    \caption{Soft aspect-membership diagnostics for review-level and sentence-level aspect discovery.}
    \label{fig:soft_assignment_compare}
\end{figure}

\subsection{Personalized Extractive Selection}
\label{app:extractive_selection}
\vspace{-5pt}
We, additionally, impose an extraction budget on the personalized extractive selection pipeline. Let $k$ denote the maximum number of selected sentences, and let $L$ denote an optional token-length budget. The feasible family of extractive sentence sets is $\mathcal{F}_{k,L}(p_t) :=\left\{S \subseteq \mathcal{D}_{p_t}:    |S|\le k,\;\sum_{i\in S}\ell_i \le L\right\}$, where $|S|$ denotes the cardinality of $S$ and $\ell_i$ is the token length of review sentence $r_i$.

\subsubsection{Deterministic MMR extraction}
\label{app:detmmrsection}
\vspace{-5pt}
\begin{proposition}[Submodularity and Monotonicity of the extractive objective]
\label{prop:submodular-extractive}
    Fix a round $t$ and let $J_t(\cdot;\widehat{\mathbf{w}}_{u,t})$ be defined as in Section \ref{sec:personalized-extractive-selection}. If $\mathrm{sim}(i,j)\ge 0$ for all sentence pairs $(i,j)$, then $J_t(\cdot;\widehat{\mathbf{w}}_{u,t})$ is a submodular set function. Moreover, if $\lambda\,\mathrm{Rel}_{j,t} \ge (1-\lambda)\sum_{i\in S}\mathrm{sim}(i,j)$, $\forall S\subseteq \mathcal D_{p_t},\ \forall j\notin S,$ then $J_t(\cdot;\widehat{\mathbf{w}}_{u,t})$ is monotone.
\end{proposition}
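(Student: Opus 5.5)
The plan is to work entirely with the marginal gain already computed in Section~\ref{sec:personalized-extractive-selection},
\[
\Delta_t(j\mid S)=J_t(S\cup\{j\};\widehat{\mathbf{w}}_{u,t})-J_t(S;\widehat{\mathbf{w}}_{u,t})=\lambda\,\mathrm{Rel}_{j,t}-(1-\lambda)\sum_{i\in S}\mathrm{sim}(i,j),
\]
valid for every $S\subseteq\mathcal D_{p_t}$ and $j\notin S$. Both claims reduce to sign statements about this expression, via the diminishing-returns characterization of submodularity and the nonnegative-gain characterization of monotonicity.

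\emph{Step 1 (verify the gain formula).} I will first check the gain formula once. The modular part $\lambda\sum_{i\in S}\mathrm{Rel}_{i,t}$ gains exactly $\lambda\mathrm{Rel}_{j,t}$. The pairwise sum over unordered pairs in $S\cup\{j\}$ equals the sum over pairs in $S$ plus the new pairs $\{i,j\}$ with $i\in S$, each counted once. Symmetry of cosine similarity, $\mathrm{sim}(i,j)=\mathrm{sim}(j,i)$, makes the convention $i<j$ harmless.

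\emph{Step 2 (submodularity).} Take $A\subseteq B\subseteq\mathcal D_{p_t}$ and $j\notin B$. Then
\[
\Delta_t(j\mid A)-\Delta_t(j\mid B)=(1-\lambda)\sum_{i\in B\setminus A}\mathrm{sim}(i,j)\ge 0,
\]
because $\lambda\in[0,1]$ and $\mathrm{sim}\ge0$ by hypothesis. This is the diminishing-returns inequality, which is equivalent to submodularity. If one prefers the lattice form $J(A)+J(B)\ge J(A\cup B)+J(A\cap B)$, the same conclusion follows by telescoping elements of $B\setminus A$ one at a time; I would only remark on this.

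\emph{Step 3 (monotonicity).} The hypothesis says precisely that $\Delta_t(j\mid S)\ge0$ for all $S$ and $j\notin S$. For $A\subseteq B$, I add the elements of $B\setminus A$ one by one in any order. Each step has nonnegative gain by the hypothesis applied to the current intermediate set, so $J_t(B)\ge J_t(A)$.

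There is no real obstacle. The only point needing care is Step 1, namely that the pairwise term counts each new pair exactly once, together with the symmetry of $\mathrm{sim}$. I would also note that the monotonicity hypothesis is strong, since it must hold for all $S$, and that it holds trivially when $\lambda=1$.
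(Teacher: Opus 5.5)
Your proposal is correct and matches the paper's proof: submodularity follows from the diminishing-returns inequality for $\Delta_t(j\mid S)$ using $\mathrm{sim}\ge 0$, and monotonicity follows from nonnegativity of every marginal gain. Your extra checks make explicit what the paper leaves implicit, namely the symmetry of cosine similarity in the gain formula, the use of $1-\lambda\ge 0$, and adding the elements of $B\setminus A$ one at a time.
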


Algorithm~\ref{alg:mmr-select} gives the deterministic extractor used by \prefer. The algorithm greedily balances personalized relevance against semantic redundancy while respecting the sentence-count and token-length budgets.
\begin{algorithm}[h]
\caption{\textsc{Deterministic Extractive Selection}$(\{\boldsymbol{\phi}_i,\tilde{\mathbf{s}}_i^{\mathrm{PCA}},\ell_i\}_{i=1}^{n_{p_t}}, \widehat{\mathbf{w}}_{u,t}, k, L, \lambda)$}
\label{alg:mmr-select}
\begin{algorithmic}[1]
\State $S \gets \varnothing$, \quad $\ell(S) \gets 0$
\ForAll{$j\in \mathcal{D}_{p_t}$}
    \State $m(j) \gets 0$ \Comment{cached MMR redundancy term}
\EndFor
\For{$\tau=1,\dots,k$}
    \State $\mathcal{C} \gets \{j \in \mathcal{D}_{p_t}\setminus S : \ell(S)+\ell_j \le L\}$
    \If{$\mathcal{C}=\varnothing$}
        \State \textbf{break}
    \EndIf
    \State $j^\star \gets \arg\max_{j\in\mathcal{C}}
    \left[ \lambda\,\widehat{\mathbf{w}}_{u,t}^{\top}\boldsymbol{\phi}_j- (1-\lambda)\,m(j) \right]$
    \State $S \gets S \cup \{j^\star\}$
    \State $\ell(S) \gets \ell(S)+\ell_{j^\star}$
    \ForAll{$j\in \mathcal{D}_{p_t}\setminus S$}
        \State $m(j) \gets \max\{m(j),\,\mathrm{sim}(j^\star,j)\}$
    \EndFor
\EndFor
\State \Return $S$
\end{algorithmic}
\end{algorithm}

\begin{proposition}[Monotonicity of the MMR redundancy term]
\label{prop:mmr-monotone}
Fix a candidate sentence $j$. If $A\subseteq B\subseteq \mathcal D_{p_t}$, then $\max_{i\in A}\mathrm{sim}(i,j) \le \max_{i\in B}\mathrm{sim}(i,j)$, with the convention that the maximum over the empty set is zero.  Consequently, the MMR score diminishes $a_{t,\tau}(j;A) \ge a_{t,\tau}(j;B)$.
\end{proposition}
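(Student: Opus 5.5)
The first claim is an elementwise inequality about maxima over nested sets, so I would argue it directly. Then I would transfer it to the MMR score through the sign of the redundancy coefficient.

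\textbf{Monotonicity of the maximum.} Fix $j$ and sets $A\subseteq B\subseteq\mathcal D_{p_t}$, and split into two cases.

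If $A=\varnothing$, the left-hand side is $0$ by convention. We need $\max_{i\in B}\mathrm{sim}(i,j)\ge 0$. If $B$ is also empty this holds trivially. If $B$ is nonempty, we need nonnegative similarities, as in Proposition~\ref{prop:submodular-extractive}. Without that assumption the claim can fail, since cosine similarity may be negative. I would therefore either impose $\mathrm{sim}\ge 0$ or read the convention as ``$\max$ over a set with $0$ adjoined,'' i.e.\ $m(j)$ initialized at $0$ as in Algorithm~\ref{alg:mmr-select}. Under that reading, $\max\{0,\max_{i\in A}\mathrm{sim}(i,j)\}$ is monotone in the set regardless of sign. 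I expect this empty-set/sign subtlety to be the only real obstacle, and I would resolve it by adopting the algorithm's cached-maximum convention.

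If $A\neq\varnothing$, pick $i^\star\in A$ attaining the maximum; it exists because the sets are finite. Since $i^\star\in B$, we get $\max_{i\in A}\mathrm{sim}(i,j)=\mathrm{sim}(i^\star,j)\le\max_{i\in B}\mathrm{sim}(i,j)$.

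\textbf{Consequence for the MMR score.} Writing $a_{t,\tau}(j;S)=\lambda\,\mathrm{Rel}_{j,t}-(1-\lambda)\max_{i\in S}\mathrm{sim}(i,j)$, the relevance term does not depend on $S$. Because $\lambda\in[0,1]$, the coefficient $-(1-\lambda)$ is nonpositive. Multiplying the inequality from the first step by $-(1-\lambda)\le 0$ reverses it, and adding $\lambda\,\mathrm{Rel}_{j,t}$ to both sides yields $a_{t,\tau}(j;A)\ge a_{t,\tau}(j;B)$.

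This also shows that the cached update $m_{t,\tau}(j)=\max\{m_{t,\tau-1}(j),\mathrm{sim}(s_{t,\tau},j)\}$ is nondecreasing in $\tau$. Hence the MMR scores of the remaining candidates are nonincreasing along the greedy run.
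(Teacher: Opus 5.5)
Your argument is correct and follows the same route as the paper's: containment of the index sets gives the monotone maximum, and multiplying by $-(1-\lambda)\le 0$ transfers it to the MMR score. Your handling of $A=\varnothing$ is more careful than the paper's one-line proof, which misses that the convention $\max_{i\in\varnothing}\mathrm{sim}(i,j)=0$ can exceed $\max_{i\in B}\mathrm{sim}(i,j)$ when cosine similarities of the centered PCA vectors are negative; your fix is exactly what is needed, namely assuming $\mathrm{sim}\ge 0$ or reading the term as the cached maximum initialized at $0$, as in Algorithm~\ref{alg:mmr-select}.
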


Proposition~\ref{prop:mmr-monotone} shows that, for each fixed candidate $j$, the MMR score can only decrease as the selected set grows. Thus, later rounds of greedy extraction become progressively more conservative, which is exactly the intended redundancy-control effect of MMR.

\subsubsection{Gumbel-priority Extraction}
\label{app:gumbelextractionsection}
\vspace{-5pt}
\begin{proposition}[Gumbel perturbations induce a Boltzmann policy]
\label{prop:gumbel-boltzmann}
Let $\mathcal{C}_{t,\tau}$ be the feasible candidate set, and $s_{t,\tau}$ be the next selected review sentence. Then
\begin{align}
\mathbb{P}(s_{t,\tau}=j\mid S_{t,\tau-1}) = \frac{\exp\!\left(\beta_{\mathrm{ext}}a_{t,\tau}(j)\right)} {\sum_{j'\in\mathcal{C}_{t,\tau}}\exp\!\left(\beta_{\mathrm{ext}}a_{t,\tau}(j')\right)}, \qquad j\in\mathcal{C}_{t,\tau}.
\label{eq:gumbel-softmax}
\end{align}
\end{proposition}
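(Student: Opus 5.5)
This is the standard Gumbel-max trick; I would prove it by direct computation, conditioning on $S_{t,\tau-1}$. Given $S_{t,\tau-1}$, the candidate set $\mathcal C_{t,\tau}$ and the scores $\mu_j:=\beta_{\mathrm{ext}}a_{t,\tau}(j)$ are deterministic, because $a_{t,\tau}(j)$ depends only on $\mathrm{Rel}_{j,t}$ and on $\max_{i\in S_{t,\tau-1}}\mathrm{sim}(i,j)$. The only randomness is therefore the fresh i.i.d. noise $g_{t,\tau,j}\sim\mathrm{Gumbel}(0,1)$, which is independent of the past. Each perturbed score $\xi_j=\mu_j+g_j$ is then Gumbel with location $\mu_j$, so it has CDF $F_j(x)=\exp(-e^{-(x-\mu_j)})$ and density $f_j(x)=e^{-(x-\mu_j)}F_j(x)$.

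The key step is to write the event $\{s_{t,\tau}=j\}$ as $\{\xi_j>\xi_{j'}\ \forall j'\neq j\}$. Ties have probability zero because the distributions are continuous, so the $\arg\max$ is almost surely unique. Conditioning on $\xi_j=x$ and using independence gives
\[
\mathbb P(s_{t,\tau}=j\mid S_{t,\tau-1})=\int_{-\infty}^{\infty} e^{-(x-\mu_j)}\prod_{j'\in\mathcal C_{t,\tau}}\exp\!\bigl(-e^{-(x-\mu_{j'})}\bigr)\,dx .
\]
Here the factor $F_j(x)$ from the density has been absorbed into the product over all $j'$.

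Set $Z:=\sum_{j'}e^{\mu_{j'}}$. The product then equals $\exp(-Ze^{-x})$, and the integral becomes $e^{\mu_j}\int e^{-x}\exp(-Ze^{-x})\,dx$. Substituting $u=e^{-x}$ turns this into $e^{\mu_j}\int_0^\infty e^{-Zu}\,du=e^{\mu_j}/Z$. This is exactly the expression in \eqref{eq:gumbel-softmax}.

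The main obstacle is the bookkeeping rather than the analysis. One needs to state clearly that the noise at step $\tau$ is independent of $S_{t,\tau-1}$, which the construction guarantees since fresh noise is drawn at each step. One also needs to confirm that the tie event is null so the $\arg\max$ is well defined. An equivalent route, if preferred, is to note that $\max_{j'\neq j}\xi_{j'}$ is itself Gumbel with location $\log\sum_{j'\neq j}e^{\mu_{j'}}$, and that the difference of two independent Gumbels is logistic; this yields the same formula.
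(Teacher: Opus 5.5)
Your proposal is correct and follows essentially the same route as the paper. Both condition on one candidate's perturbed score (the paper conditions on $g_{t,\tau,j}=x$ rather than $\xi_j=x$, which is just a shift), multiply the independent Gumbel CDFs, combine the exponents into $\exp(-e^{-x}\sum_{j'}e^{\mu_{j'}})$, and evaluate the integral by a change of variables. Your remarks that ties are a null event and that the step-$\tau$ noise is fresh and independent of $S_{t,\tau-1}$ are small rigor points the paper leaves implicit.
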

Algorithm~\ref{alg:mmr-select-gumbel} gives the stochastic extractor. The method uses the same MMR score as the deterministic extractor, but adds independent Gumbel perturbations to induce controlled exploration during evidence selection.
\begin{algorithm}[h]
\caption{\textsc{Gumbel Extractive Selection}$(\{\boldsymbol{\phi}_i,\tilde{\mathbf{s}}_i^{\mathrm{PCA}},\ell_i\}_{i=1}^{n_{p_t}}, \widehat{\mathbf{w}}_{u,t}, k, L, \lambda, \beta_{\mathrm{ext}})$}
\label{alg:mmr-select-gumbel}
\begin{algorithmic}[1]
\State $S \gets \varnothing$, \quad $\ell(S) \gets 0$
\ForAll{$j\in \mathcal{D}_{p_t}$}
    \State $m(j) \gets 0$ \Comment{cached MMR redundancy term}
\EndFor
\For{$\tau=1,\dots,k$}
    \State $\mathcal{C} \gets \{j \in \mathcal{D}_{p_t}\setminus S : \ell(S)+\ell_j \le L\}$
    \If{$\mathcal{C}=\varnothing$}
        \State \textbf{break}
    \EndIf
    \ForAll{$j\in\mathcal{C}$}
        \State sample $g_j \sim \mathrm{Gumbel}(0,1)$
        \State $\xi_j \gets \beta_{\mathrm{ext}}
        \left[ \lambda\,\widehat{\mathbf{w}}_{u,t}^{\top}\boldsymbol{\phi}_j- (1-\lambda)\,m(j) \right] + g_j$
    \EndFor
    \State $j^\star \gets \arg\max_{j\in\mathcal{C}} \xi_j$
    \State $S \gets S \cup \{j^\star\}$
    \State $\ell(S) \gets \ell(S)+\ell_{j^\star}$
    \ForAll{$j\in \mathcal{D}_{p_t}\setminus S$}
        \State $m(j) \gets \max\{m(j),\,\mathrm{sim}(j^\star,j)\}$
    \EndFor
\EndFor
\State \Return $S$
\end{algorithmic}
\end{algorithm}

\subsection{Implementation Details of the Abstractive Summarization Module}
\label{app:rewriting-details}
\vspace{-5pt}
This appendix describes the implementation corresponding to the hierarchical rewriting map in Section \ref{sec:contextual-abstractive-rewriting}. Given an extracted evidence set $S_t$, the rewriting module proceeds in three stages: quantile-support-based partitioning and within-bin compression, cross-bin stitching, and final surface rewriting.

\paragraph{Stage 1: relevance-based partitioning and within-bin compression.} For each selected sentence $r_i\in S_t$, we assign a dominant latent aspect $a_i=\arg\max_{k}\phi_{i,k}$. For each dominant aspect $k$, we compute a reviewer-level support score $n_k=\left|\{u_i:r_i\in S_t,\ a_i=k\}\right|$, where $u_i$ denotes the reviewer associated with sentence $r_i$. Thus, $n_k$ counts how many distinct reviewers contribute selected evidence whose dominant aspect is $k$, avoiding over-counting repeated sentences from the same reviewer.

We then divide the selected evidence into high-, mid-, and low-support groups using empirical quantiles of the support scores $\{n_k\}_{k=1}^{K}$: $S_t^{\mathrm{high}}, S_t^{\mathrm{mid}},$ and $S_t^{\mathrm{low}}$.
Sentences whose dominant aspect has support above the upper quantile threshold are placed in $S_t^{\mathrm{high}}$, those below the lower threshold are placed in $S_t^{\mathrm{low}}$, and the remaining sentences are placed in $S_t^{\mathrm{mid}}$. These groups correspond to themes mentioned by many, some, and a few reviewers, respectively.

Within each support group, near-duplicate sentences are removed, and the remaining evidence is compressed into a short intermediate summary:
\[
b_t^{q}=g_\theta^{\mathrm{bin}}(S_t^{q}),\qquad
q\in\{\mathrm{high},\mathrm{mid},\mathrm{low}\}.
\]
where $b_t^{q}$ denotes the summary associated with bin $q$.

In our implementation, the input to the within-bin compression module is formed by concatenating the review units assigned to the same relevance bin. The corresponding prompt is intentionally simple:
\begin{pycodebox}
Summarize the following review evidence into a short, concise text
block while preserving the main content and removing redundancy.
\end{pycodebox}
This step is applied independently to the high-, mid-, and low-quantile bins.

\paragraph{Stage 2: stitching across bins.}
The three intermediate summaries are then merged into a single draft summary,
\[
d_t = g_\theta^{\mathrm{stitch}}(b_t^{\mathrm{high}}, b_t^{\mathrm{mid}}, b_t^{\mathrm{low}}).
\]
The purpose of this stage is to preserve the relevance ordering induced by the personalized extractor while producing a single coherent draft.

In our implementation, the stitching stage is guided by a structured instruction that encourages the model to organize the summary according to the three relevance strata. A representative template is:
\begin{pycodebox}
You are given three evidence bins from product reviews: HIGH, MID, 
and LOW.  Your task is to stitch them into one coherent, concise 
summary grounded in the review evidence provided. Rules:\\
1. Write exactly 3 brief paragraphs in total.
2. Paragraph 1 must describe the HIGH cluster. Prioritize it as 
   the main takeaway.\\
3. Paragraph 2 must describe the MID cluster. Use this bin to add
   supporting details and nuance.\\
\end{pycodebox}
\begin{pycodebox}
4. Paragraph 3 must describe the LOW cluster. Use it only for minor
   or less common preferences.\\
6. Do not copy the input text verbatim.\\
7. Use phrases such as "many users", "some users", and "a few 
   users" to reflect evidence strength.\\
Input:
HIGH | pct={high['stats']['pct']:.1f}\% | 
     | count={high['stats']['count']}   | 
     | summary={high['summary']}        |\\
MID  | pct={mid['stats']['pct']:.1f}\% | 
     | count={mid['stats']['count']}   | 
     | summary={mid['summary']}        |\\
LOW  | pct={low['stats']['pct']:.1f}\% | 
     | count={low['stats']['count']}   | 
     | summary={low['summary']}        |
\end{pycodebox}

\paragraph{Stage 3: final user-facing rewrite.}
Finally, the draft summary $d_t$ is converted into a polished user-facing summary,
\[
y_t = g_\theta^{\mathrm{final}}(d_t).
\]
This last stage improves fluency and presentation while preserving the substantive meaning of the stitched draft.

Because the extracted evidence originates from raw user reviews, the draft may contain first-person expressions, quoted-review style language, or inconsistent phrasing. We, therefore, apply a final rewriting prompt:
\begin{pycodebox}
Rewrite the following summary so that it is suitable for display 
to users. Rules:\\
1. Use third-person generalization, remove first-person phrases 
   such as ``I'', ``me'', ``my'', ``we".\\
2. Avoid sounding like direct review quotes.\\
3. Preserve the overall meaning, and return one polished paragraph.
\end{pycodebox}

\paragraph{Implementation note.}
The rewriting pipeline is modular: the same summarization model may be used in all three stages, or each stage may be instantiated separately. In our experiments, we use a lightweight hierarchical design in order to separate evidence prioritization from final linguistic realization. Exact prompt templates are implementation choices; the essential structure is the three-stage composition described above.

\subsection{Algorithmic Details of Online Preference Adaptation from Feedback}
\label{app:onlinepreferenceadaption}
\vspace{-5pt}
\begin{lemma}[Boundedness of the centered feedback and aspect profile]
\label{lem:bounded-feedback}
For each interaction round $t$, the aspect profile satisfies $\mathbf z_t \in \Delta^{K-1}$. Moreover, if $f_t\in[0,1]$ and the baseline $b_t\in[0,1]$, then the centered feedback satisfies $\widetilde f_t \in [-1,1]$. More generally, if clipping is applied at level $c>0$, then $|\widetilde f_t|\le c$.
\end{lemma}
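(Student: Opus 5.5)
The proof splits into three independent claims, each of which follows directly from the definitions in Section~\ref{sec:online-preference-adaptation}.

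\textbf{The aspect profile lies in the simplex.} By construction, $\mathbf z_t=\sum_{i\in S_t}\alpha_{t,i}\boldsymbol\phi_i$ with $\alpha_{t,i}\ge0$ and $\sum_i\alpha_{t,i}=1$, and each $\boldsymbol\phi_i\in\Delta^{K-1}$. Nonnegativity of each coordinate of $\mathbf z_t$ holds because it is a sum of products of nonnegative numbers. For normalization, exchange the order of summation:
\[
\sum_{k=1}^K z_{t,k}=\sum_{i\in S_t}\alpha_{t,i}\sum_{k=1}^K\phi_{i,k}=\sum_{i\in S_t}\alpha_{t,i}=1 .
\]
This is simply convexity of the simplex. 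I would also check that every weighting scheme in the footnote (uniform, utility-based, rank-based, blended) is a normalized exponential and hence a valid convex weight vector. This requires $S_t\neq\varnothing$, which holds whenever at least one sentence fits the budget; I would state that as a standing convention.

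\textbf{The centered feedback lies in $[-1,1]$.} Since $f_t,b_t\in[0,1]$, we have $f_t-b_t\le 1-0=1$ and $f_t-b_t\ge 0-1=-1$. To make the hypothesis $b_t\in[0,1]$ concrete for the baselines actually used, I would verify it by short arguments. The running mean $b_t^{\mathrm{mean}}$ is an average of values in $[0,1]$, so it lies in $[0,1]$. For the EMA baseline, induct on $t$: if $b_t^{\mathrm{ema}}\in[0,1]$, then $b_{t+1}^{\mathrm{ema}}=(1-\rho)b_t^{\mathrm{ema}}+\rho f_t$ is a convex combination of points of $[0,1]$ because $\rho\in(0,1]$. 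The base case is any initialization in $[0,1]$, e.g.\ $b_1=0$ or $b_1=f_1$.

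\textbf{Clipping gives $|\widetilde f_t|\le c$.} I would define clipping as $\widetilde f_t:=\max\{-c,\min\{c,f_t-b_t\}\}$. The bound $|\widetilde f_t|\le c$ is then immediate by cases on where $f_t-b_t$ lies relative to $\pm c$. When $c\ge1$, clipping is inactive by the previous part.

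No step poses a genuine obstacle. The only care needed is to fix the conventions the statement leaves implicit: nonempty $S_t$, the initialization of the baseline, and the precise definition of clipping.
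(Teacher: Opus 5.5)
Your proposal is correct and follows the paper's proof: $\mathbf z_t$ lies in the simplex because it is a convex combination of simplex vectors, $f_t-b_t\in[-1,1]$ follows from the endpoint bounds, and clipping gives $|\widetilde f_t|\le c$ by construction. Your additional checks are sound and make explicit conventions the paper leaves implicit, namely that the running-mean and EMA baselines stay in $[0,1]$, that the footnote weightings are valid convex weights with $S_t\neq\varnothing$, and a precise definition of clipping.
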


\begin{lemma}[Boundedness of the surrogate loss]
\label{lem:surrogate-loss-properties}
For each round $t$, if $\mathbf z_t\in\Delta^{K-1}$, then $-|\widetilde f_t| \le \ell_t(\mathbf w) \le |\widetilde f_t|$, $\forall \mathbf w\in\Delta^{K-1}$.
\end{lemma}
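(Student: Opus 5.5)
The statement to prove is Lemma~\ref{lem:surrogate-loss-properties}: for every $\mathbf w\in\Delta^{K-1}$, $-|\widetilde f_t|\le \ell_t(\mathbf w)\le |\widetilde f_t|$. The plan is a direct bound on the inner product $\mathbf w^\top\mathbf z_t$, followed by multiplication by the scalar $-\widetilde f_t$. No earlier result is needed beyond the definition $\ell_t(\mathbf w)=-\widetilde f_t\,\mathbf w^\top\mathbf z_t$ and the hypothesis $\mathbf z_t\in\Delta^{K-1}$; the latter is also guaranteed by Lemma~\ref{lem:bounded-feedback}.

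\textbf{Step 1.} Show $0\le \mathbf w^\top\mathbf z_t\le 1$. Both vectors have nonnegative coordinates, so $\mathbf w^\top\mathbf z_t=\sum_k w_k z_{t,k}\ge 0$. For the upper bound, H\"older's inequality gives $\mathbf w^\top\mathbf z_t\le \|\mathbf w\|_1\|\mathbf z_t\|_\infty$. Here $\|\mathbf w\|_1=1$, and $\|\mathbf z_t\|_\infty\le 1$ because $\mathbf z_t$ is nonnegative with coordinates summing to one. Hence $\mathbf w^\top\mathbf z_t\le 1$. Equivalently, $\mathbf w^\top\mathbf z_t$ is a convex combination of the coordinates of $\mathbf z_t$, each of which lies in $[0,1]$.

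\textbf{Step 2.} Take absolute values: $|\ell_t(\mathbf w)|=|\widetilde f_t|\,|\mathbf w^\top\mathbf z_t|\le |\widetilde f_t|$, using Step 1. This is exactly the two-sided bound claimed. For the sharper sign-dependent form, the case $\widetilde f_t\ge 0$ gives $\ell_t(\mathbf w)\in[-\widetilde f_t,0]$, and the case $\widetilde f_t<0$ gives $\ell_t(\mathbf w)\in[0,-\widetilde f_t]$; in both cases the value lies in $[-|\widetilde f_t|,|\widetilde f_t|]$.

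There is no real obstacle. The only point needing care is that the bound on the inner product uses both simplex constraints, namely nonnegativity of both vectors and the coordinates of $\mathbf z_t$ being at most one. This bound should then be stated cleanly, since the dual-norm form $\|\mathbf g_t\|_\infty=|\widetilde f_t|\,\|\mathbf z_t\|_\infty\le c$ is what is reused in the regret analysis.
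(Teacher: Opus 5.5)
Your proof is correct and matches the paper's argument: bound $0\le \mathbf w^\top\mathbf z_t\le 1$ using the simplex constraints, then multiply by $-\widetilde f_t$ (equivalently, take absolute values) to obtain $-|\widetilde f_t|\le \ell_t(\mathbf w)\le |\widetilde f_t|$. Your H\"older and convex-combination justification of the inner-product bound makes explicit a step the paper states without detail.
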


\begin{lemma}
\label{lem:dualofl1}
The dual norm of $\|\cdot\|_1$ is $\|\cdot\|_\infty$.
\end{lemma}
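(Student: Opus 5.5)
The claim is that the dual of $\|\cdot\|_1$ on $\mathbb{R}^K$ is $\|\cdot\|_\infty$. I will work from the definition $\|\mathbf y\|_* := \sup\{\mathbf y^\top \mathbf x : \|\mathbf x\|_1\le 1\}$ and prove two matching inequalities.

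\emph{Upper bound.} For any $\mathbf x$ with $\|\mathbf x\|_1\le 1$, Hölder's inequality applied term by term gives
\[
\mathbf y^\top\mathbf x=\sum_{k=1}^K y_kx_k\le \sum_{k=1}^K|y_k||x_k|\le \max_k|y_k|\sum_{k=1}^K|x_k|\le \|\mathbf y\|_\infty .
\]
Taking the supremum over such $\mathbf x$ yields $\|\mathbf y\|_*\le\|\mathbf y\|_\infty$.

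\emph{Lower bound.} I will exhibit a feasible point that attains the value. Choose $k^\star\in\arg\max_k|y_k|$ and set $\mathbf x=\mathrm{sign}(y_{k^\star})\,\mathbf e_{k^\star}$, with $\mathrm{sign}(0):=1$. Then $\|\mathbf x\|_1=1$ and $\mathbf y^\top\mathbf x=|y_{k^\star}|=\|\mathbf y\|_\infty$. Hence $\|\mathbf y\|_*\ge\|\mathbf y\|_\infty$, and the supremum is in fact a maximum.

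There is no real obstacle here. The only points needing care are the sign convention when $y_{k^\star}=0$ and stating clearly which definition of the dual norm is used. Within the paper, this lemma is what allows the gradient $\mathbf g_t=-\widetilde f_t\mathbf z_t$ to be measured in $\|\cdot\|_\infty$ in the OMD analysis. Since $\mathbf z_t\in\Delta^{K-1}$, one gets $\|\mathbf g_t\|_\infty\le|\widetilde f_t|\le c$. Paired with the 1-strong convexity of the entropy DGF in $\|\cdot\|_1$, this produces the $c^2\eta_t$ terms in Theorem~\ref{thm:omd-dynamic-regret}.
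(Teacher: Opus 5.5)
Your proof is correct and follows the paper's argument: the upper bound comes from bounding each $|y_k|$ by $\|\mathbf y\|_\infty$ against $\|\mathbf x\|_1\le 1$, and the lower bound from the signed standard basis vector at a maximizing coordinate. Your convention $\mathrm{sign}(0):=1$ is a small improvement over the paper. With its $\operatorname{sgn}(g_{k^\star})$, the paper's claim $\|\mathbf v\|_1=1$ fails when $\mathbf g=\mathbf 0$, although the conclusion still holds trivially in that case.
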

In our setting, the decision variable is the simplex-valued preference estimate $\widehat{\mathbf{w}}_{u,t}\in\Delta^{K-1}$, so the natural primal norm is the $\ell_1$ norm, which measures how much probability mass is redistributed across aspects. Its dual norm is the $\ell_\infty$ norm by Lemma \ref{lem:dualofl1}.

To control the update size in OMD, we need to measure gradients in the norm, which is dual to the one used on the decision variables. In general, if $\|\cdot\|$ is a norm on the primal space of variables, then its dual norm $\|\cdot\|_\ast$ on the gradient space is defined by
\begin{align}
    \|\mathbf{g}\|_\ast
    :=
    \sup_{\|\mathbf{v}\|\le 1}\mathbf{g}^{\top}\mathbf{v}.
    \label{eq:dual-norm-def}
\end{align}

\begin{lemma}[Boundedness of the gradient in dual norm]
\label{lem:gradient-bound}
If $|\widetilde f_t|\le c$ for some constant $c>0$ and $\mathbf z_t\in\Delta^{K-1}$, then $\|\mathbf g_t\|_\infty \le c$.
\end{lemma}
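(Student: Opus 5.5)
The statement to prove is Lemma~\ref{lem:gradient-bound}: $\|\mathbf g_t\|_\infty\le c$. The proof is a direct computation from the definition of the gradient and the simplex constraint on the aspect profile.

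\textbf{Step 1 (compute the norm).} By definition, $\mathbf g_t=\nabla\ell_t(\widehat{\mathbf w}_{u,t})=-\widetilde f_t\,\mathbf z_t$. Since $\ell_t$ is linear, this gradient does not depend on the point at which it is evaluated. Absolute homogeneity of the $\ell_\infty$ norm then gives
\[
\|\mathbf g_t\|_\infty=|\widetilde f_t|\,\|\mathbf z_t\|_\infty .
\]

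\textbf{Step 2 (bound $\|\mathbf z_t\|_\infty$).} We have $\mathbf z_t\in\Delta^{K-1}$, either by hypothesis or by Lemma~\ref{lem:bounded-feedback}, since $\mathbf z_t$ is a convex combination of the simplex vectors $\boldsymbol\phi_i$. Every coordinate therefore satisfies $0\le z_{t,k}\le\sum_{k'}z_{t,k'}=1$, and so $\|\mathbf z_t\|_\infty\le 1$. Equality holds only at a vertex of the simplex.

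\textbf{Step 3 (combine).} Steps~1 and~2 together with $|\widetilde f_t|\le c$ give $\|\mathbf g_t\|_\infty\le c\cdot 1=c$. By Lemma~\ref{lem:dualofl1}, $\ell_\infty$ is the dual of the primal $\ell_1$ norm, so this is exactly the dual-norm gradient bound needed in the OMD analysis.

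No step is a real obstacle. The only point needing care is that Step~2 must use nonnegativity of $\mathbf z_t$ together with its coordinates summing to one; coordinate sum alone would not control the sup norm. This is why membership in the simplex, not merely the affine constraint, is invoked.
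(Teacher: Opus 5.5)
Your proposal is correct and matches the paper's proof essentially line for line: write $\mathbf g_t=-\widetilde f_t\,\mathbf z_t$, use homogeneity of $\|\cdot\|_\infty$, bound $\|\mathbf z_t\|_\infty\le 1$ from simplex membership (Lemma~\ref{lem:bounded-feedback}), and combine with $|\widetilde f_t|\le c$. Your remarks on linearity of $\ell_t$ and on why nonnegativity is needed are accurate but not required.
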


\begin{lemma}
\label{lem:negative-entropy-dgf}
The negative-entropy function $d(\mathbf{w}):= \sum_{k=1}^K w_k \log w_k$, for $\mathbf{w}\in \mathrm{ri}(\Delta^{K-1})$, is a valid distance-generating function on the simplex $\Delta^{K-1}$ with respect to the $\ell_1$ norm.
\end{lemma}

\begin{proposition}[Bregman divergence induced by negative entropy]
\label{prop:kl-bregman}
For any $\mathbf{u},\mathbf{w}\in \mathrm{ri}(\Delta^{K-1})$, the Bregman divergence $D_d(\mathbf{u}\|\mathbf{w})$ induced by $d=\sum_{k=1}^K w_k \log w_k$ is $\sum_{k=1}^K u_k \log \frac{u_k}{w_k}$, which is exactly the Kullback-Leibler divergence $\mathrm{KL}(\mathbf{u}\|\mathbf{w})$ on the simplex.
\end{proposition}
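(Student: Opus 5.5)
The proof is a direct calculation from the Bregman divergence definition \eqref{eq:bregman-general}, applied with $d(\mathbf{w})=\sum_k w_k\log w_k$ on $\mathrm{ri}(\Delta^{K-1})$.

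\textbf{Step 1: gradient.} Since $\mathbf{w}\in\mathrm{ri}(\Delta^{K-1})$ has all coordinates strictly positive, $d$ is differentiable there. Its gradient has coordinates $\partial_k d(\mathbf{w})=\log w_k+1$.

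\textbf{Step 2: substitute.} Plugging this gradient into \eqref{eq:bregman-general} gives
\[
D_d(\mathbf{u}\|\mathbf{w})=\sum_k u_k\log u_k-\sum_k w_k\log w_k-\sum_k(\log w_k+1)(u_k-w_k).
\]
Expanding the last sum, the terms $\sum_k w_k\log w_k$ cancel. We are left with
\[
\sum_k u_k\log u_k-\sum_k u_k\log w_k-\sum_k(u_k-w_k).
\]

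\textbf{Step 3: use the simplex constraint.} Both vectors lie on the simplex, so $\sum_k u_k=\sum_k w_k=1$. The final linear term $\sum_k(u_k-w_k)$ therefore vanishes. The remainder is $\sum_k u_k\log(u_k/w_k)=\mathrm{KL}(\mathbf{u}\|\mathbf{w})$.

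There is no real obstacle. The only point needing care is the role of the simplex constraint. On the full positive orthant the same computation produces the generalized KL divergence, which keeps the extra term $-\sum_k u_k+\sum_k w_k$. Restricting to $\Delta^{K-1}$ is exactly what removes it.

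Strict positivity of the coordinates guarantees that all logarithms and the gradient are well defined. One can also note that the formula extends by continuity to $\mathbf{u}$ on the boundary with the convention $0\log 0=0$, but the statement does not require this.
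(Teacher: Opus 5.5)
Your proposal is correct and follows the same route as the paper. You compute $\nabla d(\mathbf{w})=(\log w_k+1)_{k=1}^K$, substitute it into the Bregman definition, cancel the $\sum_k w_k\log w_k$ terms, and use $\sum_k u_k=\sum_k w_k=1$ to remove the leftover linear term; your remark that this last step is what turns the generalized KL on the positive orthant into the ordinary KL on the simplex is accurate.
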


\begin{proposition}[Closed-form exponentiated-gradient update]
\label{prop:eg-update}
The entropic online mirror-descent step in Eq. \eqref{eq:omd-update} admits the closed-form solution
\begin{align}
    \widehat{w}_{u,t+1,k}^{\mathrm{OMD}}
    =
    \frac{
        \widehat{w}_{u,t,k}^{\mathrm{OMD}}\exp\!\left(\eta\,\widetilde f_t\, z_{t,k}\right)
    }{
        \sum_{j=1}^K \widehat{w}_{u,t,j}^{\mathrm{OMD}}\exp\!\left(\eta\,\widetilde f_t\, z_{t,j}\right)
    },
    \qquad k=1,\dots,K.
    \label{eq:exp-gradient-update}
\end{align}
\end{proposition}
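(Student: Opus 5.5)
The statement to prove is Proposition~\ref{prop:eg-update}, the closed-form exponentiated-gradient step. The plan is to solve the strictly convex program in Eq.~\eqref{eq:omd-update} explicitly. By Proposition~\ref{prop:kl-bregman}, the Bregman term is $\mathrm{KL}(\mathbf w\|\widehat{\mathbf w}_{u,t}^{\mathrm{OMD}})$. By the definition of the surrogate loss, $\mathbf g_t=-\widetilde f_t\mathbf z_t$. The objective is therefore
\[
F(\mathbf w)=-\eta\widetilde f_t\sum_k z_{t,k}w_k+\sum_k w_k\log\frac{w_k}{\widehat w_{u,t,k}^{\mathrm{OMD}}}.
\]
Assumption~\ref{ass:omd-delta} (or simply the uniform initialization together with induction) gives $\widehat{\mathbf w}_{u,t}^{\mathrm{OMD}}\in\mathrm{ri}(\Delta^{K-1})$, so every logarithm in $F$ is well defined. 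Lemma~\ref{lem:negative-entropy-dgf} shows that $F$ is strictly convex on the simplex, which is compact. Hence a unique minimizer exists, and it suffices to exhibit a point that satisfies the KKT conditions.

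I will first argue that the minimizer lies in the relative interior. The partial derivative of $w_k\log w_k$ tends to $-\infty$ as $w_k\downarrow 0$. Consequently, at any boundary point, moving a small amount of mass into a zero coordinate strictly decreases $F$. With this established, the nonnegativity constraints are inactive, and only the equality constraint $\sum_k w_k=1$ remains. I then form the Lagrangian $F(\mathbf w)+\mu(\sum_k w_k-1)$ and set its partial derivatives to zero:
\[
-\eta\widetilde f_t z_{t,k}+\log\frac{w_k}{\widehat w_{u,t,k}^{\mathrm{OMD}}}+1+\mu=0 .
\]
Solving gives $w_k=\widehat w_{u,t,k}^{\mathrm{OMD}}\exp(\eta\widetilde f_t z_{t,k})\,e^{-1-\mu}$. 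Choosing $\mu$ so that the coordinates sum to one produces exactly the normalizer in Eq.~\eqref{eq:exp-gradient-update}.

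Since $F$ is convex and the constraints are affine, these stationarity conditions are sufficient for optimality. Uniqueness then follows from strict convexity. The resulting point is strictly positive, which confirms a posteriori that the boundary exclusion is consistent.

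The only delicate step is justifying that the nonnegativity constraints can be dropped, that is, that no minimizer sits on the boundary. I would handle this with the directional-derivative argument above. A cleaner alternative is the Gibbs variational identity: writing $q_k\propto\widehat w_{u,t,k}^{\mathrm{OMD}}e^{\eta\widetilde f_t z_{t,k}}$, one has
\[
F(\mathbf w)=\mathrm{KL}(\mathbf w\|\mathbf q)-\log\sum_j \widehat w_{u,t,j}^{\mathrm{OMD}}e^{\eta\widetilde f_t z_{t,j}},
\]
which is minimized exactly at $\mathbf w=\mathbf q$. I would likely present this identity as the main proof, since it avoids the KKT and boundary arguments entirely.
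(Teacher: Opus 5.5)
Your proposal is correct, and its Lagrangian half is precisely the paper's proof. The paper substitutes $\mathbf g_t=-\widetilde f_t\mathbf z_t$ and the KL form of $D_d$, keeps only the constraint $\sum_k w_k=1$, sets the partial derivatives to zero, and normalizes.

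The paper, however, drops the nonnegativity constraints without comment. It also never argues that the stationary point is the global and unique minimizer. Your additions supply exactly what is missing: the boundary-exclusion argument via the $-\infty$ derivative of $w_k\log w_k$ at $0$, the remark that convexity with affine constraints makes stationarity sufficient, and strict convexity for uniqueness. You also carry $\widetilde f_t$ correctly throughout, whereas the paper's derivation writes $f_t$.

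The Gibbs variational identity you plan to make the main proof is a genuinely different route. Rewriting the objective as $\mathrm{KL}(\mathbf w\|\mathbf q)-\log Z$, with $Z=\sum_j \widehat w_{u,t,j}^{\mathrm{OMD}}e^{\eta\widetilde f_t z_{t,j}}$, reduces everything to Gibbs' inequality. Existence, global optimality, uniqueness, and strict positivity of the minimizer then follow together, with no calculus or constraint qualification. The price is that $\mathbf q$ must be guessed in advance, and the Lagrangian computation is what provides it. Presenting the Lagrangian step as motivation and the identity as the proof is therefore a sensible combination.

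One small point: Proposition~\ref{prop:kl-bregman} identifies $D_d$ with KL only on $\mathrm{ri}(\Delta^{K-1})$, while the argmin in Eq.~\eqref{eq:omd-update} is taken over the closed simplex. State the convention $0\log 0=0$ so that your objective and the KL term are defined on the boundary as well.
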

Hence, in our setting, entropic mirror descent reduces exactly to an exponentiated-gradient update on the simplex.


\subsubsection{Theoretical Analysis of the layer}
\label{app:theoreticalanalysis}
\vspace{-5pt}
\paragraph{Special Case of static user-preferences.}
We first record the static-regret guarantee obtained when the user-preference is fixed, corresponding to the special case $V_{T_u}=0$ of the dynamic-preference setting under the centered surrogate loss and Assumption \ref{ass:omd-delta} (i.e. $\widehat{\mathbf{w}}_{u,t}^{\mathrm{OMD}} \in \Delta_\delta^{K-1}$). The corresponding static-regret relative to the true preference vector $\mathbf{w}_u\in\Delta^{K-1}$ is $R_{T_u}^{\mathrm{OMD}}(\mathbf{w}_u):=\sum_{t=1}^{T_u}\ell_t(\widehat{\mathbf{w}}_{u,t}^{\mathrm{OMD}})-\sum_{t=1}^{T_u}\ell_t(\mathbf{w}_u)$.
 
\begin{theorem}[Static-Regret bound of entropic OMD for stationary user-preferences]
\label{thm:omd-regret-main}
Assume that for each round $t=1,\dots,T_u$, the aspect profile satisfies $\mathbf{z}_t\in\Delta^{K-1}$ and the centered feedback satisfies $|\widetilde f_t|\le c$. Let the preference estimate be initialized uniformly, $\widehat w_{u,t,k}^{\mathrm{OMD}} \ge \delta$ by Assumption \ref{ass:omd-delta}, and updated by entropic online mirror descent with step size $\eta_t:=\frac{\eta_0}{\sqrt{1+c_\eta t}}$, where $c_\eta>0$, then the regret relative to the true preference vector $\mathbf{w}_u\in\Delta^{K-1}$ satisfies
\begin{align}
    R_{T_u}^{\mathrm{OMD}}(\mathbf{w}_u)
    \le
    \Big(\frac{\log(1/\delta)}{\eta_0}+\frac{c^2\eta_0}{c_\eta} \Big)\sqrt{1+c_\eta T_u}.
\end{align}
In particular, choosing $\eta_0=\frac{\sqrt{c_\eta\log(1/\delta)}}{c}$, yields $R_{T_u}^{\mathrm{OMD}}(\mathbf{w}_u) \le 2c\sqrt{\frac{\log(1/\delta)}{c_\eta}}\sqrt{1+c_\eta T_u}$.
\end{theorem}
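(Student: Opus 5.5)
The plan is the standard analysis of mirror descent with a time-varying step size, specialized to the entropic geometry and a fixed comparator $\mathbf w:=\mathbf w_u$. Write $D_t:=\mathrm{KL}(\mathbf w\|\widehat{\mathbf w}_{u,t}^{\mathrm{OMD}})$, which is the Bregman divergence of $d$ by Proposition~\ref{prop:kl-bregman}. Since $\ell_t$ is linear, the instantaneous regret is exactly
\[
\ell_t(\widehat{\mathbf w}_{u,t})-\ell_t(\mathbf w)=\mathbf g_t^\top(\widehat{\mathbf w}_{u,t}-\mathbf w),
\]
so no convexity step is needed.

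\textbf{Step 1: one-step inequality.} From the optimality condition of the OMD step \eqref{eq:omd-update} with step $\eta_t$ and the three-point identity for Bregman divergences, I will obtain
\[
\eta_t\,\mathbf g_t^\top(\widehat{\mathbf w}_{u,t+1}-\mathbf w)\le D_t-D_{t+1}-D_d(\widehat{\mathbf w}_{u,t+1}\|\widehat{\mathbf w}_{u,t}).
\]
Next I add the term $\eta_t\mathbf g_t^\top(\widehat{\mathbf w}_{u,t}-\widehat{\mathbf w}_{u,t+1})$ to the left-hand side. This term is bounded using H\"older's inequality with the $\ell_1/\ell_\infty$ pairing (Lemma~\ref{lem:dualofl1}) together with the strong-convexity lower bound \eqref{eq:bregman-lower-bound}, valid by Lemma~\ref{lem:negative-entropy-dgf}. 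Applying Young's inequality, $ab-\tfrac12 b^2\le\tfrac12 a^2$, then gives
\[
\mathbf g_t^\top(\widehat{\mathbf w}_{u,t}-\mathbf w)\le \frac{D_t-D_{t+1}}{\eta_t}+\frac{\eta_t}{2}\|\mathbf g_t\|_\infty^2\le \frac{D_t-D_{t+1}}{\eta_t}+\frac{\eta_t c^2}{2},
\]
where the last step uses Lemma~\ref{lem:gradient-bound}.

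\textbf{Step 2: summation with varying steps.} I sum the one-step inequality over $t=1,\dots,T_u$ and rearrange the divergence terms by summation by parts:
\[
\sum_{t}\frac{D_t-D_{t+1}}{\eta_t}=\frac{D_1}{\eta_1}+\sum_{t=2}^{T_u}D_t\Big(\frac1{\eta_t}-\frac1{\eta_{t-1}}\Big)-\frac{D_{T_u+1}}{\eta_{T_u}}.
\]
Because $\eta_t$ is nonincreasing, every coefficient is nonnegative. Hence the whole sum is at most $\max_t D_t/\eta_{T_u}$.

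This is where Assumption~\ref{ass:omd-delta} is essential, and I expect it to be the only genuinely delicate point, since in the unrestricted simplex the KL divergence is unbounded. Using $w_k\le1$ and $\widehat w_{u,t,k}\ge\delta$,
\[
D_t=\sum_k w_k\log\frac{w_k}{\widehat w_{u,t,k}}\le\sum_k w_k\log\frac1\delta=\log(1/\delta)
\]
for every $t$. Combined with $1/\eta_{T_u}=\sqrt{1+c_\eta T_u}/\eta_0$, the first term is at most $\log(1/\delta)\sqrt{1+c_\eta T_u}/\eta_0$.

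\textbf{Step 3: the step-size sum.} The remaining term is $\tfrac{c^2\eta_0}{2}\sum_{t=1}^{T_u}(1+c_\eta t)^{-1/2}$. Since the summand is decreasing in $t$, comparing with an integral gives
\[
\sum_{t=1}^{T_u}(1+c_\eta t)^{-1/2}\le\int_0^{T_u}(1+c_\eta s)^{-1/2}\,ds\le\frac{2}{c_\eta}\sqrt{1+c_\eta T_u}.
\]
So this term is at most $\tfrac{c^2\eta_0}{c_\eta}\sqrt{1+c_\eta T_u}$. Adding the bounds from Steps 2 and 3 yields the stated regret bound.

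\textbf{Step 4: tuning $\eta_0$.} The bound has the form $(a/\eta_0+b\eta_0)\sqrt{1+c_\eta T_u}$ with $a=\log(1/\delta)$ and $b=c^2/c_\eta$. This is minimized at $\eta_0=\sqrt{a/b}=\sqrt{c_\eta\log(1/\delta)}/c$, where its value is $2\sqrt{ab}=2c\sqrt{\log(1/\delta)/c_\eta}$, which gives the final claim.
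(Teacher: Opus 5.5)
Your proposal is correct and follows essentially the same route as the paper's proof. It derives the one-step inequality of Lemma~\ref{lem:omd-one-step} via first-order optimality, the three-point identity, H\"older, Pinsker-based strong convexity and Young's inequality, then uses summation by parts with nonincreasing $\eta_t$, the uniform bound $D_t\le\log(1/\delta)$ from Assumption~\ref{ass:omd-delta}, the integral comparison for $\sum_t\eta_t$, and the same tuning of $\eta_0$.
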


\begin{corollary}[Sample complexity of entropic online mirror descent with varying step size]
\label{cor:omd-sample-complexity-main}
Under the assumptions of Theorem~\ref{thm:omd-regret-main}, to guarantee average regret $ \frac{1}{T_u}R_{T_u}^{\mathrm{OMD}}(\mathbf{w}_u)$ is at most $\varepsilon>0$, it suffices that
\begin{align}
    T_u
    \ge
    \frac{2}{\varepsilon^2}
    \left(
    c^2\log(1/\delta)
    +
    \sqrt{c^4(\log(1/\delta))^2+\frac{\varepsilon^2 c^2\log(1/\delta)}{c_\eta}}
    \right).
    \label{eq:omd-sample-complexity-main}
\end{align}
\end{corollary}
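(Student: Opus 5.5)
The plan is to start from the tuned form of Theorem~\ref{thm:omd-regret-main} and reduce the requirement on average regret to a quadratic inequality in $T_u$. Write $A:=\log(1/\delta)$, which is positive since $\delta\le 1/K<1$. I read ``under the assumptions of Theorem~\ref{thm:omd-regret-main}'' as including the tuned step size $\eta_0=\sqrt{c_\eta A}/c$. With that choice the theorem gives
\begin{align*}
\frac{1}{T_u}R_{T_u}^{\mathrm{OMD}}(\mathbf w_u)\le \frac{2c}{T_u}\sqrt{\frac{A}{c_\eta}}\sqrt{1+c_\eta T_u}.
\end{align*}
It therefore suffices to find the values of $T_u$ for which the right-hand side is at most $\varepsilon$.

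Both sides of the inequality $2c\sqrt{A/c_\eta}\sqrt{1+c_\eta T_u}\le \varepsilon T_u$ are nonnegative, so squaring gives an equivalent condition:
\begin{align*}
\frac{4c^2A}{c_\eta}(1+c_\eta T_u)\le \varepsilon^2T_u^2,
\qquad\text{i.e.}\qquad
\varepsilon^2T_u^2-4c^2A\,T_u-\frac{4c^2A}{c_\eta}\ge 0 .
\end{align*}
The left-hand side is an upward-opening quadratic in $T_u$. Its constant term is negative, so it has one negative root and one positive root.

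By the quadratic formula, the positive root is
\begin{align*}
T^\star=\frac{4c^2A+\sqrt{16c^4A^2+16\varepsilon^2c^2A/c_\eta}}{2\varepsilon^2}
=\frac{2}{\varepsilon^2}\Bigl(c^2A+\sqrt{c^4A^2+\varepsilon^2c^2A/c_\eta}\Bigr),
\end{align*}
which is exactly the threshold in \eqref{eq:omd-sample-complexity-main}. The quadratic is nonnegative for every $T_u\ge T^\star$, so the squared inequality holds there. Undoing the squaring, which is valid because both sides are nonnegative, shows that the average regret is at most $\varepsilon$.

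No step is a real obstacle. The only point needing care is being explicit that the corollary uses the tuned $\eta_0$. If $\eta_0$ is left general, the same argument goes through with $2c\sqrt{A/c_\eta}$ replaced by $B:=A/\eta_0+c^2\eta_0/c_\eta$. The stated constant is recovered by minimizing $B$ over $\eta_0$ using AM--GM, which gives $B=2c\sqrt{A/c_\eta}$.
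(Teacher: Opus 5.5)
Your proposal is correct and follows the paper's proof essentially step for step: you take the tuned bound of Theorem~\ref{thm:omd-regret-main}, divide by $T_u$, square to obtain the quadratic $\varepsilon^2T_u^2-4c^2\log(1/\delta)\,T_u-4c^2\log(1/\delta)/c_\eta\ge 0$, and take its positive root. Your extra remarks make explicit two points the paper leaves implicit: the root structure guarantees that every $T_u\ge T^\star$ works, and the stated threshold depends on the tuned choice of $\eta_0$.
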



Theorem~\ref{thm:omd-regret-main} shows that entropic online mirror descent remains no-regret under the centered surrogate loss, even with a decaying step-size schedule, provided the iterates stay uniformly inside the simplex as required by Assumption~\ref{ass:omd-delta}. In particular, the regret grows sub-linearly with the number of interactions $T_u$, and therefore the average regret vanishes as $T_u\to\infty$. The dependence on $\log(1/\delta)$ reflects the cost of enforcing a uniform lower bound on the coordinates of the OMD iterates: smaller values of $\delta$ allow the iterates to approach the boundary of the simplex more closely, but lead to a weaker regret bound. Corollary~\ref{cor:omd-sample-complexity-main} in Appendix \ref{app:theoreticalanalysis} further shows that achieving average regret at most $\varepsilon$ requires on the order of $c^2\log(1/\delta)/\varepsilon^2$ interactions. Thus, the dependence on the target accuracy remains quadratic, while the dependence on the interiority parameter enters only logarithmically.

\paragraph{Supporting results for dynamic regret.} We next provide the auxiliary Lipschitz lemma and sample-complexity consequence used to interpret the dynamic-regret bound under preference drift.
\begin{lemma}[Lipschitz continuity of entropy Bregman divergence on the truncated simplex]
\label{lem:entropy-bregman-lipschitz}
Suppose Assumption~\ref{ass:omd-delta} holds. Then, for any fixed
$\widehat{\mathbf w}\in\Delta_\delta^{K-1}$, the entropy Bregman divergence $D_{d}(\mathbf w,\widehat{\mathbf w})$ generated by the corresponding DGF $d(\mathbf{w}) =\sum_{k=1}^K w_k\log w_k$  is Lipschitz continuous on $\Delta_\delta^{K-1}$ with respect to the $\ell_1$ norm. In particular, for any
$\mathbf u,\mathbf v\in\Delta_\delta^{K-1}$,
\[
\left| D_{d}(\mathbf u \middle\|\widehat{\mathbf w}) - D_{d}(\mathbf v\middle\|\widehat{\mathbf w}) \right|\le L_\delta\|\mathbf u-\mathbf v\|_1, \qquad L_\delta:=1+\log(1/\delta).
\]
\end{lemma}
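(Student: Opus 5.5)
The plan is to treat $\mathbf w\mapsto D_d(\mathbf w\|\widehat{\mathbf w})$ as a smooth function on the convex set $\Delta_\delta^{K-1}$. I would bound its gradient in the dual norm $\|\cdot\|_\infty$ (Lemma~\ref{lem:dualofl1}) and then apply the mean value theorem along the segment joining $\mathbf u$ and $\mathbf v$.

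First I write the divergence explicitly using Proposition~\ref{prop:kl-bregman}:
\[
D_d(\mathbf w\|\widehat{\mathbf w})=\sum_{k=1}^K w_k\log w_k-\sum_{k=1}^K w_k\log \widehat w_k .
\]
This expression is well-defined and $C^1$ on the open positive orthant. By Assumption~\ref{ass:omd-delta}, $\Delta_\delta^{K-1}$ lies inside that orthant, so differentiability at the boundary is never an issue. The partial derivatives are
\[
\partial_{w_k} D_d(\mathbf w\|\widehat{\mathbf w})=1+\log w_k-\log\widehat w_k .
\]

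Second, I bound these partial derivatives using the truncation. Since $\delta\le w_k\le 1$ and $\delta\le \widehat w_k\le 1$, both $\log w_k$ and $\log\widehat w_k$ lie in $[\log\delta,0]$. Hence $\log w_k-\log\widehat w_k\in[-\log(1/\delta),\log(1/\delta)]$. Each partial derivative therefore lies in $[1-\log(1/\delta),\,1+\log(1/\delta)]$, so its absolute value is at most $1+\log(1/\delta)=L_\delta$. This gives $\|\nabla_{\mathbf w}D_d(\mathbf w\|\widehat{\mathbf w})\|_\infty\le L_\delta$ uniformly over $\mathbf w\in\Delta_\delta^{K-1}$.

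Third, fix $\mathbf u,\mathbf v\in\Delta_\delta^{K-1}$ and set $\mathbf w(s)=\mathbf v+s(\mathbf u-\mathbf v)$ for $s\in[0,1]$. By convexity, $\mathbf w(s)\in\Delta_\delta^{K-1}$ for every $s$. The mean value theorem applied to $s\mapsto D_d(\mathbf w(s)\|\widehat{\mathbf w})$ gives some $\bar s\in(0,1)$ with
\[
D_d(\mathbf u\|\widehat{\mathbf w})-D_d(\mathbf v\|\widehat{\mathbf w})=\nabla D_d(\mathbf w(\bar s)\|\widehat{\mathbf w})^\top(\mathbf u-\mathbf v).
\]
Hölder's inequality (the $\ell_1/\ell_\infty$ duality of Lemma~\ref{lem:dualofl1}) then bounds the right-hand side in absolute value by $L_\delta\|\mathbf u-\mathbf v\|_1$.

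I do not expect a genuine obstacle here. The only point requiring care is that the whole segment stays in the truncated simplex, so that the gradient bound applies at the intermediate point $\mathbf w(\bar s)$; convexity of $\Delta_\delta^{K-1}$ settles this. A sharper constant $\log(1/\delta)$ is available, because $\sum_k(u_k-v_k)=0$ makes the constant $1$ in the gradient cancel. This refinement is not needed, since the stated $L_\delta=1+\log(1/\delta)$ already follows from the cruder bound.
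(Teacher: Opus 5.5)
Your proof is correct and follows the same route as the paper: bound the $\ell_\infty$ norm of the gradient uniformly on $\Delta_\delta^{K-1}$, move along the segment from $\mathbf v$ to $\mathbf u$, and finish with $\ell_1/\ell_\infty$ H\"older (the paper integrates via the fundamental theorem of calculus where you use the mean value theorem). The only cosmetic difference is the gradient. The paper differentiates the general Bregman form $d(\mathbf w)-d(\widehat{\mathbf w})-\nabla d(\widehat{\mathbf w})^\top(\mathbf w-\widehat{\mathbf w})$, so its coordinates are $\log(w_k/\widehat w_k)$, bounded by $\log(1/\delta)$ and then relaxed to $L_\delta$; your KL-form gradient carries an extra $+1$, which, as you correctly note, cancels because $\sum_k(u_k-v_k)=0$.
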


\begin{corollary}[Sample complexity of entropic OMD under preference drift]
\label{cor:omd-dynamic-sample-complexity}
Define $A_{T_u}:=\log(1/\delta)+L_\delta V_{T_u}$. Under the assumptions of Theorem~\ref{thm:omd-dynamic-regret}, to guarantee that the average dynamic regret satisfies $\frac{1}{T_u}R_{T_u}^{\mathrm{OMD,dyn}}\le \varepsilon$, it suffices that
\begin{align}
    T_u
    \ge
    \frac{2}{\varepsilon^2}
    \left(
        c^2 A_{T_u}
        +
        \sqrt{
            c^4 A_{T_u}^2
            +
            \frac{\varepsilon^2 c^2 A_{T_u}}{c_\eta}
        }
    \right).
    \label{eq:omd-dynamic-sample-complexity}
\end{align}
\end{corollary}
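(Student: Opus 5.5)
The plan is to invoke the tuned form of Theorem~\ref{thm:omd-dynamic-regret} directly and reduce the average-regret requirement to a quadratic inequality in $T_u$. Under the assumptions of Theorem~\ref{thm:omd-dynamic-regret}, with the choice $\eta_0=\sqrt{c_\eta A_{T_u}}/c$, we have
\[
R_{T_u}^{\mathrm{OMD,dyn}}\le 2c\sqrt{\frac{A_{T_u}}{c_\eta}}\sqrt{1+c_\eta T_u}.
\]
So it suffices to show that the right-hand side is at most $\varepsilon T_u$ whenever \eqref{eq:omd-dynamic-sample-complexity} holds. Here $A_{T_u}=\log(1/\delta)+L_\delta V_{T_u}$ is taken as the quantity appearing in the theorem, with the drift budget $V_{T_u}$ treated as given. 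The condition \eqref{eq:omd-dynamic-sample-complexity} is then read as an implicit condition on $T_u$, exactly as in the theorem's tuned step size.

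Both sides of $2c\sqrt{A_{T_u}/c_\eta}\sqrt{1+c_\eta T_u}\le\varepsilon T_u$ are nonnegative, since $A_{T_u}\ge\log(1/\delta)\ge\log K\ge 0$ by $\delta\le 1/K$. Squaring is therefore an equivalence, and the condition becomes
\[
\varepsilon^2 T_u^2-4c^2A_{T_u}T_u-\frac{4c^2A_{T_u}}{c_\eta}\ge 0.
\]
This is an upward-opening quadratic in $T_u$. Its roots are
\[
T_\pm=\frac{2}{\varepsilon^2}\left(c^2A_{T_u}\pm\sqrt{c^4A_{T_u}^2+\frac{\varepsilon^2c^2A_{T_u}}{c_\eta}}\right).
\]
The smaller root $T_-$ is nonpositive, because the square root dominates $c^2A_{T_u}$. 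Hence for positive $T_u$ the inequality holds if and only if $T_u\ge T_+$, and $T_+$ is precisely the threshold in \eqref{eq:omd-dynamic-sample-complexity}. Dividing by $T_u$ then gives $\frac{1}{T_u}R_{T_u}^{\mathrm{OMD,dyn}}\le\varepsilon$.

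The only real obstacle is bookkeeping rather than analysis. Both $A_{T_u}$ and the tuned $\eta_0$ depend on the horizon through $V_{T_u}$, so the conclusion must be phrased conditionally: the drift budget is fixed, or the bound is understood with $A_{T_u}$ evaluated at the realized horizon. I would add a remark that, if only an upper bound $\bar V\ge V_{T_u}$ is known, one replaces $A_{T_u}$ by $\log(1/\delta)+L_\delta\bar V$ throughout. Theorem~\ref{thm:omd-dynamic-regret} still applies with $\eta_0$ tuned to that upper bound, because the untuned bound \eqref{eq:omd-dynamic-bound-compact} is monotone in $V_{T_u}$. The threshold $T_+$ is also increasing in $A$, so the resulting condition remains sufficient. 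Setting $V_{T_u}=0$ recovers Corollary~\ref{cor:omd-sample-complexity-main} as a consistency check.
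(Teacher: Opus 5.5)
Your proposal is correct and follows essentially the same route as the paper: plug in the tuned bound of Theorem~\ref{thm:omd-dynamic-regret}, square the requirement $2c\sqrt{A_{T_u}/c_\eta}\sqrt{1+c_\eta T_u}\le\varepsilon T_u$ to get the quadratic $\varepsilon^2T_u^2-4c^2A_{T_u}T_u-4c^2A_{T_u}/c_\eta\ge 0$, and take the positive root. Your added justifications are sound and make explicit what the paper leaves implicit: nonnegativity of both sides before squaring, $T_-\le 0$, and the remark that $A_{T_u}$ and the tuned $\eta_0$ depend on the horizon, so the condition is implicit unless you fix an upper bound $\bar V\ge V_{T_u}$.
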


\subsection{Feedback Simulation Protocol}
\label{sec:feedback-simulation-control}
\vspace{-5pt}
Since the dataset does not contain online feedback to generated summaries, we simulate scalar user feedback in order to evaluate adaptation. For each user $u$ and round $t$, we maintain a hidden ground-truth preference vector $\mathbf w_{u,t} \in \Delta^{K-1}$, which may vary with time to model preference drift. After \prefer selects evidence sentences for product $p_t$, the selected sentences induce an aspect profile $\mathbf z_t \in \Delta^{K-1}$, as described in Section \ref{sec:online-preference-adaptation}. The latent utility of the displayed summary is computed as
\[
q_t = \mathbf w_{u,t}^\top \mathbf z_t  + \epsilon_t,
\qquad
\epsilon_t \sim \mathcal N(0,\sigma^2),
\]
where $\epsilon_t$ adds noise to reflect imperfect or inconsistent user responses. We then convert this utility into bounded feedback using
\begin{align}
\label{eq:feedbackprotocol}
    f_t=\frac{1}{1+ \exp\!\left( -\gamma \left[ q_t - \frac{1}{K}\sum_{k=1}^K w_{u,t,k} \right] \right)},
\qquad f_t \in [0,1].
\end{align}
Here, $\gamma>0$ controls how sharply feedback responds to alignment between the selected aspect profile and the user's current preference. The resulting scalar feedback $f_t$ is then passed to the online update rule, while $\mathbf w_{u,t}$ remains hidden from the model.

Finally, Algorithm~\ref{alg:prefer} summarizes the complete feedback-adaptive interaction loop. The algorithm combines personalized evidence selection, hierarchical summarization, scalar feedback observation, and entropic OMD preference updates.
\begin{algorithm}[h]
\caption{\prefer: Personalized Feedback-Adaptive Review Summarization}
\label{alg:prefer}
\begin{algorithmic}[1]
\Require latent aspect representations $\{(\boldsymbol{\phi}_i,\tilde{\mathbf{s}}_i^{\mathrm{PCA}},\ell_i)\}_{i\in\mathcal{D}_{p}}$ for each product corpus, extraction budgets $(k,L)$, tradeoff parameter $\lambda$, extractor parameter $\beta_{\mathrm{ext}}$ for stochastic extraction, rewriting module $g_\theta$, step size $\eta$, initial preference estimate $\widehat{\mathbf{w}}_{u,1}\in\Delta^{K-1}$
\For{$t=1,2,\dots,T_u$}
    \State Observe product $p_t$ and review corpus $\mathcal{D}_{p_t}$
    \State Compute relevance scores $\mathrm{Rel}_{i,t}$ as defined in Section \ref{sec:personalized-extractive-selection}
    \State Select extractive set of sentences $S_t$ using either Algorithm~\ref{alg:mmr-select} or Algorithm~\ref{alg:mmr-select-gumbel}
    \State Generate personalized summary $y_t$ using the hierarchical rewriting map in Section \ref{sec:contextual-abstractive-rewriting}
    \State Show $y_t$ to the user and observe scalar feedback $f_t\in[0,1]$ using Eq. \eqref{eq:feedbackprotocol}
    \State Compute aspect profile $\mathbf{z}_t$ discussed in Section \ref{sec:online-preference-adaptation}
    \State Update $\widehat{\mathbf{w}}_{u,t+1}$ using the OMD update Eq. \eqref{eq:exp-gradient-update}
\EndFor
\end{algorithmic}
\end{algorithm}

\section{Additional Experimental Diagnostics}
\label{app:experiments}
\vspace{-5pt}
\subsection{Statistical Details of our Case Study}
\label{app:casestudy}
\vspace{-7pt}
\begin{figure}
    \centering
    \includegraphics[width=\linewidth, trim={0 0 0 0},clip]{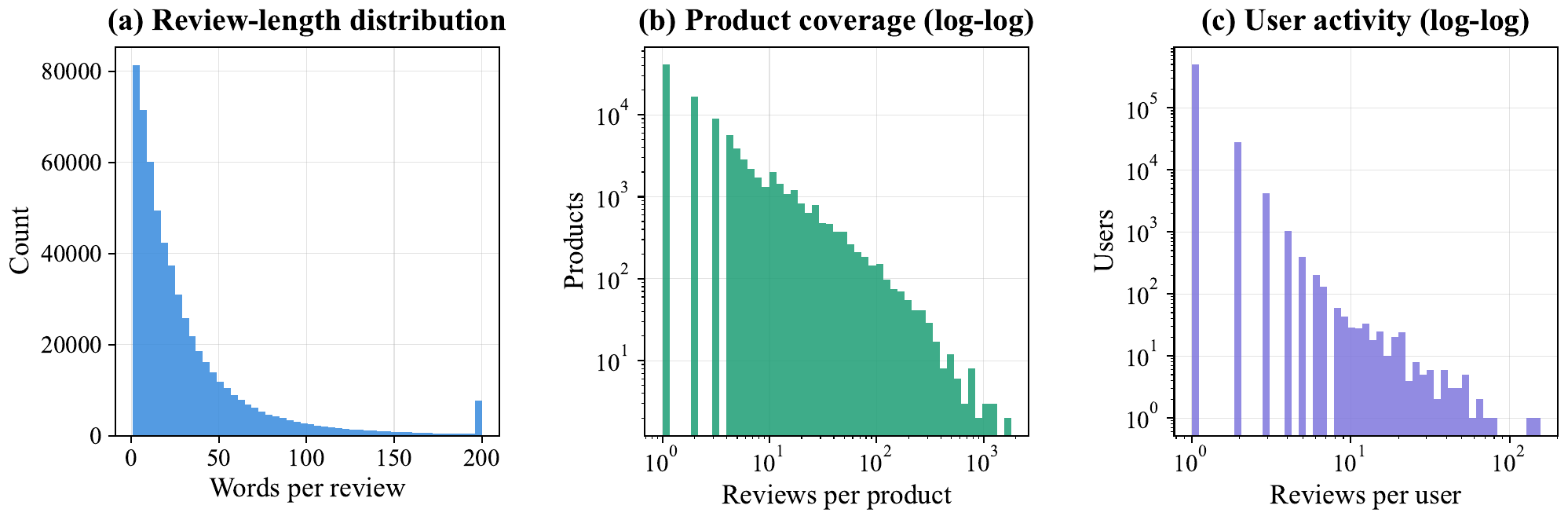}
    \vspace{-15pt}
    \caption{Corpus diagnostics for the \textsc{All\_Beauty} case study.}
    \label{fig:corpus_diagnostics}
\end{figure}

Figure~\ref{fig:corpus_diagnostics} summarizes the structure of the preprocessed \textsc{All\_Beauty} corpus. Panel~(a) shows that review lengths can vary a lot: most reviews are short, while a smaller number are substantially longer. This motivates a length-aware evidence extraction stage before summarization. Panel~(b) shows that product-level review coverage is long-tailed, with many products having only a few reviews and a small number of products having large evidence pools. Panel~(c) shows a similar long-tailed pattern in user activity, where most users contribute few reviews while a smaller set of users contribute many. Together, these patterns motivate that personalization must adapt to users with varying amounts of historical data.

\begin{table}[t]
\centering
\caption{Preprocessed corpus statistics for the review-level and sentence-level tables. In the sentence-level table, each row or text unit corresponds to one sentence; in the review-level table, each row corresponds to one full review.}
\label{tab:preprocessed_corpus_stats}
\small
\begin{tabular}{lcc}
\toprule
\textbf{Statistic} & \textbf{Review-level table} & \textbf{Sentence-level table} \\
\midrule
Number of text units & 583{,}190 & 1{,}336{,}813 \\
Mean words per text unit & 32.8 & 13.3 \\
Mean text units per user & 1.09 & 2.84 \\
Max text units per product & 1{,}809 & 4{,}852 \\
Max text units per user & 156 & 1{,}087 \\
\bottomrule
\end{tabular}
\end{table}

Table~\ref{tab:preprocessed_corpus_stats} reports the statistics of the two preprocessed tables. The sentence-level table contains more than twice as many text units as the review-level table because each review is split into multiple sentences. As expected, sentence-level units are shorter, with a mean length of 13.3 words compared to 32.8 words for full reviews. 

\paragraph{Quality of the discovered aspect space.} Figure~\ref{fig:aspect_diagnostics} summarizes the sentence-level latent aspect representation.  Panel~(a) plots the distribution of the top aspect score, $\max_k \phi_{ik}$, for each sentence $i$. Large values indicate that a sentence is strongly associated with one dominant latent aspect rather than being spread uniformly across all aspects.  Panel~(b) reports the corpus-level mass of the discovered latent aspects, $m_k = \frac{\sum_i \phi_{ik}}{\sum_{j=1}^{K}\sum_i \phi_{ij}}$, sorted in decreasing order. The mass is distributed across several aspects rather than collapsing into a single cluster, while still reflecting that some product themes occur more frequently than others.  Together, these diagnostics show that the learned aspect space is compact, interpretable, and non-degenerate.

\begin{figure}
    \centering
    \includegraphics[width=\linewidth]{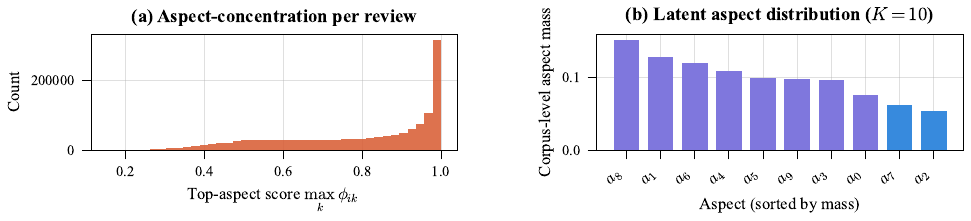}
    \caption{Diagnostics for the sentence-level latent aspect space. Panel~(a) shows the distribution of the largest aspect score $\max_k \phi_{ik}$ for each sentence. Panel~(b) shows the corpus-level mass of each discovered aspect, sorted in decreasing order.}
    \label{fig:aspect_diagnostics}
\end{figure}

\begin{figure}
    \centering
    \includegraphics[width=\linewidth]{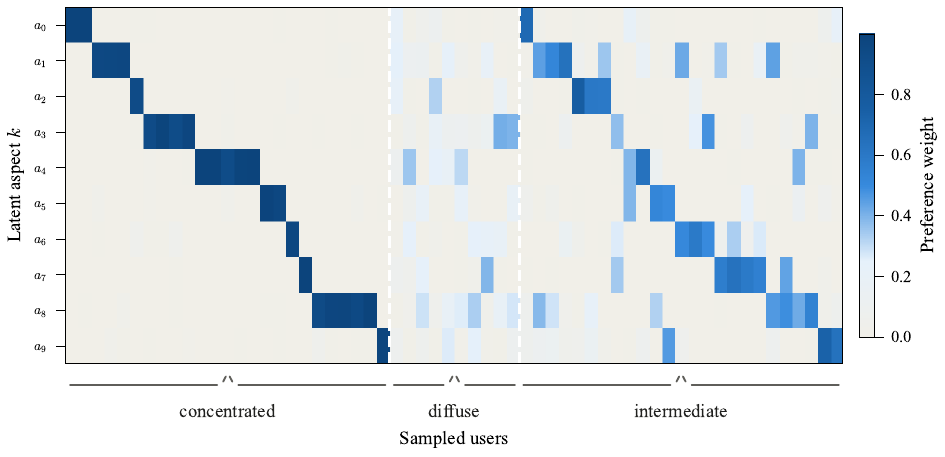}
    \caption{User-level heterogeneity in discovered aspect preferences.}
    \label{fig:user_heterogeneity}
    \vspace{-15pt}
\end{figure}

\paragraph{Motivating user heterogeneity in aspect preferences.} 
Figure~\ref{fig:user_heterogeneity} visualizes empirical user-level aspect profiles. For each user $u$, we construct an empirical preference vector by averaging the discovered aspect vectors over the sentences associated with that user: $\widehat{\mathbf w}_u = \frac{1}{|\mathcal{I}_u|} \sum_{i\in \mathcal{I}_u}
\boldsymbol{\phi}_i$, where $\mathcal{I}_u$ denotes the set of sentences associated with user $u$.

The heatmap plots these user-level profiles with latent aspects on the vertical axis and sampled users on the horizontal axis. Each column corresponds to one sampled user, and each cell represents the empirical preference weight $\widehat w_{u,k}$ assigned by that user to aspect $k$. Darker cells, therefore, indicate that a user places greater mass on the corresponding aspect.

To make the structure easier to interpret, users are grouped according to the normalized entropy of their empirical preference vectors. For each user $u$, we compute $\bar H(\widehat{\mathbf w}_u) = -\frac{1}{\log K} \sum_{k=1}^{K} \widehat w_{u,k}\log \widehat w_{u,k}$, where $K$ is the number of discovered aspects. The normalization by $\log K$ makes the entropy lie in $[0,1]$. Lower values indicate that the user places most of their preference mass on a small number of aspects, while higher values indicate that the user's preference mass is more evenly distributed across aspects. We use empirical entropy quantiles to select representative users. The \textbf{concentrated group} is sampled from users in the bottom entropy quantile, i.e., $\bar H(\widehat{\mathbf w}_u) < Q_{0.20}(\bar H)$, so these users have relatively sharp aspect preferences. The \textbf{diffuse group} is sampled from users in the top entropy quintile, $\bar H(\widehat{\mathbf w}_u) > Q_{0.80}(\bar H)$, so their preferences are spread more broadly across aspects. The \textbf{intermediate group} is sampled from the middle entropy range, $Q_{0.35}(\bar H) \le \bar H(\widehat{\mathbf w}_u) \le Q_{0.65}(\bar H)$,
and is further arranged by dominant aspect to show variation in which aspect receives the largest weight. This grouping makes two forms of heterogeneity visible: (i) \textit{users differ in the aspects they emphasize}, and (ii) \textit{they also differ in how concentrated or diffuse their preferences are}.

This empirical variation motivates the need for personalized review summarization. A generic product-level summary would treat all users as if they cared about the same aspects, whereas the figure shows that different users can emphasize substantially different dimensions of the same product. Our framework, therefore, conditions extraction and summary generation on user-specific aspect preferences rather than relying on a single generic representation.

\subsection{Diagnostics for Fixed-Preference Online Adaptation}
\label{app:experiments_fixedpref}
\vspace{-5pt}
We next evaluate whether the online preference-learning component of \prefer improves personalization reliably over repeated interactions. While the previous subsection shows that different preference vectors induce different summaries for a fixed product, this experiment studies the dynamic setting: the system begins with an initial estimate of the user's preference vector, receives scalar feedback after each generated summary, and updates the preference estimate using entropic online mirror descent. The goal is to test whether the learned preference vector becomes increasingly aligned with the target user preference and whether this behavior is stable across 10 random seeds.


We compare static and online variants of \prefer under the two extractors introduced in Section~\ref{sec:personalized-extractive-selection}. The static variants use the initial preference estimate throughout the entire interaction sequence and do not update from feedback. The online variants use the same initialization but update the preference vector using the OMD rule after each round. This gives four main variants: (i) \textsc{Static-MMR}, (ii) \textsc{Static-Gumbel}, (iii) \textsc{\prefer-MMR}, and (iv) \textsc{\prefer-Gumbel}. Here, \textsc{Static-MMR} and \textsc{Static-Gumbel} isolate the effect of using a personalized extractor without online adaptation, while \textsc{\prefer-MMR} and \textsc{\prefer-Gumbel} evaluate the full feedback-driven pipeline.

\paragraph{Evaluation metrics.} We report the following two alignment metrics besides the regret formulation defined in Section \ref{sec:theory}:
\begin{enumerate}[noitemsep, topsep=0pt]
    \item \textbf{Content Evidence Evaluation}: The first measures whether the selected evidence at round $t$ is aligned with the target preference: $A_t^{\mathrm{evid}}=\frac{\mathbf w_u^{\top}\mathbf z_t}{\|\mathbf w_u\|_2\|\mathbf z_t\|_2}$, where $\mathbf z_t$ is the aggregate aspect profile of the selected evidence. This metric evaluates the content actually passed to the rewriting module.
    \item \textbf{Preference Estimate Evaluation}: The second metric measures whether the learned preference vector approaches the target preference vector: $A_t^{\mathrm{pref}}=\frac{\mathbf w_u^{\top}\widehat{\mathbf w}_{u,t}} {\|\mathbf w_u\|_2\|\widehat{\mathbf w}_{u,t}\|_2}$. A larger value indicates that the learned user profile places more mass on the aspects emphasized by the target user.
\end{enumerate}
Thus, $A_t^{\mathrm{evid}}$ measures whether the selected review evidence becomes more personalized over time, while $A_t^{\mathrm{pref}}$ measures learning in the preference space.

\begin{figure}
    \centering
    \includegraphics[width=1\linewidth]{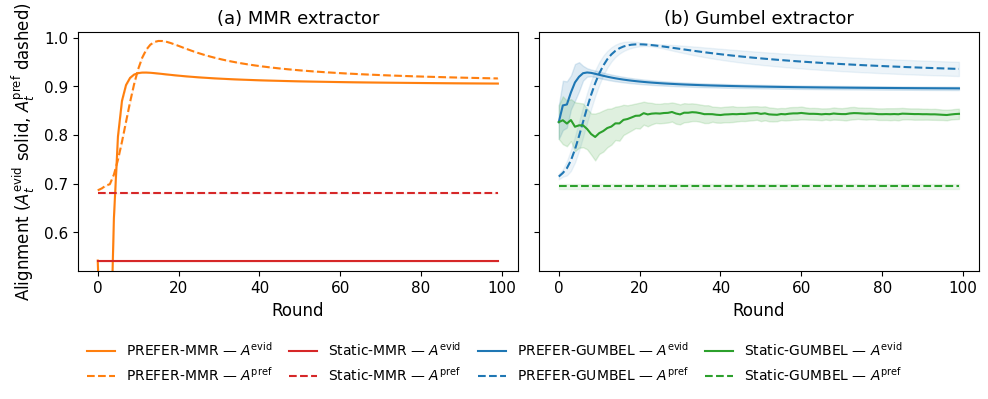}
    \caption{Convergence and robustness across random seeds for deterministic MMR and Gumbel-priority extraction. Solid curves show selected-evidence alignment $A_t^{\mathrm{evid}}$, dashed curves show preference alignment $A_t^{\mathrm{pref}}$, and shaded bands denote variability across random seeds.}
    \label{fig:convergence_seeds}
\end{figure}

Figure~\ref{fig:convergence_seeds} shows that the online \prefer variants improve both preference alignment $A_t^{\mathrm{pref}}$ and selected-evidence alignment $A_t^{\mathrm{evid}}$ across random seeds, while the static baselines remain nearly flat. This indicates that scalar feedback is sufficient to update the user preference estimate and improve the evidence selected for summarization. The MMR extractor converges sharply (without any variability across random seeds) because it is deterministic, whereas the Gumbel extractor exhibits smoother trajectories and wider early uncertainty due to stochastic evidence selection (more exploration in the early phases). Overall, both extractors show that \textit{feedback-driven personalization improves alignment over static personalization}.

\begin{figure}
    \centering
    \includegraphics[width=1\linewidth]{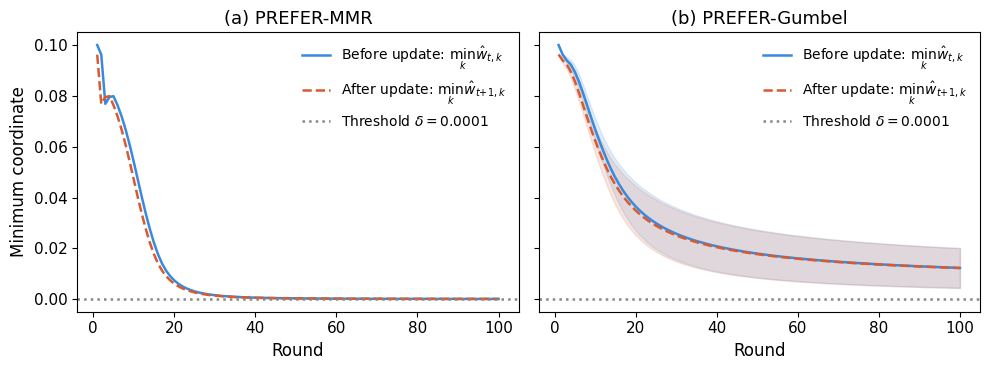}
    \caption{Truncated-simplex diagnostic for the OMD preference update. The curves show the minimum coordinate of the preference vector before and after each update, while the dotted horizontal line marks the threshold $\delta=10^{-4}$ from Assumption~\ref{ass:omd-delta}.}
    \label{fig:truncated_simplex_diagnostic}
\end{figure}

Figure~\ref{fig:truncated_simplex_diagnostic} verifies the interiority condition used in Assumption~\ref{ass:omd-delta}. Starting from the uniform initialization, the minimum coordinate of the learned preference vector decreases as OMD concentrates mass on feedback-relevant aspects, but it remains far above the threshold $\delta=10^{-4}$ throughout the horizon. Thus, the iterates remain inside the truncated simplex during the experiment, matching the condition required by the regret analysis.

\subsection{Diagnostics for Preference Drift Adaptation}
\vspace{-5pt}
\begin{figure}
    \centering
    \includegraphics[width=1\linewidth]{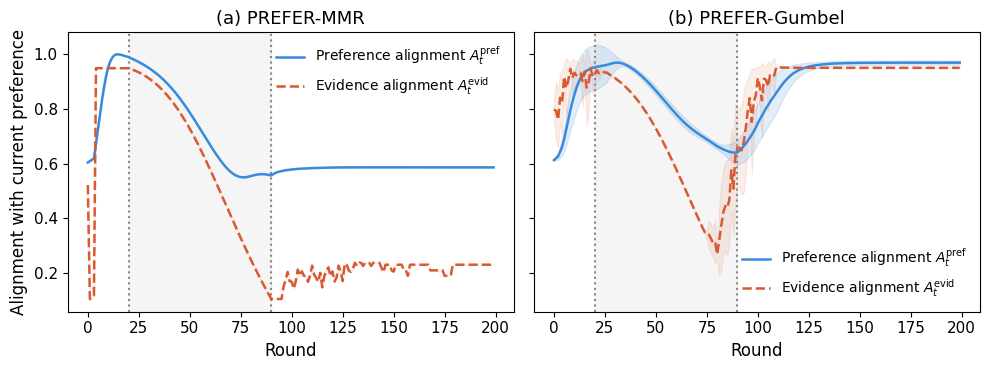}
    \caption{Adaptation to within-user preference drift where alignment is computed against the current drifting target $\mathbf w_{u,t}$.}
    \label{fig:preference_drift_alignmentAt}
\end{figure}
Figure~\ref{fig:preference_drift_alignmentAt} reports the adaptation under the controlled within-user preference shift, with both metrics computed against the current target preference $\mathbf w_{u,t}$. Before the drift window, both extractors achieve high preference and evidence alignment, indicating that the learned profile and selected evidence match the initial user interest. As the oracle preference moves from the initial aspect toward the new aspect, alignment decreases because the previously learned profile and selected evidence are no longer optimal for the current target. After the drift ends, the two extractors behave differently. With Gumbel-priority extraction, both $A_t^{\mathrm{pref}}$ and $A_t^{\mathrm{evid}}$ recover to high values, showing that scalar feedback is sufficient to reorient the learned preference vector and the selected review evidence toward the new interest. With deterministic MMR, preference alignment partially recovers, but evidence alignment remains substantially lower, suggesting that deterministic diversity-based selection can be less responsive after a preference shift. Overall, the drift experiment shows that \prefer can adapt to changing user interests, with stochastic extraction providing more reliable post-drift recovery.

\label{app:experiments_prefdrift}
\begin{figure}
    \centering
    \includegraphics[width=1\linewidth]{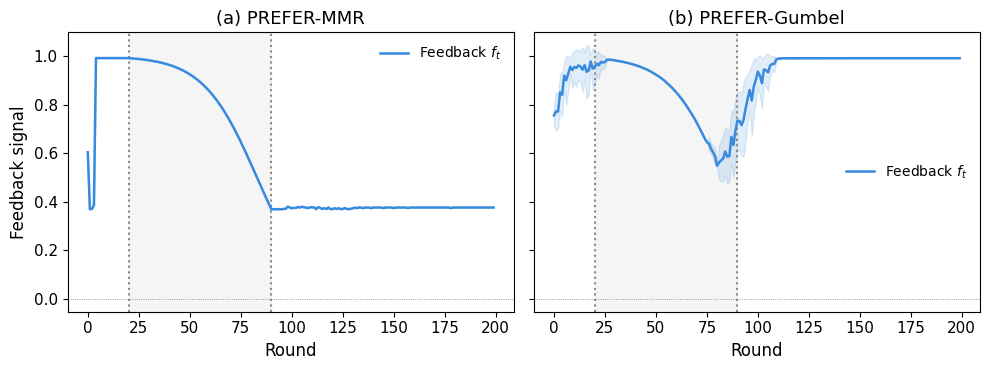}
    \caption{Feedback signal $f_t$ under within-user preference drift.}
    \label{fig:drift_feedback}
\end{figure}

\begin{figure}
    \centering
    \includegraphics[width=1\linewidth]{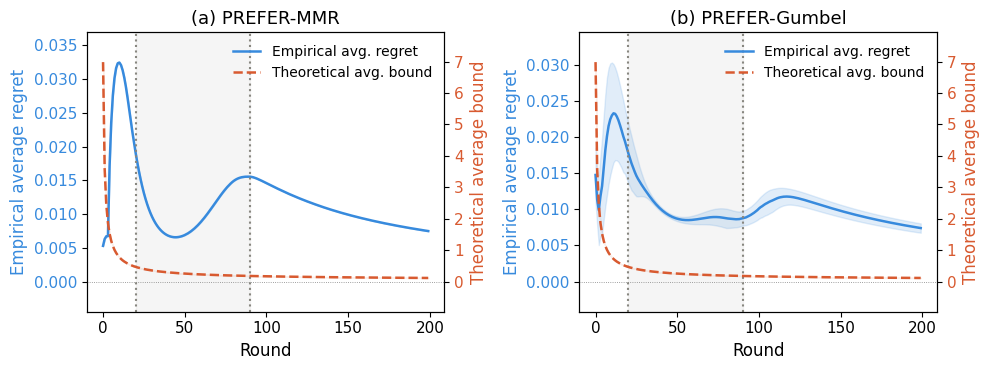}
    \caption{Average regret diagnostic under within-user preference drift where the shaded bands denote variability across random seeds, and dotted vertical lines mark the beginning and end of the drift window.}
    \label{fig:drift_avg_regret}
\end{figure}
\begin{figure}
    \centering
    \includegraphics[width=1\linewidth]{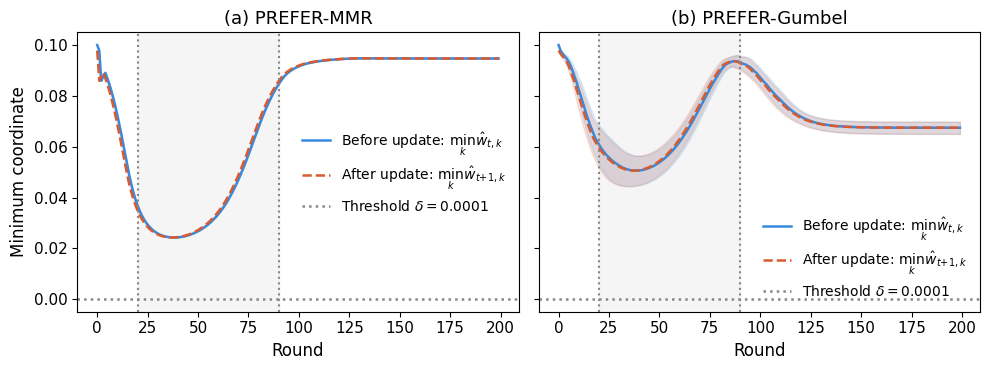}
    \caption{Truncated-simplex diagnostic under within-user preference drift. Curves show the minimum coordinate of the learned preference vector before and after each OMD update, with shaded bands denoting variability across random seeds; the dotted horizontal line marks the threshold $\delta=10^{-4}$, and dotted vertical lines mark the beginning and end of the drift window.}
    \label{fig:preference_drift_simplex}
\end{figure}
Figure~\ref{fig:drift_feedback} shows that feedback decreases during the preference-shift window and recovers more strongly under Gumbel extraction than under deterministic MMR.
Figure~\ref{fig:drift_avg_regret} shows that the empirical average regret rises near the preference-shift window, reflecting the mismatch between the previously learned profile and the new target preference, and then decreases as feedback from the new regime accumulates.  Because the target preference changes over time, the bound from Theorem \ref{thm:omd-dynamic-regret} is used as a stability diagnostic rather than a fixed-comparator guarantee. Finally, Figure~\ref{fig:preference_drift_simplex} checks the interiority condition used in Assumption~\ref{ass:omd-delta}. Although OMD reallocates mass across aspects as feedback changes, the minimum coordinate remains above the imposed threshold $\delta$ throughout the experiment. Thus, the iterates remain inside the truncated simplex even under preference drift, matching the stability condition required by the theoretical analysis.

\section{Compute Resources}
\label{app:compute-resources}
\vspace{-5pt}
Table~\ref{tab:compute_resources} reports the compute resources used by the main stages of our experimental pipeline. We separate the pipeline into three groups: data preprocessing, offline sentence-level aspect discovery, and online personalization experiments. The preprocessing step constructs the sentence-level table from the review corpus. The embedding-construction steps then encode both review-level and sentence-level text using the sentence embedding model. These embeddings are subsequently used for the offline aspect-discovery pipeline, which includes loading the embedding matrix, running embedding diagnostics, fitting PCA, selecting the number of aspect clusters, fitting the final KMeans model, and computing soft aspect assignments.

The last block reports the online personalization experiments. The cross-user heterogeneity experiment measures the cost of producing evidence selections for multiple synthetic preference profiles for a fixed product. The convergence and preference-drift rows report representative instrumented runs of the online learning pipeline, where the learned user profile is updated using entropic OMD from controlled scalar feedback. These representative runs use the same code path as the full plots, but with a single seed; the total runtime scales approximately linearly with the number of seeds, rounds, extractors, and policies.

Wall-clock time is measured using an instrumented \texttt{psutil}-based tracker. The RSS column reports the change in resident set size of the running Python process during each instrumented block. RSS is the amount of physical memory currently occupied by the process in RAM. Thus, a positive RSS change means that the process used more resident memory at the end of the block than at the beginning, while a negative RSS change means that memory was released or garbage-collected during the block. Negative RSS changes should therefore not be interpreted as negative memory usage; they only indicate a decrease in resident memory relative to the start of that measured block.
\vspace{-7pt}

\begin{table}[t]
\centering
\small
\caption{Compute resources for the main experimental pipeline. MC1 denotes a Windows laptop with 16 logical CPU cores, 31.26 GB RAM, and an RTX 5070 Laptop GPU; MC2 denotes a macOS arm64 worker with 8 CPU cores and 8 GB RAM.}
\label{tab:compute_resources}
\resizebox{\linewidth}{!}{
\begin{tabular}{lllrrr}
\toprule
\textbf{Stage} 
& \textbf{Component} 
& \textbf{Input size/configuration}
& \textbf{Worker} 
& \textbf{Wall time} 
& \textbf{RSS change} \\
\midrule

\multicolumn{6}{l}{\textit{Data preprocessing and embedding construction}} \\
\midrule

Preprocessing 
& Sentence table construction 
& 583,190 reviews
& MC1 
& 1.30 min 
& +0.90 GB \\

\midrule
\multicolumn{6}{l}{\textit{Offline sentence-level aspect discovery}} \\
\midrule

Sentence embeddings 
& Embedding creation 
& 1,336,813 sentences; batch size 128 
& MC1 
& 58.06 min 
& +4.09 GB \\

Review embeddings 
& Embedding creation 
& 583,190 reviews; batch size 128 
& MC1 
& 38.18 min 
& +1.51 GB \\

Aspect discovery 
& Load embeddings 
& $1{,}336{,}813 \times 384$ float32 
& MC2 
& 0.026 min 
& $-0.135$ GB \\

Aspect discovery 
& Embedding diagnostics 
& $1{,}336{,}813 \times 384$ embeddings 
& MC2 
& 0.264 min 
& $-0.012$ GB \\

Aspect discovery 
& PCA diagnostics 
& $1{,}336{,}813 \times 384$ embeddings 
& MC2 
& 0.627 min 
& $+0.345$ GB \\

Aspect discovery 
& PCA transform 
& $d_{\mathrm{PCA}}=17$ 
& MC2 
& 0.433 min 
& $+0.434$ GB \\

Aspect discovery 
& K-selection 
& $K\in\{5,10,15,20,25\}$
& MC2 
& 0.226 min 
& $-0.478$ GB \\

Aspect discovery 
& Final KMeans 
& $K=10$, $d_{\mathrm{PCA}}=17$, \texttt{n\_init}=10 
& MC2 
& 0.191 min 
& $+0.081$ GB \\

Aspect discovery 
& Soft-assignment diagnostics 
& $\tau\in\{1,5,10,15,20,25,29.48\}$ 
& MC2 
& 0.029 min 
& $+0.029$ GB \\

Aspect discovery 
& Final soft assignment 
& $K=10$, $\tau=29.48$ 
& MC2 
& 0.028 min 
& $-0.004$ GB \\

\midrule
\multicolumn{6}{l}{\textit{Online experiments}} \\
\midrule

Online learning 
& Cross-user heterogeneity 
& 4 preference profiles; 1 product; MMR
& MC2 
& 0.161 min 
& $+0.263$ GB \\

Online learning 
& Convergence run 
& 1 seed; 100 rounds; Gumbel; OMD 
& MC2 
& 1.63 min 
& $+0.098$ GB \\

Online learning 
& Preference drift 
& 1 seed; 200 rounds; Gumbel; OMD
& MC2 
& 2.895 min 
& $-0.254$ GB \\

\bottomrule
\end{tabular}
}
\vspace{-15pt}
\end{table}

\section{Broader Impacts}
\label{sec:broader-impacts}
\vspace{-5pt}
Feedback-adaptive personalized review summarization can help users navigate large review corpora more efficiently by surfacing evidence aligned with their preferences, reducing information overload, and supporting more informed product comparisons. However, personalization also introduces several risks:

\begin{itemize}
    \item \textbf{Over-personalization and evidence narrowing.} A feedback-adaptive system may repeatedly emphasize aspects that match the user's current profile while suppressing other relevant information, such as negative experiences, safety concerns, or minority opinions. This can be mitigated by displaying the selected review evidence alongside the generated summary, enforcing diversity constraints during evidence selection, and allowing users to reset or modify their learned preference profile.

    \item \textbf{Bias amplification.} If the review corpus contains demographic, linguistic, platform-specific, or popularity biases, the learned aspect representation and selected evidence may inherit or amplify these biases. This can be mitigated by auditing aspect clusters and selected evidence across product categories and user groups, monitoring whether certain viewpoints are systematically underrepresented, and adding constraints that preserve coverage of important dissenting or low-frequency evidence.

    \item \textbf{Privacy risks from feedback traces.} Because the system updates from user feedback, stored feedback histories may reveal sensitive or fine-grained user preferences. This can be mitigated through data minimization, anonymization, retention limits, and user controls for inspecting, deleting, or disabling feedback-based personalization.

    \item \textbf{Manipulative or commercially biased summaries.} Personalized summaries could be misused to generate selectively framed product descriptions that persuade users rather than faithfully summarize the available evidence. This can be mitigated by requiring summaries to remain grounded in retrieved review evidence, reporting which evidence was used, and monitoring whether negative or safety-related evidence is being systematically omitted.

    \item \textbf{Incorrect preference updates.} Noisy or ambiguous feedback may cause the system to update the user profile in the wrong direction, producing summaries that do not reflect the user's actual preferences. This can be mitigated by using conservative update rules, exposing the inferred preference profile to users, and allowing users to correct, reset, or override learned preferences.
\end{itemize}

Our experiments are conducted in an offline research setting with synthetic feedback. Therefore, while the proposed framework highlights mechanisms for feedback-adaptive personalization, deployment would require additional evaluation of privacy, robustness, transparency, and fairness under real user interactions.

\section{Proofs}
\label{app:proofs}
\vspace{-5pt}
\begin{proof}[Proof of Proposition \ref{prop:submodular-extractive}]
Now let $A\subseteq B\subseteq\mathcal D_{p_t}$ and let $j\notin B$. Since $\mathrm{sim}(i,j)\ge 0$ for all $i,j$,
\[
    \sum_{i\in A}\mathrm{sim}(i,j) \le \sum_{i\in B}\mathrm{sim}(i,j).
\]
Substituting into the true marginal gain ($\Delta_t(j\mid S)$) from Section \ref{sec:personalized-extractive-selection}, yields $\Delta_t(j\mid A)\ge\Delta_t(j\mid B)$, which is exactly the diminishing-returns property. Hence $J_t(\cdot;\widehat{\mathbf{w}}_{u,t})$ is submodular.

For monotonicity, note that
\[
J_t(S\cup\{j\};\widehat{\mathbf{w}}_{u,t})
\ge
J_t(S;\widehat{\mathbf{w}}_{u,t})
\quad\Longleftrightarrow\quad
\Delta_t(j\mid S)\ge 0.
\]
Thus, if condition $\lambda\,\mathrm{Rel}_{j,t} \ge (1-\lambda)\sum_{i\in S}\mathrm{sim}(i,j)$, $\forall S\subseteq \mathcal D_{p_t},\ \forall j\notin S$ holds for every feasible addition, then every marginal gain is nonnegative, and $J_t(\cdot;\widehat{\mathbf{w}}_{u,t})$ is monotone.
\end{proof}

\begin{proof}[Proof of Proposition \ref{prop:mmr-monotone}]
Since $A\subseteq B$, the set over which the maximum is taken for $A$ is contained in the set for $B$, so $\max_{i\in A}\mathrm{sim}(i,j) \le \max_{i\in B}\mathrm{sim}(i,j)$ follows immediately. Substituting into the definition of the MMR score gives $a_{t,\tau}(j;A) \ge a_{t,\tau}(j;B)$.
\end{proof}

\begin{proof}[Proof of Proposition \ref{prop:gumbel-boltzmann}]
Write $\mu_j := \beta_{\mathrm{ext}}\,a_{t,\tau}(j)$, $j\in\mathcal{C}_{t,\tau}$, so that
\[
\xi_{t,\tau,j}=\mu_j+g_{t,\tau,j}.
\]
Since each $g_{t,\tau,j}$ is standard Gumbel, its cumulative distribution function and its density are respectively,
\[
F(x)=\exp\!\bigl(-e^{-x}\bigr), \qquad f(x)=e^{-x}\exp\!\bigl(-e^{-x}\bigr).
\]

Fix $j\in\mathcal{C}_{t,\tau}$. The event $\{s_{t,\tau}=j\}$ is equivalent to $\mu_j+g_{t,\tau,j}\ge \mu_{j'}+g_{t,\tau,j'}$,  $\text{for all } j'\neq j$. Therefore, 
\begin{align*}
\mathbb{P}(s_{t,\tau}=j \mid g_{t,\tau,j}=x)
&= \mathbb{P}\!\left( \mu_j+x \ge \mu_{j'}+g_{t,\tau,j'} \text{ for all } j'\neq j \right) \\
&= \prod_{j'\neq j} \mathbb{P}\!\left( g_{t,\tau,j'} \le x+\mu_j-\mu_{j'} \right) \\
&= \prod_{j'\neq j} F\!\left(x+\mu_j-\mu_{j'}\right).
\end{align*}

But $g_{t,\tau,j}$ is itself random. To obtain the overall probability, we average over all possible values of $x$. By the law of total probability for continuous random variables,
\[
\mathbb{P}(s_{t,\tau}=j) = \int_{-\infty}^{\infty} \mathbb{P}(s_{t,\tau}=j \mid g_{t,\tau,j}=x)\, f(x)\, dx,
\]
where $f$ is the density of $g_{t,\tau,j}$. Hence, we obtain :
\[
\mathbb{P}(s_{t,\tau}=j) = \int_{-\infty}^{\infty} f(x)\prod_{j'\neq j}F(x+\mu_j-\mu_{j'})\,dx.
\]
Substituting the Gumbel density and CDF,
\begin{align*}
\mathbb{P}(s_{t,\tau}=j)
&= \int_{-\infty}^{\infty} e^{-x}e^{-e^{-x}} \prod_{j'\neq j} \exp\!\left(-e^{-(x+\mu_j-\mu_{j'})}\right)\,dx \\
&= \int_{-\infty}^{\infty} e^{-x} \exp\!\left( - e^{-x} \Bigl[ 1+\sum_{j'\neq j}e^{-(\mu_j-\mu_{j'})} \Bigr]\right)\,dx \\
&= \int_{-\infty}^{\infty} e^{-x} \exp\!\left( - e^{-x}e^{-\mu_j}\sum_{j'\in\mathcal{C}_{t,\tau}}e^{\mu_{j'}} \right)\,dx.
\end{align*}
Make the change of variables
\[
u = e^{-x}e^{-\mu_j}\sum_{j'\in\mathcal{C}_{t,\tau}}e^{\mu_{j'}}, \qquad du = - e^{-x}e^{-\mu_j}\sum_{j'\in\mathcal{C}_{t,\tau}}e^{\mu_{j'}}\,dx.
\]
This gives
\begin{align*}
\mathbb{P}(s_{t,\tau}=j) = \frac{e^{\mu_j}}{\sum_{j'\in\mathcal{C}_{t,\tau}}e^{\mu_{j'}}} \int_{0}^{\infty} e^{-u}\,du = \frac{e^{\mu_j}}{\sum_{j'\in\mathcal{C}_{t,\tau}}e^{\mu_{j'}}},
\end{align*}
since $\int_0^\infty e^{-u}\,du=1$. Recalling that $\mu_j=\beta_{\mathrm{ext}}a_{t,\tau}(j)$ yields \eqref{eq:gumbel-softmax}.
\end{proof}

\begin{proof}[Proof of Lemma \ref{lem:bounded-feedback}]
The claim $\mathbf z_t\in\Delta^{K-1}$ is true, since $\mathbf z_t$ is a convex combination of vectors $\boldsymbol{\phi}_i\in\Delta^{K-1}$, as defined in Section \ref{sec:online-preference-adaptation}. For the centered feedback, if $f_t\in[0,1]$ and $b_t\in[0,1]$, then
\[
-1 \le f_t-b_t \le 1,
\]
so $\widetilde f_t\in[-1,1]$. If clipping is applied at level $c$, then by construction $|\widetilde f_t|\le c$.
\end{proof}

\begin{proof}[Proof of Lemma \ref{lem:surrogate-loss-properties}]
Because $\mathbf w,\mathbf z_t\in\Delta^{K-1}$, we have
\[
0\le \mathbf w^\top \mathbf z_t\le 1.
\]
Multiplying by $-\widetilde f_t$ gives
\[
-|\widetilde f_t| \le -\widetilde f_t\,\mathbf w^\top \mathbf z_t \le |\widetilde f_t|,
\]
\end{proof}

\begin{proof}[Proof of Lemma \ref{lem:dualofl1}]
\label{proof:lemdualofl1}
By definition of the dual norm in Eq. \eqref{eq:dual-norm-def}, and the primal norm to be $\|\cdot\|_1$, we have
\[
\|\mathbf{g}\|_\ast
=
\sup_{\|\mathbf{v}\|_1\le 1}\mathbf{g}^{\top}\mathbf{v}.
\]

We first show that $\|\mathbf{g}\|_\ast \le \|\mathbf{g}\|_\infty$. For any $\mathbf{v}$ such that $\|\mathbf{v}\|_1\le 1$,
\[
\mathbf{g}^{\top}\mathbf{v} = \sum_{k=1}^K g_k v_k \le \sum_{k=1}^K |g_k||v_k| \le \|\mathbf{g}\|_\infty \sum_{k=1}^K |v_k| = \|\mathbf{g}\|_\infty \|\mathbf{v}\|_1 \le \|\mathbf{g}\|_\infty.
\]
Taking the supremum over all such $\mathbf{v}$ gives $\|\mathbf{g}\|_\ast \le \|\mathbf{g}\|_\infty$.

Next, we show the reverse inequality. Let $k^\star \in \arg\max_{1\le k\le K} |g_k|$. Define $\mathbf{v}\in\mathbb{R}^K$ by
\[
v_k =
\begin{cases}
\operatorname{sgn}(g_{k^\star}), & k = k^\star,\\
0, & k\neq k^\star.
\end{cases}
\]
Then $\|\mathbf{v}\|_1 = 1$, so $\mathbf{v}$ is feasible, and
\[
\mathbf{g}^{\top}\mathbf{v} = g_{k^\star}\operatorname{sgn}(g_{k^\star}) = |g_{k^\star}| = \|\mathbf{g}\|_\infty.
\]
Therefore, $ \|\mathbf{g}\|_\ast = \sup_{\|\mathbf{v}\|_1\le 1}\mathbf{g}^{\top}\mathbf{v} \ge \|\mathbf{g}\|_\infty$.

Combining the two inequalities yields
\[
\|\mathbf{g}\|_\ast = \|\mathbf{g}\|_\infty.
\]
\end{proof}

\begin{proof}[Proof of Lemma \ref{lem:gradient-bound}]
We know that $\mathbf{g}_t = - \widetilde{f_t}\, \mathbf{z}_t$ from Section \ref{sec:online-preference-adaptation}. Hence, $\|\mathbf g_t\|_\infty=|\widetilde f_t|\,\|\mathbf z_t\|_\infty$. Since $\mathbf z_t\in\Delta^{K-1}$ from Lemma \ref{lem:bounded-feedback}, therefore $\|\mathbf z_t\|_\infty \le 1$. Using $|\widetilde f_t|\le c$ yields $\|\mathbf g_t\|_\infty \le c$.
\end{proof}

\begin{proof}[Proof of Lemma \ref{lem:negative-entropy-dgf}]
If $\mathbf{w}\in \mathrm{ri}(\Delta^{K-1})$, then each coordinate satisfies $w_k>0$, so $\log w_k$ is well defined for every $k$. Hence $d(\mathbf{w})$ is well defined on $\mathrm{ri}(\Delta^{K-1})$. Moreover, each scalar function $x\mapsto x\log x$ is continuously differentiable on $(0,\infty)$ with derivative
\[
\frac{d}{dx}(x\log x)=\log x+1.
\]
Therefore, $\nabla d(\mathbf{w})$ exists and is continuous on $\mathrm{ri}(\Delta^{K-1})$, so $d$ is continuously differentiable there.

Next, from Proposition~\ref{prop:kl-bregman}, the Bregman divergence induced by $d$ is $D_d(\mathbf{u}\|\mathbf{w}) =  \mathrm{KL}(\mathbf{u}\|\mathbf{w})$.
Pinsker's inequality then gives
\[
D_d(\mathbf{u}\|\mathbf{w})
=
\mathrm{KL}(\mathbf{u}\|\mathbf{w})
\ge
\frac{1}{2}\|\mathbf{u}-\mathbf{w}\|_1^2,
\]
for all $\mathbf{u},\mathbf{w}\in\mathrm{ri}(\Delta^{K-1})$. Hence $d$ is $1$-strongly convex with respect to the $\ell_1$ norm.
\end{proof}

\begin{proof}[Proof of Proposition \ref{prop:kl-bregman}]
From Lemma \ref{lem:negative-entropy-dgf}, $d(\mathbf{w})=\sum_{k=1}^K w_k \log w_k$. Hence, for each coordinate $k=1,\dots,K$,
\begin{align*}
    \frac{\partial d(\mathbf{w})}{\partial w_k}&=\log w_k + 1,\\
    \nabla d(\mathbf{w}) &= (\log w_1+1,\dots,\log w_K+1)^{\top}.
\end{align*}
Substituting this into the definition of the Bregman divergence gives
\begin{align*}
D_d(\mathbf{u}\|\mathbf{w})
&= d(\mathbf{u})-d(\mathbf{w})-\nabla d(\mathbf{w})^{\top}(\mathbf{u}-\mathbf{w}) \\
&= \sum_{k=1}^K u_k \log u_k - \sum_{k=1}^K w_k \log w_k - \sum_{k=1}^K (\log w_k + 1)(u_k-w_k) \\
&= \sum_{k=1}^K u_k \log u_k - \cancel{\sum_{k=1}^K w_k \log w_k} - \sum_{k=1}^K u_k \log w_k - \sum_{k=1}^K u_k  + \cancel{\sum_{k=1}^K w_k \log w_k} + \sum_{k=1}^K w_k \\
&=\sum_{k=1}^K u_k \log u_k - \sum_{k=1}^K u_k \log w_k - \sum_{k=1}^K u_k + \sum_{k=1}^K w_k.
\end{align*}
Since $\mathbf{u},\mathbf{w}\in\Delta^{K-1}$, we have $\sum_{k=1}^K u_k = \sum_{k=1}^K w_k = 1$. Therefore, the last two terms cancel, yielding
\begin{align*}
D_d(\mathbf{u}\|\mathbf{w}) = \sum_{k=1}^K u_k \log u_k - \sum_{k=1}^K u_k \log w_k = \sum_{k=1}^K u_k \log\frac{u_k}{w_k}.
\end{align*}
\end{proof}

\begin{proof}[Proof of Proposition \ref{prop:eg-update}]
Substituting $\mathbf{g}_t = - \widetilde{f_t}\, \mathbf{z}_t$ from Section \ref{sec:online-preference-adaptation}, and using Proposition \ref{prop:kl-bregman} into \eqref{eq:omd-update}, we obtain
\begin{align}
    \widehat{\mathbf{w}}_{u,t+1}=\arg\min_{\mathbf{w}\in\Delta^{K-1}}\left\{- \eta f_t (\mathbf{z}_t^\top \mathbf{w})+\sum_{k=1}^K w_k \log \frac{w_k}{\widehat{w}_{u,t,k}}\right\}.
    \label{eq:omd-update-expanded}
\end{align}

We solve \eqref{eq:omd-update-expanded} using the Lagrangian
\begin{align*}
    \mathcal{L}(\mathbf{w},\lambda) = - \eta f_t \sum_{k=1}^K z_{t,k} w_k+ \sum_{k=1}^K w_k \log \frac{w_k}{\widehat{w}_{u,t,k}} + \lambda\left(\sum_{k=1}^K w_k - 1\right).
    \label{eq:lagrangian-eg}
\end{align*}
Differentiating with respect to $w_k$ and setting the derivative to zero yields
\begin{align*}
    -\eta f_t z_{t,k}+ \log \frac{w_k}{\widehat{w}_{u,t,k}} + 1 + \lambda &= 0\\
    \log \frac{w_k}{\widehat{w}_{u,t,k}} &= \eta f_t z_{t,k} - 1 - \lambda\\
    w_k&=\widehat{w}_{u,t,k}\exp(\eta f_t z_{t,k})\exp(-1-\lambda).
\end{align*}
Since $\exp(-1-\lambda)$ does not depend on $k$, it acts as a normalization constant. Enforcing $\sum_{k=1}^K w_k = 1$ yields
\begin{align*}
    \exp(-1-\lambda)
    =
    \left(
        \sum_{j=1}^K
        \widehat{w}_{u,t,j}\exp(\eta f_t z_{t,j})
    \right)^{-1},
\end{align*}
and substituting back gives \eqref{eq:exp-gradient-update}.
\end{proof}


\subsection{Proofs of Regret Analysis in Section \ref{sec:theory}}

\begin{proof}[Proof of Lemma \ref{lem:entropy-bregman-lipschitz}]
Fix any $\widehat{\mathbf w}\in\Delta_\delta^{K-1}$ and define $F(\mathbf w):= D_d(\mathbf w,\widehat{\mathbf w})$. By definition of Bregman Divergence, as defined in Eq. \eqref{eq:bregman-general}, we know $D_d(\mathbf w,\widehat{\mathbf w})=d(\mathbf w)-d(\widehat{\mathbf w})-\langle \nabla d(\widehat{\mathbf w}),\mathbf w-\widehat{\mathbf w}\rangle$.  Therefore,
\[
\nabla F(\mathbf w)
=
\nabla d(\mathbf w)-\nabla d(\widehat{\mathbf w}).
\]
Since $\nabla d(\mathbf w) = (\log w_1+1,\dots,\log w_K+1)^\top$, we have
\[
[\nabla F(\mathbf w)]_k
=
(\log w_k+1)-(\log \widehat w_k+1)
=
\log\frac{w_k}{\widehat w_k}.
\]
Because $\mathbf w,\widehat{\mathbf w}\in\Delta_\delta^{K-1}$, for every coordinate $k$ follows $\delta\le w_k,\widehat w_k\le 1$. Hence,
\[
\left| \log\frac{w_k}{\widehat w_k} \right|
\le
\log(1/\delta)
\le
L_\delta = 1+\log(1/\delta).
\]
Thus, $\|\nabla F(\mathbf w)\|_\infty \le L_\delta$, $\forall \mathbf w\in\Delta_\delta^{K-1}$.  Now take any $\mathbf{u},\mathbf{v}\in\Delta_\delta^{K-1}$. Since $\Delta_\delta^{K-1}$ is convex, the entire line segment
\[
\mathbf{w}_\theta := \mathbf{v}+\theta(\mathbf{u}-\mathbf{v}), \qquad \theta\in[0,1],
\]
also lies in $\Delta_\delta^{K-1}$. By the fundamental theorem of calculus applied along this line segment,
\[
F(\mathbf u)-F(\mathbf v)
=
\int_0^1
\left\langle
\nabla F(\mathbf w_\theta),
\mathbf u-\mathbf v
\right\rangle
d\theta .
\]
Taking absolute values and using Hölder's
inequality with the dual norm pair $(\ell_\infty,\ell_1)$, we obtain 
\[
|F(\mathbf u)-F(\mathbf v)|
\le
\int_0^1
\|\nabla F(\mathbf w_\theta)\|_\infty
\|\mathbf u-\mathbf v\|_1
d\theta .
\]
Using $\|\nabla F(\mathbf w_\theta)\|_\infty\le L_\delta$, and  substituting back $F(\mathbf w)=D_d(\mathbf w,\widehat{\mathbf w})$ proves Lemma \ref{lem:entropy-bregman-lipschitz}.
\end{proof}

\begin{lemma}[One-step inequality for entropic OMD]
\label{lem:omd-one-step}
For any comparator $\mathbf{w}_u\in\Delta^{K-1}$, the entropic online mirror descent update satisfies
\begin{align}
    \eta_t\,\mathbf{g}_t^{\top}\bigl(\widehat{\mathbf{w}}_{u,t}^{\mathrm{OMD}}-\mathbf{w}_u\bigr)
    \le
    D_d\!\left(\mathbf{w}_u\,\middle\|\,\widehat{\mathbf{w}}_{u,t}^{\mathrm{OMD}}\right)
    -
    D_d\!\left(\mathbf{w}_u\,\middle\|\,\widehat{\mathbf{w}}_{u,t+1}^{\mathrm{OMD}}\right)
    +
    \frac{\eta_t^2}{2}\|\mathbf{g}_t\|_\infty^2.
    \label{eq:app-omd-one-step}
\end{align}
\end{lemma}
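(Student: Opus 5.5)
The plan is to run the standard mirror-descent argument in two steps. First, the first-order optimality condition of the proximal step \eqref{eq:omd-update} turns into the Bregman three-point identity. Second, the leftover linear term is absorbed using strong convexity of the negative entropy (Lemma~\ref{lem:negative-entropy-dgf}, i.e. Pinsker). The step size here is $\eta_t$ rather than a fixed $\eta$.

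\emph{Optimality condition.} Write $\mathbf x:=\widehat{\mathbf w}_{u,t}^{\mathrm{OMD}}$ and $\mathbf x^+:=\widehat{\mathbf w}_{u,t+1}^{\mathrm{OMD}}$. By Proposition~\ref{prop:eg-update}, $\mathbf x^+$ has strictly positive coordinates whenever $\mathbf x$ does, so $\mathbf x^+$ lies in the relative interior and $d$ is differentiable there. The objective $\eta_t\mathbf g_t^\top\mathbf w+D_d(\mathbf w\|\mathbf x)$ is convex on $\Delta^{K-1}$. Its first-order optimality condition at $\mathbf x^+$ therefore gives, for every $\mathbf w\in\Delta^{K-1}$,
\[
\bigl\langle \eta_t\mathbf g_t+\nabla d(\mathbf x^+)-\nabla d(\mathbf x),\,\mathbf w-\mathbf x^+\bigr\rangle\ge 0 .
\]
For the entropy one can check this directly from the closed form: $\nabla d(\mathbf x^+)-\nabla d(\mathbf x)=-\eta_t\mathbf g_t-c\mathbf 1$ for a scalar $c$, and $\mathbf 1^\top(\mathbf w-\mathbf x^+)=0$, so the inner product is in fact an equality.

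\emph{Three-point identity.} I will apply
\[
\langle\nabla d(\mathbf x)-\nabla d(\mathbf x^+),\mathbf w_u-\mathbf x^+\rangle=D_d(\mathbf w_u\|\mathbf x^+)+D_d(\mathbf x^+\|\mathbf x)-D_d(\mathbf w_u\|\mathbf x),
\]
which follows by expanding the definition \eqref{eq:bregman-general}. One caveat: if $\mathbf w_u$ has zero coordinates, the KL divergence is still well defined with the convention $0\log 0=0$, and Assumption~\ref{ass:omd-delta} removes the issue anyway. Combining the identity with the optimality condition gives
\[
\eta_t\mathbf g_t^\top(\mathbf x^+-\mathbf w_u)\le D_d(\mathbf w_u\|\mathbf x)-D_d(\mathbf w_u\|\mathbf x^+)-D_d(\mathbf x^+\|\mathbf x).
\]

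\emph{Absorbing the extra term.} Next I split
\[
\eta_t\mathbf g_t^\top(\mathbf x-\mathbf w_u)=\eta_t\mathbf g_t^\top(\mathbf x^+-\mathbf w_u)+\eta_t\mathbf g_t^\top(\mathbf x-\mathbf x^+).
\]
The second term is bounded using Hölder's inequality (Lemma~\ref{lem:dualofl1}) and Young's inequality:
\[
\eta_t\mathbf g_t^\top(\mathbf x-\mathbf x^+)\le \eta_t\|\mathbf g_t\|_\infty\|\mathbf x-\mathbf x^+\|_1\le \frac{\eta_t^2}{2}\|\mathbf g_t\|_\infty^2+\frac12\|\mathbf x-\mathbf x^+\|_1^2 .
\]
By \eqref{eq:bregman-lower-bound} (Pinsker), $\tfrac12\|\mathbf x^+-\mathbf x\|_1^2\le D_d(\mathbf x^+\|\mathbf x)$, so this quadratic term cancels against $-D_d(\mathbf x^+\|\mathbf x)$ and yields \eqref{eq:app-omd-one-step}.

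The only delicate point is justifying the optimality condition on the simplex, in particular making sure the minimizer is interior so that no normal-cone terms appear. The explicit exponentiated-gradient form from Proposition~\ref{prop:eg-update} settles this, so I expect no real obstacle. The rest is routine algebra.
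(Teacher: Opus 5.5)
Your proposal is correct and follows the same route as the paper's proof. Both use first-order optimality of the proximal step, then the Bregman three-point identity, and finally H\"older, Young, and the $\ell_1$-strong-convexity bound \eqref{eq:bregman-lower-bound} to absorb $\eta_t\mathbf g_t^\top(\widehat{\mathbf w}_{u,t}-\widehat{\mathbf w}_{u,t+1})$ against $D_d(\widehat{\mathbf w}_{u,t+1}\|\widehat{\mathbf w}_{u,t})$. You also use the exponentiated-gradient closed form to show the minimizer is interior, which rules out normal-cone terms and makes the optimality condition hold with equality; the paper leaves this point implicit, so your version is slightly more careful.
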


\begin{proof}
For brevity, write $\widehat{\mathbf{w}}_t:=\widehat{\mathbf{w}}_{u,t}^{\mathrm{OMD}}$,$ \widehat{\mathbf{w}}_{t+1}:=\widehat{\mathbf{w}}_{u,t+1}^{\mathrm{OMD}}$. Substituting the definition of Bregman Divergence Eq. \eqref{eq:bregman-general} into \eqref{eq:omd-update}, we obtain
\[
\widehat{\mathbf{w}}_{t+1}
    :=
    \arg\min_{\mathbf{w}\in\Delta^{K-1}}
    \Bigl\{
        \eta_t\,\mathbf{g}_t^\top \mathbf{w}
        +
        d(\mathbf{w})
        -
        d(\widehat{\mathbf{w}}_t)
        -
        \nabla d(\widehat{\mathbf{w}}_t)^\top(\mathbf{w}-\widehat{\mathbf{w}}_t)
    \Bigr\}.
\]
The terms $-d(\widehat{\mathbf{w}}_t)$ and $\nabla d(\widehat{\mathbf{w}}_t)^\top\widehat{\mathbf{w}}_t$ do not depend on $\mathbf{w}$, so they do not affect the minimizer. Hence the update is equivalently
\begin{align}
    \widehat{\mathbf{w}}_{t+1}
    :=
    \arg\min_{\mathbf{w}\in\Delta^{K-1}}
    \Bigl\{
        d(\mathbf{w})
        +
        \bigl(\eta_t\mathbf{g}_t-\nabla d(\widehat{\mathbf{w}}_t)\bigr)^\top \mathbf{w}
    \Bigr\}.
    \label{eq:omd-equivalent-objective}
\end{align}

Now, for notational convenience, we define $\Psi_t(\mathbf{w}) := d(\mathbf{w}) + \bigl(\eta_t\mathbf{g}_t-\nabla d(\widehat{\mathbf{w}}_t)\bigr)^\top \mathbf{w}$. Then \eqref{eq:omd-equivalent-objective} says exactly that $\widehat{\mathbf{w}}_{t+1} =
\arg\min_{\mathbf{w}\in\Delta^{K-1}} \Psi_t(\mathbf{w})$. Since $\Psi_t$ is differentiable and convex, and $\Delta^{K-1}$ is convex, the first-order optimality condition implies that for any $\mathbf{w}_u\in\Delta^{K-1}$,
\begin{align}
    \nabla \Psi_t(\widehat{\mathbf{w}}_{t+1})^\top
    (\mathbf{w}_u-\widehat{\mathbf{w}}_{t+1})
    \ge 0.
    \label{eq:omd-first-order}
\end{align}
We compute the gradient: $\nabla \Psi_t(\mathbf{w})=\nabla d(\mathbf{w})+\eta_t\mathbf{g}_t-\nabla d(\widehat{\mathbf{w}}_t)$. Evaluating at $\widehat{\mathbf{w}}_{t+1}$ and substituting into \eqref{eq:omd-first-order} gives
\begin{align}
    \Bigl(
        \eta_t\mathbf{g}_t
        +
        \nabla d(\widehat{\mathbf{w}}_{t+1})
        -
        \nabla d(\widehat{\mathbf{w}}_t)
    \Bigr)^\top
    (\mathbf{w}_u-\widehat{\mathbf{w}}_{t+1})
    &\ge 0,
    \label{eq:omd-first-order-expanded}\\
    \eta_t\mathbf{g}_t^\top(\widehat{\mathbf{w}}_{t+1}-\mathbf{w}_u)
    &\le
    \bigl(
        \nabla d(\widehat{\mathbf{w}}_{t+1})
        -
        \nabla d(\widehat{\mathbf{w}}_t)
    \bigr)^\top
    (\mathbf{w}_u-\widehat{\mathbf{w}}_{t+1}).
    \label{eq:omd-rearranged}
\end{align}
Now, adding $\eta_t\mathbf{g}_t^\top(\widehat{\mathbf{w}}_t-\widehat{\mathbf{w}}_{t+1})$ to both sides, we get
\begin{align}
    \eta_t\mathbf{g}_t^\top(\widehat{\mathbf{w}}_t-\mathbf{w}_u)
    \le
    \eta_t\mathbf{g}_t^\top(\widehat{\mathbf{w}}_t-\widehat{\mathbf{w}}_{t+1})
    +
    \bigl(
        \nabla d(\widehat{\mathbf{w}}_{t+1})
        -
        \nabla d(\widehat{\mathbf{w}}_t)
    \bigr)^\top
    (\mathbf{w}_u-\widehat{\mathbf{w}}_{t+1}).
    \label{eq:omd-add-and-subtract}
\end{align}

Next, we applying the three-point identity \citep{beck2003mirror} for Bregman divergences, we get:
\begin{align}
    \bigl(
        \nabla d(\widehat{\mathbf{w}}_{t+1})
        -
        \nabla d(\widehat{\mathbf{w}}_t)
    \bigr)^\top
    (\mathbf{w}_u-\widehat{\mathbf{w}}_{t+1})
    =
    D_d(\mathbf{w}_u\|\widehat{\mathbf{w}}_t)
    -
    D_d(\mathbf{w}_u\|\widehat{\mathbf{w}}_{t+1})
    -
    D_d(\widehat{\mathbf{w}}_{t+1}\|\widehat{\mathbf{w}}_t).
    \label{eq:three-point-substituted}
\end{align}
Substituting \eqref{eq:three-point-substituted} into \eqref{eq:omd-add-and-subtract}, we obtain
\begin{align}
    \eta_t\mathbf{g}_t^\top(\widehat{\mathbf{w}}_t-\mathbf{w}_u)
    \le\;&
    D_d(\mathbf{w}_u\|\widehat{\mathbf{w}}_t)
    -
    D_d(\mathbf{w}_u\|\widehat{\mathbf{w}}_{t+1})
    -D_d(\widehat{\mathbf{w}}_{t+1}\|\widehat{\mathbf{w}}_t)
    +
    \eta_t\mathbf{g}_t^\top(\widehat{\mathbf{w}}_t-\widehat{\mathbf{w}}_{t+1}).
    \label{eq:omd-before-final-bound}
\end{align}

We now bound the last two terms on the right-hand side. By Hölder's inequality \citep{Mitrinovic1993},
\begin{align}
    \mathbf{g}_t^\top(\widehat{\mathbf{w}}_t-\widehat{\mathbf{w}}_{t+1})
    \le
    \|\mathbf{g}_t\|_\infty
    \|\widehat{\mathbf{w}}_t-\widehat{\mathbf{w}}_{t+1}\|_1.
    \label{eq:holder-omd}
\end{align}
Also, because $d$ is $1$-strongly convex with respect to $\|\cdot\|_1$, by Eq. \eqref{eq:bregman-lower-bound}, we get
\begin{align}
    D_d(\widehat{\mathbf{w}}_{t+1}\|\widehat{\mathbf{w}}_t)
    \ge
    \frac{1}{2}
    \|\widehat{\mathbf{w}}_{t+1}-\widehat{\mathbf{w}}_t\|_1^2.
    \label{eq:bregman-lower}
\end{align}
Combining \eqref{eq:holder-omd} and \eqref{eq:bregman-lower}, we get
\begin{align}
    &\eta_t\mathbf{g}_t^\top(\widehat{\mathbf{w}}_t-\widehat{\mathbf{w}}_{t+1})
    -
    D_d(\widehat{\mathbf{w}}_{t+1}\|\widehat{\mathbf{w}}_t) \le
    \eta_t\|\mathbf{g}_t\|_\infty
    \|\widehat{\mathbf{w}}_t-\widehat{\mathbf{w}}_{t+1}\|_1
    -
    \frac{1}{2}
    \|\widehat{\mathbf{w}}_t-\widehat{\mathbf{w}}_{t+1}\|_1^2.
    \label{eq:combine-holder-strong}
\end{align}
Now apply Young's inequality \citep{young1912classes} in the form
$ab-\frac{1}{2}b^2\le \frac{1}{2}a^2$. With $a=\eta_t\|\mathbf{g}_t\|_\infty$, and $b=\|\widehat{\mathbf{w}}_t-\widehat{\mathbf{w}}_{t+1}\|_1$, Eq.~\eqref{eq:combine-holder-strong} becomes
\begin{align}
    \eta_t\mathbf{g}_t^\top(\widehat{\mathbf{w}}_t-\widehat{\mathbf{w}}_{t+1})
    -
    D_d(\widehat{\mathbf{w}}_{t+1}\|\widehat{\mathbf{w}}_t)
    \le
    \frac{\eta_t^2}{2}\|\mathbf{g}_t\|_\infty^2.
\end{align}
Substituting this bound into \eqref{eq:omd-before-final-bound} proves \eqref{eq:app-omd-one-step}.
\end{proof}

\begin{lemma}[Uniform KL bound under Assumption~\ref{ass:omd-delta}]
\label{lem:omd-kl-delta}
Under Assumption~\ref{ass:omd-delta}, for every round $t=1,\dots,T_u+1$, $D_d\!\left(\mathbf{w}_u\,\middle\|\,\widehat{\mathbf{w}}_{u,t}^{\mathrm{OMD}}\right)\le\log\frac{1}{\delta}$.
\end{lemma}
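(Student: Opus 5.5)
By Proposition~\ref{prop:kl-bregman}, the entropic Bregman divergence is the KL divergence, so the lemma reduces to bounding $\mathrm{KL}(\mathbf w_u\|\widehat{\mathbf w}_{u,t}^{\mathrm{OMD}})$ uniformly in $t$.

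Write $\widehat{\mathbf w}:=\widehat{\mathbf w}_{u,t}^{\mathrm{OMD}}$. Expand
\[
\mathrm{KL}(\mathbf w_u\|\widehat{\mathbf w})=\sum_{k=1}^K w_{u,k}\log w_{u,k}+\sum_{k=1}^K w_{u,k}\log\frac{1}{\widehat w_k}.
\]
The first term is the negative Shannon entropy of $\mathbf w_u$, hence $\le 0$ because $w_{u,k}\in[0,1]$. This step needs only $\mathbf w_u\in\Delta^{K-1}$, with the convention $0\log 0=0$.

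For the second term, Assumption~\ref{ass:omd-delta} gives $\widehat w_k\ge\delta$ for every coordinate and every round $t=1,\dots,T_u+1$. Hence $\log(1/\widehat w_k)\le\log(1/\delta)$. Since the weights $w_{u,k}$ are nonnegative and sum to one, the second term is at most $\log(1/\delta)$. Combining the two terms gives $\mathrm{KL}(\mathbf w_u\|\widehat{\mathbf w})\le\log(1/\delta)$ for each $t$.

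The only point needing care is the index range. The assumption is stated for ``all rounds $t$'', and the statement of Theorem~\ref{thm:omd-dynamic-regret} explicitly places the iterates up to $T_u+1$ in $\Delta_\delta^{K-1}$. Together these cover $t=T_u+1$. Beyond that, the argument is elementary and has no real obstacle.
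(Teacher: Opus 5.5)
Your proof is correct and matches the paper's argument. Like the paper, you split $\mathrm{KL}(\mathbf w_u\|\widehat{\mathbf w}_{u,t}^{\mathrm{OMD}})$ into the nonpositive term $\sum_k w_{u,k}\log w_{u,k}$ and the cross-entropy term, then bound the latter by $\log(1/\delta)$ using $\widehat w_{u,t,k}^{\mathrm{OMD}}\ge\delta$ and $\sum_k w_{u,k}=1$.
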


\begin{proof}
Recall that $D_d\!\left(\mathbf{w}_u\,\middle\|\,\widehat{\mathbf{w}}_{u,t}^{\mathrm{OMD}}\right)=\sum_{k=1}^{K}w_{u,k}\log\frac{w_{u,k}}{\widehat w_{u,t,k}^{\mathrm{OMD}}}$. Since $\mathbf{w}_u\in\Delta^{K-1}$, we have $0\le w_{u,k}\le 1$, and hence $\log w_{u,k}\le 0$ whenever $w_{u,k}>0$. Therefore,
\begin{align}
    \sum_{k=1}^{K}
    w_{u,k}\log\frac{w_{u,k}}{\widehat w_{u,t,k}^{\mathrm{OMD}}}
    &=
    \sum_{k=1}^{K}
    w_{u,k}\log w_{u,k}
    -
    \sum_{k=1}^{K}
    w_{u,k}\log \widehat w_{u,t,k}^{\mathrm{OMD}}
    \notag\\
    &\le
    -
    \sum_{k=1}^{K}
    w_{u,k}\log \widehat w_{u,t,k}^{\mathrm{OMD}}.
\end{align}
By Assumption~\ref{ass:omd-delta}, $\widehat w_{u,t,k}^{\mathrm{OMD}}\ge \delta$ for all $k$, so
\[
-\log \widehat w_{u,t,k}^{\mathrm{OMD}}
\le
-\log \delta
=
\log\frac{1}{\delta}.
\]
Hence,
\begin{align}
    -
    \sum_{k=1}^{K}
    w_{u,k}\log \widehat w_{u,t,k}^{\mathrm{OMD}}
    \le
    \sum_{k=1}^{K}
    w_{u,k}\log\frac{1}{\delta}
    =
    \log\frac{1}{\delta},
\end{align}
since $\sum_{k=1}^{K} w_{u,k}=1$.
\end{proof}

\begin{proof}[Proof of Theorem~\ref{thm:omd-regret-main}]
Since the surrogate loss is linear, $\ell_t(\mathbf{w})=\mathbf{g}_t^\top\mathbf{w}$, Eq.~\eqref{eq:app-omd-one-step} in Lemma \ref{lem:omd-one-step} is equivalent to
\begin{align}
    \ell_t(\widehat{\mathbf{w}}_{u,t}^{\mathrm{OMD}})
    -
    \ell_t(\mathbf{w}_u)
    \le
    \frac{
    D_d\!\left(\mathbf{w}_u\,\middle\|\,\widehat{\mathbf{w}}_{u,t}^{\mathrm{OMD}}\right)
    -
    D_d\!\left(\mathbf{w}_u\,\middle\|\,\widehat{\mathbf{w}}_{u,t+1}^{\mathrm{OMD}}\right)
    }{\eta_t}
    +
    \frac{\eta_t}{2}\|\mathbf{g}_t\|_\infty^2.
    \label{eq:app-omd-divided}
\end{align}

Summing Eq.~\eqref{eq:app-omd-divided} over $t=1,\dots,T_u$, we obtain
\begin{align}
    R_{T_u}^{\mathrm{OMD}}(\mathbf{w}_u)
    \le
    \sum_{t=1}^{T_u}
    \frac{
    D_d\!\left(\mathbf{w}_u\,\middle\|\,\widehat{\mathbf{w}}_{u,t}^{\mathrm{OMD}}\right)
    -
    D_d\!\left(\mathbf{w}_u\,\middle\|\,\widehat{\mathbf{w}}_{u,t+1}^{\mathrm{OMD}}\right)
    }{\eta_t}
    +
    \frac{1}{2}\sum_{t=1}^{T_u}\eta_t\|\mathbf{g}_t\|_\infty^2.
    \label{eq:app-omd-sum}
\end{align}

Now use the fact that the step-size sequence $\{\eta_t\}_{t=1}^{T_u}$ is nonincreasing. Therefore, $\{1/\eta_t\}_{t=1}^{T_u}$ is nondecreasing, and the first sum in Eq.~\eqref{eq:app-omd-sum} telescopes as
\begin{align}
&\sum_{t=1}^{T_u}\frac{D_d\!\left(\mathbf{w}_u\,\middle\|\,\widehat{\mathbf{w}}_{u,t}^{\mathrm{OMD}}\right)-D_d\!\left(\mathbf{w}_u\,\middle\|\,\widehat{\mathbf{w}}_{u,t+1}^{\mathrm{OMD}}\right)}{\eta_t}\notag\\
&= \frac{D_d\!\left(\mathbf{w}_u\,\middle\|\,\widehat{\mathbf{w}}_{u,1}^{\mathrm{OMD}}\right)}{\eta_1}+\sum_{t=2}^{T_u} D_d\!\left(\mathbf{w}_u\,\middle\|\,\widehat{\mathbf{w}}_{u,t}^{\mathrm{OMD}}\right)\left(\frac{1}{\eta_t}-\frac{1}{\eta_{t-1}}\right)-\frac{D_d\!\left(\mathbf{w}_u\,\middle\|\,\widehat{\mathbf{w}}_{u,T_u+1}^{\mathrm{OMD}}\right)}{\eta_{T_u}}.
\label{eq:app-telescoping}
\end{align}
Since every Bregman divergence is nonnegative, the last term is nonpositive and can be dropped. Therefore, using Lemma \ref{lem:omd-kl-delta}, we get 
\begin{align}
\sum_{t=1}^{T_u}\frac{ D_d\!\left(\mathbf{w}_u\,\middle\|\,\widehat{\mathbf{w}}_{u,t}^{\mathrm{OMD}}\right) -D_d\!\left(\mathbf{w}_u\,\middle\|\,\widehat{\mathbf{w}}_{u,t+1}^{\mathrm{OMD}}\right) }{\eta_t}
\le
\log\frac{1}{\delta}\left[\frac{1}{\eta_1}+\sum_{t=2}^{T_u}\left(\frac{1}{\eta_t}-\frac{1}{\eta_{t-1}}\right)\right]
=
\frac{\log(1/\delta)}{\eta_{T_u}}.
\label{eq:omd-weighted-telescope-bound}
\end{align}

Substituting this bound into Eq.~\eqref{eq:app-omd-sum} yields
\begin{align}
R_{T_u}^{\mathrm{OMD}}(\mathbf{w}_u) \le \frac{\log(1/\delta)}{\eta_{T_u}} + \frac{1}{2}\sum_{t=1}^{T_u}\eta_t\|\mathbf{g}_t\|_\infty^2.
\label{eq:app-omd-general}
\end{align}
By Lemma~\ref{lem:gradient-bound}, $\|\mathbf{g}_t\|_\infty\le c$, and hence
\begin{align}
    R_{T_u}^{\mathrm{OMD}}(\mathbf{w}_u)
    \le
    \frac{\log(1/\delta)}{\eta_{T_u}}
    +
    \frac{c^2}{2}\sum_{t=1}^{T_u}\eta_t.
    \label{eq:app-omd-general-c}
\end{align}

Now substitute the specific schedule $\eta_t=\eta_0/\sqrt{1+c_\eta t}$. Then
\begin{align}
    \eta_{T_u}
    =
    \frac{\eta_0}{\sqrt{1+c_\eta T_u}},
    \qquad
    \frac{1}{\eta_{T_u}}
    =
    \frac{\sqrt{1+c_\eta T_u}}{\eta_0}.
    \label{eq:integralbound1}
\end{align}
Also,
\begin{align}
    \sum_{t=1}^{T_u}\eta_t
    =
    \eta_0\sum_{t=1}^{T_u}\frac{1}{\sqrt{1+c_\eta t}}
    \le
    \eta_0\int_{0}^{T_u}\frac{dt}{\sqrt{1+c_\eta t}}
    \le
    \frac{2\eta_0}{c_\eta}\sqrt{1+c_\eta T_u},
    \label{eq:integralbound2}
\end{align}
where the last inequality uses the elementary integral bound $\int_{0}^{T_u}\frac{dt}{\sqrt{1+c_\eta t}}=\frac{2}{c_\eta}\bigl(\sqrt{1+c_\eta T_u}-1\bigr)\le\frac{2}{c_\eta}\sqrt{1+c_\eta T_u}$. Substituting these estimates into Eq.~\eqref{eq:app-omd-general-c} gives
\begin{align}
    R_{T_u}^{\mathrm{OMD}}(\mathbf{w}_u)
    \le
    \frac{\log(1/\delta)}{\eta_0}\sqrt{1+c_\eta T_u}
    +
    \frac{c^2\eta_0}{c_\eta}\sqrt{1+c_\eta T_u},
\end{align}
which proves the first part of the theorem.

Finally, optimize the coefficient with respect to $\eta_0>0$. Consider
\[
f(\eta_0)
=
\frac{\log(1/\delta)}{\eta_0}
+
\frac{c^2\eta_0}{c_\eta}.
\]
Differentiating and setting the derivative to zero gives
\begin{align}
    -\frac{\log(1/\delta)}{\eta_0^2}
    +
    \frac{c^2}{c_\eta}
    =
    0,
\end{align}
hence
\begin{align}
    \eta_0
    =
    \frac{\sqrt{c_\eta\log(1/\delta)}}{c}.
\end{align}
Substituting this choice into the bound yields
\begin{align}
    R_{T_u}^{\mathrm{OMD}}(\mathbf{w}_u)
    \le
    2c\sqrt{\frac{\log(1/\delta)}{c_\eta}}\sqrt{1+c_\eta T_u},
\end{align}
which proves the theorem.
\end{proof}

\begin{proof}[Proof of Corollary~\ref{cor:omd-sample-complexity-main}]
By Theorem~\ref{thm:omd-regret-main}, dividing both sides by $T_u$ yields
\begin{align}
    \frac{1}{T_u}R_{T_u}^{\mathrm{OMD}}(\mathbf{w}_u)
    \le
    2c\sqrt{\frac{\log(1/\delta)}{c_\eta}}\frac{\sqrt{1+c_\eta T_u}}{T_u}.
\end{align}
Therefore, to guarantee average regret at most $\varepsilon>0$, it suffices that
\begin{align*}
    2c\sqrt{\frac{\log(1/\delta)}{c_\eta}}\frac{\sqrt{1+c_\eta T_u}}{T_u}
    &\le
    \varepsilon\\
    4c^2\frac{\log(1/\delta)}{c_\eta}\frac{1+c_\eta T_u}{T_u^2}
    &\le
    \varepsilon^2.
\end{align*}
Multiplying by $T_u^2$ and rearranging, we obtain the quadratic inequality
\begin{align}
    \varepsilon^2 T_u^2
    -
    4c^2\log(1/\delta)\,T_u
    -
    \frac{4c^2\log(1/\delta)}{c_\eta}
    \ge 0.
\end{align}
Solving the corresponding quadratic equation and taking the positive root gives the sufficient condition
\begin{align}
    T_u
    \ge
    \frac{2}{\varepsilon^2}
    \left(
    c^2\log(1/\delta)
    +
    \sqrt{
        c^4(\log(1/\delta))^2
        +
        \frac{\varepsilon^2 c^2\log(1/\delta)}{c_\eta}
    }
    \right).
\end{align}
\end{proof}

\begin{proof}[Proof of Theorem~\ref{thm:omd-dynamic-regret}]
For notational brevity, we write $\widehat{\mathbf w}_{u,t} := \widehat{\mathbf w}_{u,t}^{\mathrm{OMD}}$. By Eq. \eqref{eq:app-omd-one-step} of Lemma~\ref{lem:omd-one-step}, for a time-varying comparator $\mathbf w_{u,t}\in\Delta^{K-1}$, the entropic OMD update satisfies
\begin{align}
    \eta_t\mathbf g_t^\top
    \bigl(\widehat{\mathbf w}_{u,t}-\mathbf w_{u,t}\bigr)
    \le
    D_d\!\left(\mathbf w_{u,t}\,\middle\|\,\widehat{\mathbf w}_{u,t}\right)
    -
    D_d\!\left(\mathbf w_{u,t}\,\middle\|\,\widehat{\mathbf w}_{u,t+1}\right)
    +
    \frac{\eta_t^2}{2}\|\mathbf g_t\|_\infty^2 .
    \label{eq:dyn-omd-one-step-general}
\end{align}

Since the surrogate loss is linear, we express it in the form of its gradient as follows:
\[
    \ell_t(\mathbf w)=\mathbf g_t^\top \mathbf w,
    \qquad \text{where }
    \mathbf g_t=-\widetilde f_t\mathbf z_t.
\]

Hence, dividing both sides by $\eta_t>0$, gives
\begin{align}
    \ell_t(\widehat{\mathbf w}_{u,t})
    -
    \ell_t(\mathbf w_{u,t})
    \le
    \frac{
    D_d\!\left(\mathbf w_{u,t}\,\middle\|\,\widehat{\mathbf w}_{u,t}\right)
    -
    D_d\!\left(\mathbf w_{u,t}\,\middle\|\,\widehat{\mathbf w}_{u,t+1}\right)
    }{\eta_t}
    +
    \frac{\eta_t}{2}\|\mathbf g_t\|_\infty^2 .
    \label{eq:dyn-omd-divided}
\end{align}

Summing Eq.~\eqref{eq:dyn-omd-divided} over $t=1,\dots,T_u$, we obtain
\begin{align}
    R_{T_u}^{\mathrm{OMD,dyn}}
    \le
    \sum_{t=1}^{T_u}
    \frac{
    D_d\!\left(\mathbf w_{u,t}\,\middle\|\,\widehat{\mathbf w}_{u,t}\right)
    -
    D_d\!\left(\mathbf w_{u,t}\,\middle\|\,\widehat{\mathbf w}_{u,t+1}\right)
    }{\eta_t}
    +
    \frac{1}{2}
    \sum_{t=1}^{T_u}
    \eta_t\|\mathbf g_t\|_\infty^2 .
    \label{eq:dyn-omd-sum}
\end{align}


We now bound the first sum. Define $A_t := D_d\!\left(\mathbf w_{u,t}\,\middle\|\,\widehat{\mathbf w}_{u,t}\right)$. For $t=1,\dots,T_u-1$, apply Lemma~\ref{lem:entropy-bregman-lipschitz} with
$\widehat{\mathbf w} = \widehat{\mathbf w}_{u,t+1}$, $\mathbf u=\mathbf w_{u,t}$, and $\mathbf v=\mathbf w_{u,t+1}$.
Since both $\mathbf w_{u,t}$ and $\mathbf w_{u,t+1}$ lie in
$\Delta_\delta^{K-1}$, the lemma gives
\begin{align*}
    \left|
    D_d\!\left(\mathbf w_{u,t}\,\middle\|\,\widehat{\mathbf w}_{u,t+1}\right)
    -
    D_d\!\left(\mathbf w_{u,t+1}\,\middle\|\,\widehat{\mathbf w}_{u,t+1}\right)
    \right|
    &\le
    L_\delta
    \|\mathbf w_{u,t+1}-\mathbf w_{u,t}\|_1 .
    \notag \\ 
    -D_d\!\left(\mathbf w_{u,t}\,\middle\|\,\widehat{\mathbf w}_{u,t+1}\right)
    &\le
    -D_d\!\left(\mathbf w_{u,t+1}\,\middle\|\,\widehat{\mathbf w}_{u,t+1}\right)
    +
    L_\delta
    \|\mathbf w_{u,t+1}-\mathbf w_{u,t}\|_1 .
\end{align*}
Using this inside the Bregman difference, for $t=1,\dots,T_u-1$, we get
\begin{align}
    &D_d\!\left(\mathbf w_{u,t}\,\middle\|\,\widehat{\mathbf w}_{u,t}\right)
    -
    D_d\!\left(\mathbf w_{u,t}\,\middle\|\,\widehat{\mathbf w}_{u,t+1}\right)
    \notag\\
    &\le
    D_d\!\left(\mathbf w_{u,t}\,\middle\|\,\widehat{\mathbf w}_{u,t}\right)
    -
    D_d\!\left(\mathbf w_{u,t+1}\,\middle\|\,\widehat{\mathbf w}_{u,t+1}\right)
    +
    L_\delta
    \|\mathbf w_{u,t+1}-\mathbf w_{u,t}\|_1
    \notag\\
    &=
    A_t-A_{t+1}
    +
    L_\delta
    \|\mathbf w_{u,t+1}-\mathbf w_{u,t}\|_1 .
    \label{eq:dyn-bregman-difference-bound}
\end{align}

For the final round $t=T_u$, since Bregman divergences are nonnegative,
\begin{align}
    D_d\!\left(\mathbf w_{u,T_u}\,\middle\|\,\widehat{\mathbf w}_{u,T_u}\right)
    -
    D_d\!\left(\mathbf w_{u,T_u}\,\middle\|\,\widehat{\mathbf w}_{u,T_u+1}\right)
    \le
    A_{T_u}.
    \label{eq:dyn-final-round-bound}
\end{align}
Combining \eqref{eq:dyn-bregman-difference-bound} and
\eqref{eq:dyn-final-round-bound}, we obtain
\begin{align}
    &\sum_{t=1}^{T_u}
    \frac{
    D_d\!\left(\mathbf w_{u,t}\,\middle\|\,\widehat{\mathbf w}_{u,t}\right)
    -
    D_d\!\left(\mathbf w_{u,t}\,\middle\|\,\widehat{\mathbf w}_{u,t+1}\right)
    }{\eta_t}
    \notag\\
    &\le
    \sum_{t=1}^{T_u-1}
    \frac{A_t-A_{t+1}}{\eta_t}
    +
    \frac{A_{T_u}}{\eta_{T_u}}
    +
    L_\delta
    \sum_{t=1}^{T_u-1}
    \frac{\|\mathbf w_{u,t+1}-\mathbf w_{u,t}\|_1}{\eta_t}.
    \label{eq:dyn-first-sum-bound}
\end{align}

We now bound the first two terms on the right-hand side. Since the step-size sequence
$\{\eta_t\}_{t=1}^{T_u}$ is nonincreasing, the sequence
$\{1/\eta_t\}_{t=1}^{T_u}$ is nondecreasing. Therefore,
\begin{align}
    \sum_{t=1}^{T_u-1}
    \frac{A_t-A_{t+1}}{\eta_t}
    +
    \frac{A_{T_u}}{\eta_{T_u}}
    =
    \frac{A_1}{\eta_1}
    +
    \sum_{t=2}^{T_u}
    A_t
    \left(
        \frac{1}{\eta_t}
        -
        \frac{1}{\eta_{t-1}}
    \right).
    \label{eq:dyn-weighted-telescope}
\end{align}

Next, using $\mathbf w_{u} = \mathbf w_{u,t}$ in Lemma \ref{lem:omd-kl-delta}, we show that each $A_t$ is uniformly bounded as $A_t \le \log\frac{1}{\delta} $. Therefore, using this in Eq. \eqref{eq:dyn-weighted-telescope}, we obtain
\begin{align}
    \frac{A_1}{\eta_1}
    +
    \sum_{t=2}^{T_u}
    A_t
    \left(
        \frac{1}{\eta_t}
        -
        \frac{1}{\eta_{t-1}}
    \right)
    &\le
    \log\frac{1}{\delta}
    \left[
        \frac{1}{\eta_1}
        +
        \sum_{t=2}^{T_u}
        \left(
            \frac{1}{\eta_t}
            -
            \frac{1}{\eta_{t-1}}
        \right)
    \right]
    \notag\\
    &=
    \frac{\log(1/\delta)}{\eta_{T_u}}.
    \label{eq:dyn-weighted-telescope-bound}
\end{align}

We now bound the last term in Eq. \eqref{eq:dyn-first-sum-bound}. Since $\eta_t$ is nonincreasing, for every $t\le T_u$, $\frac{1}{\eta_t}\le \frac{1}{\eta_{T_u}}$. Therefore,
\begin{align}
    L_\delta
    \sum_{t=1}^{T_u-1}
    \frac{\|\mathbf w_{u,t+1}-\mathbf w_{u,t}\|_1}{\eta_t}
    &\le
    \frac{L_\delta}{\eta_{T_u}}
    \sum_{t=1}^{T_u-1}
    \|\mathbf w_{u,t+1}-\mathbf w_{u,t}\|_1
    \notag\\
    &=
    \frac{L_\delta V_{T_u}}{\eta_{T_u}}.
    \label{eq:dyn-path-variation-bound}
\end{align}

Combining \eqref{eq:dyn-first-sum-bound},
\eqref{eq:dyn-weighted-telescope-bound}, and
\eqref{eq:dyn-path-variation-bound}, we get
\begin{align}
    \sum_{t=1}^{T_u}
    \frac{
    D_d\!\left(\mathbf w_{u,t}\,\middle\|\,\widehat{\mathbf w}_{u,t}\right)
    -
    D_d\!\left(\mathbf w_{u,t}\,\middle\|\,\widehat{\mathbf w}_{u,t+1}\right)
    }{\eta_t}
    \le
    \frac{\log(1/\delta)+L_\delta V_{T_u}}{\eta_{T_u}}.
    \label{eq:dyn-bregman-sum-final}
\end{align}

Substituting \eqref{eq:dyn-bregman-sum-final} into
\eqref{eq:dyn-omd-sum}, and using $\|\mathbf g_t\|_\infty\le c$ from Lemma \ref{lem:gradient-bound}, gives
\begin{align}
    R_{T_u}^{\mathrm{OMD,dyn}}
    \le
    \frac{\log(1/\delta)+L_\delta V_{T_u}}{\eta_{T_u}}
    +
    \frac{c^2}{2}
    \sum_{t=1}^{T_u}\eta_t.
    \label{eq:dyn-omd-before-schedule}
\end{align}

Now, substituting the step-size schedule $\eta_t=\eta_0/\sqrt{1+c_\eta t}$, and following Eqs. \eqref{eq:integralbound1}-\eqref{eq:integralbound2}, we get
\begin{align}
    \sum_{t=1}^{T_u}\eta_t
    \le
    \frac{2\eta_0}{c_\eta}
    \sqrt{1+c_\eta T_u}.
\end{align}
Substituting these estimates into \eqref{eq:dyn-omd-before-schedule}, we prove  Eq.~\eqref{eq:omd-dynamic-bound-compact}.

Finally, optimize the coefficient with respect to $\eta_0>0$. Define $ A_{T_u}:=\log(1/\delta)+L_\delta V_{T_u}$.
The coefficient is
\[
    f(\eta_0)
    =
    \frac{A_{T_u}}{\eta_0}
    +
    \frac{c^2\eta_0}{c_\eta}.
\]
Differentiating and setting the derivative equal to zero gives
\[
    -\frac{A_{T_u}}{\eta_0^2}
    +
    \frac{c^2}{c_\eta}
    =
    0.
\]
Therefore,
\[
    \eta_0
    =
    \frac{\sqrt{c_\eta A_{T_u}}}{c}
    =
    \frac{
    \sqrt{c_\eta\bigl(\log(1/\delta)+L_\delta V_{T_u}\bigr)}
    }{c}.
\]
Substituting this choice into Eq.~\eqref{eq:omd-dynamic-bound-compact} proves the theorem.
\end{proof}

\begin{proof}[Proof of Corollary~\ref{cor:omd-dynamic-sample-complexity}]
Define $A_{T_u}:=\log(1/\delta)+L_\delta V_{T_u}$. By Theorem~\ref{thm:omd-dynamic-regret}, with the optimized choice of $\eta_0$, and dividing both sides by $T_u$ gives
\[
    \frac{1}{T_u}R_{T_u}^{\mathrm{OMD,dyn}}
    \le
    2c
    \sqrt{\frac{A_{T_u}}{c_\eta}}
    \frac{\sqrt{1+c_\eta T_u}}{T_u}.
\]
Therefore, to guarantee
\[
    \frac{1}{T_u}R_{T_u}^{\mathrm{OMD,dyn}}
    \le
    \varepsilon,
\]
it suffices that
\begin{align*}
    2c
    \sqrt{\frac{A_{T_u}}{c_\eta}}
    \frac{\sqrt{1+c_\eta T_u}}{T_u}
    &\le
    \varepsilon, \\
    4c^2
    \frac{A_{T_u}}{c_\eta}
    \frac{1+c_\eta T_u}{T_u^2}
    &\le
    \varepsilon^2.
\end{align*}

Rearranging, we obtain the quadratic inequality
\[
    \varepsilon^2 T_u^2
    -
    4c^2A_{T_u}T_u
    -
    \frac{4c^2A_{T_u}}{c_\eta}
    \ge 0.
\]
Solving the corresponding quadratic equation and taking the positive root gives the sufficient condition
\[
    T_u
    \ge
    \frac{
        4c^2A_{T_u}
        +
        \sqrt{
            16c^4A_{T_u}^2
            +
            \frac{16\varepsilon^2c^2A_{T_u}}{c_\eta}
        }
    }{2\varepsilon^2}.
\]
Simplifying the above proves the corollary.
\end{proof}



\end{document}